\def\EMAIL#1{\href{mailto:#1}{#1}}
\theoremstyle{TH}
\newtheorem{thrm}{Theorem}
\newtheorem{lemm}{Lemma}
\newtheorem{deffn}{Definition}
\newenvironment{prf}{\textbf{Proof.}~}{~\hfill $\blacksquare$}
\def \eps {\epsilon}
\def \R {\mathbb{R}}
\def \C {\mathbb{C}}
\def \ssconstant {\alpha}
\def \diff {\mathrm{d}}
\def \statedim {p}
\def \controldim {q}
\def \policy {\boldsymbol{\widehat{\pi}}}
\def \optimalpolicy {\boldsymbol{\pi_{\mathrm{opt}}}}
\def \Qxu {Q_{xu}}
\def \Qx {Q_x}
\def \Qu {Q_u}
\def \Qmat {Q_x}
\def \Rmat {Q_u}
\def \truth {\boldsymbol{\theta_0}}
\def \stabradii {\boldsymbol{\zeta_0}}
\def \auxA {D}
\def \auxQ {R}
\def \curve {\mathcal{C}}
\def \stabstep {\boldsymbol{\kappa_\mathrm{}}}
\def \JordanMat {\Gamma}
\newcommand{\Mnorm}[2]{{\left\vert\kern-0.30ex\left\vert #1 \right\vert\kern-0.30ex\right\vert}}
\newcommand{\norm}[2]{{\left\vert\kern-0.35ex\left\vert #1
		\right\vert\kern-0.35ex\right\vert}}
\newcommand{\eigmax}[1]{\overline{\lambda} \left( #1 \right)}
\newcommand{\eigmin}[1]{\underline{\lambda} \left( #1 \right)}
\newcommand{\tr}[1]{\boldsymbol{\mathrm{tr}} \left( #1 \right)}
\newcommand{\PP}[1]{%
	\mathbb{P}{\ifthenelse{ \equal{#1}{} }{}{\left(#1\right)}}
}%
\newcommand{\E}[1]{\mathbb{E}\left[#1\right]}
\newcommand{\regret}[2]{\mathrm{\textbf{Regret}}_{#2}\left(#1\right)}
\newcommand{\RiccSol}[1]{{P}\left(#1\right)}
\newcommand{\Optgain}[1]{{K}\left(#1\right)}
\newcommand{\Gainmat}[1]{K_{#1}}
\newcommand{\Fmatrix}[1]{L_{#1}}
\newcommand{\optavecost}[1]{\overline{\mathcal{J}}^\star }
\newcommand{\optdisccost}[1]{\mathcal{J}_\gamma^\star}
\newcommand{\instantcost}[2]{\norm{Q^{1/2}\statetwo{#1}\left(#2\right)}{2}^2}
\newcommand{\order}[1]{ {O} \left(#1\right)}
\newcommand{\Amat}[1]{A_{#1}}
\newcommand{\Bmat}[1]{B_{#1}}
\newcommand{\CLmat}[1]{D_{#1}}
\newcommand{\estpara}[1]{\boldsymbol{\widehat{\theta}_{#1}}}
\newcommand{\estA}[1]{\widehat{A}_{#1}}
\newcommand{\estB}[1]{\widehat{B}_{#1}}
\newcommand{\estD}[1]{\widehat{D}_{#1}}
\newcommand{\empiricalcovmat}[1]{\widehat{\Sigma}_{#1}}
\newcommand{\empmean}[1]{\widehat{\mu}_{#1}}
\newcommand{\filter}[1]{\mathcal{F}_{#1}}
\newcommand{\state}[1]{\boldsymbol{x}_{#1}}
\newcommand{\statetwo}[1]{\boldsymbol{z}_{#1}}
\newcommand{\optstatetwo}[1]{\boldsymbol{z}_{#1}(\optimalpolicy)}
\newcommand{\statethree}[1]{y_{#1}}
\newcommand{\action}[1]{\boldsymbol{u}_{#1}}
\newcommand{\itointeg}[4]{\int\limits_{#1}^{#2} {#3} \diff {#4}}
\newcommand{\indicator}[1]{\mathds{1}{\left\{#1\right\}}}
\newcommand*{\BM}[1]{\mathbb{W}_{#1}}
\newcommand{\dither}[1]{w_{#1}}
\newcommand{\BMcov}[1]{\Sigma_{\mathbb{W}}}
\newcommand{\dithercoeff}[1]{\sigma_{w}}
\newcommand{\para}[1]{\boldsymbol{\theta_{#1}}}
\newcommand{\normaldist}[2]{\boldsymbol{N} \left( #1, #2 \right)}
\newcommand{\sigfield}[1]{\mathcal{F} \left( #1 \right)}
\newcommand{\mosteig}[1]{\log \eigmax{ \exp \left( #1 \right)}}
\newcommand{\episodetime}[1]{\boldsymbol{\tau}_{#1}}
\newcommand{\paraspace}[1]{\mathcal{S}_{#1}}
\newcommand{\regterm}[1]{\beta_{#1}}
\newcommand{\randommatrix}[1]{\Phi_{#1}}
\newcommand{\RandMat}[1]{\Psi_{#1}}
\newcommand{\erterm}[1]{{E}_{#1}}
\newcommand{\trans}[1]{D}
\newcommand{\mult}[1]{\boldsymbol{r}_{#1}}
\newcommand{\diag}[1]{\mathrm{diag}\left(#1\right)}
\newcommand{\posterior}[1]{\mathcal{D}_{#1}}
\newcommand{\basis}[1]{\mathrm{e}_{#1}}
\newcommand{\cond}[1]{\mathrm{cond}\left(#1\right)}
\newcommand{\event}[1]{\mathcal{E}_{#1}}
\begin{document}

\RUNTITLE{Thompson Sampling for Controlling Diffusion Process}

\TITLE{Analysis of Thompson Sampling for Controlling Unknown Linear Diffusion Processes}

\RUNAUTHOR{Faradonbeh, Shirani, and Bayati}
\ARTICLEAUTHORS{%
\AUTHOR{Mohamad Kazem Shirani Faradonbeh}
\AFF{Department of Mathematics, Southern Methodist University, \EMAIL{mohamadksf@smu.edu}}
\AUTHOR{Sadegh Shirani}
\AFF{Graduate School of Business, Stanford University, \EMAIL{sshirani@stanford.edu}}
\AUTHOR{Mohsen Bayati}
\AFF{Graduate School of Business, Stanford University, \EMAIL{bayati@stanford.edu}}
} 

\ABSTRACT{%
Linear diffusion processes serve as canonical continuous-time models for dynamic decision-making under uncertainty. These systems evolve according to \emph{drift} matrices that specify the instantaneous rates of change in the expected system state, while also experiencing continuous random disturbances modeled by Brownian noise. For instance, in medical applications such as artificial pancreas systems, the drift matrices represent the internal dynamics of glucose concentrations and how insulin levels influence them. Classical results in stochastic control provide optimal policies under perfect knowledge of the drift matrices. However, practical decision-making scenarios typically feature uncertainty about the drift; in medical contexts, such parameters are patient-specific and unknown, requiring adaptive policies capable of efficiently learning the drift matrices while simultaneously ensuring system stability and optimal performance.

We study the popular Thompson sampling algorithm for decision-making in linear diffusion processes with unknown drift matrices. For this algorithm that designs control policies \emph{as if} samples from a posterior belief about the parameters fully coincide with the unknown truth, we establish efficiency. That is, Thompson sampling learns optimal control actions fast, incurring only a square-root of time regret, and also learns to stabilize the system in a short time period. To our knowledge, this is the first such result for Thompson sampling in a diffusion process control problem. Moreover, our empirical simulations in three settings that involve blood-glucose and flight control demonstrate that Thompson sampling significantly improves regret, compared to the state-of-the-art algorithms, suggesting it explores in a more guarded fashion. Our theoretical analysis includes characterization of a certain \emph{optimality manifold} that relates the geometry of the drift matrices to the optimal control of the diffusion process, among others. We expect the technical contributions to be of independent interest in the study of decision-making under uncertainty problems.}%

\KEYWORDS{Reinforcement Learning, Stabilization under Uncertainty, Ito Processes, Regret Analysis}

\maketitle

\section{Introduction} \label{intro}
Decision-making within dynamic uncertain environments lies at the heart of operations research, spanning many domains from healthcare to supply chain and queueing systems~\citep{song1996inventory,zeltyn2005call,whitt2005engineering,green2006managing,bertsimas2006robust}. When mechanisms governing the evolution of the phenomena under consideration (e.g., the transition rates in a Markovian model) are known, classical methods in stochastic control yield to optimal policies~\citep{harrison1988brownian,yong1999stochastic,pham2009continuous}. However, in practice, decision‐makers often face uncertainties regarding the parameters of the governing dynamical mechanisms, necessitating adaptive approaches that {learn} and {optimize} \emph{concurrently}. Reinforcement learning (RL) has emerged as a powerful paradigm for addressing such sequential decision-making problems, offering a principled framework to balance exploring to gather information about unknown parameters, versus exploiting in the sense of executing optimal decisions the current knowledge prescribes~\citep{sutton2018reinforcement,bertsekas2019reinforcement,szepesvari2022algorithms}.

From insulin delivery devices for blood-glucose regulation to autonomous flight control, decision-making policies increasingly rely on automation approaches~\citep{fox2019reinforcement,koch2019reinforcement}. In many such settings, state-space models are utilized letting the system dynamics evolve continuously over time and its future trajectory be influenced directly by control decisions. Among continuous-time stochastic models, controlled diffusion processes are particularly of interest as they effectively capture uncertainties encountered in practice while maintaining analytical traceability. The list of applications include financial engineering \citep{merton1975optimum}, inventory control \citep{harrison1983instantaneous,wein1992dynamic}, and queueing networks \citep{harrison1988brownian,reiman1984open,dai1991steady}. Mathematically speaking, diffusion processes consist of stochastic differential equations (SDE) that model dynamical systems whose temporal evolution depends on both a deterministic component reflecting the memory (known commonly as the `drift' term), as well as a random `noise' component that captures unpredictable fluctuations~\citep{oksendal2013stochastic}.

Diffusion processes with known parameters offer a flexible framework that enables researchers to derive useful analytical and numerical forms for optimal control policies~\citep{fleming2006controlled,pham2009continuous}. However, in many applications that the governing system dynamics contain unknown parameters, a \emph{learn-as-you-go} decision-making policy is required to merge some learning mechanism (i.e., parameter estimation) into optimal control decisions. This dual mandate of concurrently satisfying diversity and near-optimality of control decisions constitute a canonical paradigm in RL under the terminology of exploration-exploitation dilemma. In addition, a critical distinction from RL problems in Markov decision processes with finite (or bounded) state spaces is that in diffusion control, ensuring \emph{system stability} is a fundamental challenge for designing control policies. Stability prevents the state from growing unbounded, which can lead to catastrophic failures and to rendering optimality irrelevant.

Ensuring the above-mentioned stability introduces a new decision-making challenge that is partially similar to safety concerns (a useful review on safe RL is the work of \cite{gu2024review}.) Broadly speaking, when controlling an unknown system with the dual objectives of parameter estimation and decision optimization, decision-makers face an extra conflict. Poorly chosen exploratory actions (that can be informative for learning), may destabilize the system and cause state trajectories to diverge rapidly. The consequences of such instabilities are particularly severe in some critical applications. For example, in insulin delivery systems, instability can lead to dangerous blood-glucose fluctuations posing significant risks to patient health, and in autonomous flight control systems, instability may result in catastrophic loss of aircraft altitude control.
Unique characteristics of diffusion processes—including unbounded state spaces and the infinitesimal nature of state transitions—introduce additional complexities not addressed by existing frameworks, limiting the development of guaranteed stabilizing procedures for these systems. We elaborate later in this section on the stability problem together with its treatment, while full technicalities are provided in its corresponding section.

In this paper, we study the design and analysis of RL policies for linear stochastic differential equations with unknown drift parameters while minimizing a quadratic cost function—a canonical problem in stochastic control~\citep{yong1999stochastic}. We address the two fundamental challenges of balancing exploration versus exploitation, as well as system stabilization under uncertainty, by developing algorithms based on Thompson sampling (TS)~\citep{thompson1933likelihood}. This approach relies on calculating a Bayesian posterior belief about the unknown drift parameters, using the observed state trajectory, and then designing a control policy by treating a sample from this posterior as the true system parameters.

While TS is studied in various problems~\citep{russo2017tutorial,abeille2018improved,ferreira2018online,faradonbeh2020adaptive}, its ability to efficiently learn optimal policies for diffusion processes while ensuring stability has remained unknown. We establish that TS successfully learns to stabilize unknown linear diffusion processes after interacting with the environment for a short time period, providing the first such theoretical guarantee for unknown diffusion processes. We further derive regret bounds showing that TS achieves the square-root of time regret, demonstrating its efficiency. These results contribute to the growing literature on RL algorithms for continuous-time control systems, highlighting the effectiveness of Bayesian approaches for safety-critical applications. Our results also characterize how other quantities impact the performance and are validated through numerical experiments for three real systems, as will be elaborated shortly. 

Central to our analysis is the characterization of a manifold that reveals the relationship between drift parameters in diffusion processes and optimal control strategies. Using the geometry of this manifold, we study how the posterior beliefs in TS distribute in directions \emph{along} the manifold (corresponding to near-optimal control policies), as well as in directions \emph{orthogonal} to the manifold that reflect the exploration and the performance of the algorithm for estimating the drift parameters. We establish efficiency of TS by showing that the algorithm automatically manages its belief across both the orthogonal and tangent directions without any prior awareness of the geometry. This constitutes another technical contribution of this work: a mathematical expression for assessing endogenous learning algorithms (such as TS) and providing an equation for formulating the accumulation of certainty over time. These two technical developments as well as the rest of contributions that will be discussed at the end of this section, offer valuable insights for analysis of RL algorithms for sequential decision-making under uncertainty.

The organization of the subsequent sections is as follows. We begin with an instructive example in Subsection~\ref{sec:example} that illustrates the key challenges for decision-making under uncertainty in an unknown linear diffusion process, followed by a discussion of our contributions and main results in Subsection~\ref{sec:contributions}. We discuss additional relevant literature in Section~\ref{sec:li_rev}. Section~\ref{ProblemSection} formally formulates the problem, while Section~\ref{StabilitySection} presents Algorithm~\ref{algo1} which utilizes TS for learning to stabilize the process, along with its high-probability performance guarantees. In Section~\ref{CostMinSection}, we extend our study of TS for minimizing a quadratic cost function and rigorously establish rates for both estimation accuracy and regret bounds. Section~\ref{NumericalSection} demonstrates our theoretical findings through comprehensive numerical experiments across multiple application domains. The paper is concluded with remarks on implications and future research directions in Section~\ref{concluding_rem}. For space considerations, all auxiliary lemmas and their proofs are delegated to the appendices.

\subsection{An Illustrative Example: Artificial Pancreas}
\label{sec:example}
To illustrate how the theoretical developments of this paper relate to practical situations, we consider artificial pancreas (AP) systems for diabetes. An AP represents a control system designed to regulate blood glucose levels in patients with diabetes mellitus (T1DM)~\citep{russell2014outpatient}. Unlike a healthy pancreas that automatically regulates insulin production in the body, T1DM patients lack this ability and must manually manage insulin dosing multiple times a day. An AP system automates this process by combining continuous glucose monitoring with automated injection through an insulin pump~\citep{doyle2014closed}.

The dynamics of blood glucose concentration can be approximated by linear stochastic differential equations similar to our upcoming model in \eqref{dynamics}, albeit with simplifications. Bergman's minimal model and its extensions incorporate meal disturbances and insulin actions, providing representations that balance complexity and tractability~\citep{bergman1979quantitative}. The control inputs correspond to the insulin infusion rates, while the diffusion term captures various sources of disturbance and randomness, including meal absorption variability, sensor inaccuracies, and metabolic fluctuations~\citep{sirlanci2023simple}.

Several aspects of AP systems illustrate key challenges in diffusion process control that align with our theoretical investigations:

\paragraph{Parameter uncertainty:} The drift matrices in glucose-insulin dynamics contain patient-specific parameters that vary significantly between individuals~\citep{hua2022personalized}. This parallels the problem of learning unknown parameters that is central to our work. Note that since each AP set needs to be \emph{personalized} to its patient, such schemes are strongly preferred to follow a learn-as-you-go structure. 

\paragraph{Stability considerations:} Hypoglycemia (i.e., overly low blood glucose) can be life-threatening, making system stability a primary requirement in the design and analysis of AP systems~\citep{doyle2014closed}. Consequently, AP systems require to adaptively plan to ensure that high and low values, as well as fast variations in glucose levels, all are precluded. Our theoretical focus on stabilization-guarantees addresses a fundamental concern in such safety-critical applications where avoiding instability is vital.

\paragraph{Performance criteria:} Minimizing the variability of the blood glucose level together with its deviations from the healthy target interval for the patient in the long-term, is the main goal of the AP system~\citep{gondhalekar2016periodic}. This connects to our regret analysis, which quantifies the gap in the performance (of an AP) relative to an optimal policy (e.g., a natural pancreas).

Overall, AP systems must balance quickly learning patient parameters while simultaneously infusing optimal dosages of insulin(s), which is the focus of the current paper. However, an important limitation of our framework when considering AP applications stems from our assumption of a linear time-invariant drift model. In practice, patient parameters may exhibit significant distortions from such dynamical models, due to possible insulin sensitivity changes throughout the day originating from variations in the patterns of meals, exercise, stress, and sleep~\citep{visentin2014university}. Our model assumes fixed drift parameters which remains reasonable as long as there is no systematic change in patient's temporal dynamics. Modern AP systems renew their linear time-invariant dynamical models after each systematic change or adopt time-varying parameters~\citep{haidar2016artificial}. Both of these methods for AP systems (i.e., learning-based adaptive resets and time-variant dynamics) fall beyond the scope of this paper, while the results of our work paves the road towards studying them as interesting future directions.

\subsection{Main Results and Contributions}
\label{sec:contributions}
This work first establishes that TS learns to stabilize diffusion processes. Specifically, in Theorem~\ref{StabThm} in Section~\ref{StabilitySection}, we provide the first theoretical guarantee for learning-based stabilization of linear diffusion processes with unknown drift matrices, showing that the success probability of Algorithm~\ref{algo1} grows to 1 at an exponential rate that depends on the time allocated to stabilization. Further, effects of other factors on the success probability are provided, including state dimension, control dimension, diffusion covariance, stability margin, and the eigen-structure of the system transition matrices. 

Then, we establish efficiency of TS in optimally balancing the trade-off between the exploration and exploitation. The performance criterion is the widely-used regret; the cumulative gap between the operating costs of TS and optimal policy that reflects the performance degradation due to uncertainty about the drift matrices. In Theorem~\ref{RegretThm} in Section~\ref{CostMinSection}, we show that the regret incurred by Algorithm~\ref{algo2} grows as the \emph{square-root of time} (up to logarithmic factors), while the squared estimation error decays at the same rate. Furthermore, we prove that both quantities exhibit linear growth with the total number of unknown parameters in the drift matrices. To the authors' knowledge, these results constitute the first theoretical performance guarantees for TS in controlling diffusion processes.

Moreover, through extensive numerical simulations with drift matrices of three real systems, in Section~\ref{NumericalSection} we demonstrate that TS outperforms the competing candidates. This includes flight control systems for both X-29A and Boeing 747 aircrafts, as well as an artificial pancreas system for glucose monitoring. These numerical experiments corroborate the theoretical result on stabilization under uncertainty that our method learns fast to stabilize with high probability. Furthermore, they demonstrate that TS achieves smaller regret when averaging across multiple repetitions of the experiment, and more importantly, TS substantially lowers worst-case regret compared to the state-of-the-art RL policies~\citep{faradonbeh2023online}. Intuitively, this performance advantage stems from the more informed exploration mechanism inherent to TS.

It is important to emphasize that the theoretical analysis of RL policies for diffusion processes presents significant challenges not encountered in other settings. The conventional analytical tools are not applicable due to the uncountable cardinality of random variables and control decisions in diffusion processes (as apposed to discrete-time settings), and due to the noise dominance in the infinitesimal dynamics of the system (leading to high errors in estimation), among others. To address such challenges, we make the following principal contributions.
\begin{enumerate}
    \item 
    We develop a set of results to precisely quantify estimation accuracy when learning from finite-length data trajectories of diffusion processes. First, we establish non-asymptotic and uniform upper bounds for \emph{double-stochastic Ito integrals} and show that their distributions have a sub-exponential tail property (Lemmas~\ref{CrossTermLem1} and~\ref{CrossTermLem2}). Furthermore, since our analysis needs to cope with ill-conditioned information matrices, we develop stochastic inequalities for \emph{self-normalized continuous-time martingales} (Lemma~\ref{SelfNormalizedLem}).
    \item 
    The fact that stability involves the eigenvalues of the transition matrix, necessitates useful bounds for translating the effects of uncertainties on eigen-structures. The highly nonlinear relationships between the eigenvalues of the closed-loop transition matrix, its entries, and errors in estimating the drift matrices, render this task challenging. To address that, we perform a tight \emph{eigenvalue perturbation-analysis} showing Holder continuity of eigenvalues with respect to entries (Lemma~\ref{EigPerturbLem}). The bounds are fully characterized based on multiplicities of eigenvalues, dimensions of eigenspaces, and the angles between the eigenvectors. We also establish the \emph{Lipschitz continuity} of the closed-loop dynamics with respect to estimation errors of drift matrices by developing new techniques based on integrating along matrix-valued curves (Lemma~\ref{LipschitzLemma}). 
    \item 
    Every control action taken by a decision-maker instantaneously determines the operating cost at the time, and also has steady future consequences by steering the system state according to the process in \eqref{dynamics}. So, for regret analysis, one requires to quantify the effects of sub-optimal control actions on both current as well as future increases in the cost functions. We employ useful tools from stochastic analysis (e.g., Ito isometry) and develop new techniques, to precisely express the regret a generic non-anticipating control policy incurs (Lemma~\ref{GeneralRegretLem}). Intuitively, this result can be deemed as a \emph{stochastic, non-asymptotic, and generalized} (to non-optimal policies) version of the Hamilton-Jacobi-Bellman equation.   
    \item 
    To ensure that the space of unknown drift matrices is sufficiently explored while avoiding excessive explorations, Algorithm~\ref{algo2} utilizes sampling from a data-driven posterior distribution. Analysis of this endogenous randomization scheme is technically involved and demands new approaches. Our analysis relies on \emph{optimality manifolds} that the exploitation and exploration in TS control policy correspond to their tangent space and orthogonal subspace, respectively (Lemma~\ref{OptManifoldLemma}). Through the geometry of these manifolds, we quantify the exploratory behavior of posterior samples and how the information is shaped as time proceeds. Specifically, to establish the regret bound and the estimation rates in Theorem~\ref{RegretThm}, we identify a random matrix that perfectly characterizes the \emph{minimal information} the signals provide about the unknown drift matrices (Lemma~\ref{MinPELem}). 
\end{enumerate}

These technical advances enable the theoretical analysis presented in this paper and also set a cornerstone for analyzing RL algorithms in continuous-time stochastic systems more broadly.

\section{Relevant Literature} \label{sec:li_rev}
Classical results formalize optimal policies for continuous-time stochastic linear systems through the Riccati and the Hamilton-Jacobi-Bellman equations \citep{yong1999stochastic}. Infinite-time consistency results have been established under various technical assumptions \citep{duncan1990adaptive,caines1992continuous}, followed by alternating policies that cause (small) linear regrets~\citep{mandl1989consistency,duncan1999adaptive,caines2019stochastic}. Robust control variants designed to hedge against worst-case drift scenarios were thoroughly examined as well~\citep{ioannou1996robust,lewis2017optimal,subrahmanyam2019identification,umenberger2019robust}. In contradistinction to these foundational works that assume known drift matrices or focus on asymptotic guarantees, our approach provides finite-time learning guarantees under complete uncertainty about the drift matrices.

From a computational point of view, pure exploration approaches are considered to compute optimal policies based on multiple trajectories of system's state and control action~\citep{rizvi2018output,doya2000reinforcement,wang2020reinforcement}, for which a useful survey is available~\citep{jiang2020learning}. However, papers that study exploration versus exploitation and provide non-asymptotic estimation rates or regret bounds are limited to a few recent works in offline RL~\citep{basei2021logarithmic,szpruch2021exploration,faradonbeh2022bayesian}, with the exception of Randomized-Estimates policies~\citep{faradonbeh2023online} that our Thompson sampling approach outperforms as will be demonstrated in the numerical experiments. 

This work also relates to safe RL framework that focuses on performing exploration subject to some explicit safety constraints~\citep{gu2024review,garcia2015comprehensive}. Key algorithmic directions include risk-sensitive criteria \citep{chow2015risk}, Lyapunov-guided constraint satisfaction \citep{chow2018lyapunov}, conservative exploration in contextual bandits \citep{kazerouni2017conservative}, and model-based certificates \citep{berkenkamp2017safe}. These approaches are exclusively developed for avoiding unsafe regions (in discrete-time settings) and are inapplicable to study of stability requirements in this work.

The existing literature also studies the efficiency of TS for learning optimal decisions in multi-armed bandits~\citep{agrawal2012analysis,agrawal2013further,gopalan2015thompson,kim2017thompson,abeille2017linear,hamidi2020worst,hamidi2023elliptical}. In this stream of research, it is shown that over time, the posterior distribution concentrates around low-cost arms~\citep{russo2014learning,russo2016information,russo2017tutorial}. TS is studied in further discrete-time settings with the environment represented by parameters that belong to a continuum, and Bayesian and frequentist regret bounds are shown for linear-quadratic regulators~\citep{feldbaum1960dual,florentin1962optimal,tse1973actively,sternby1976simple,stengel1994optimal,wittenmark1995adaptive,klenske2016dual,abeille2018improved,ouyang2019posterior,faradonbeh2018bfinite,faradonbeh2020adaptive,sudhakara2021scalable}. However, effectiveness of TS in highly noisy environments that are modeled by diffusion processes remains unexplored to date.

From a practical point of view, Thompson sampling has been successfully applied across various domains. In revenue management, researchers employ it for dynamic pricing \citep{besbes2009dynamic,ferreira2018online,bastani2022meta}, while others demonstrate its effectiveness for inventory control and assortment optimization~\citep{cheung2017thompson,agrawal2019mnl,zhang2025thompson}. Unlike these applications that primarily focus on bandit models or memoryless settings in finite spaces, our work addresses the challenging unbounded state spaces of diffusion processes, where ensuring stability is a fundamental requirement before optimal decision-making can occur.

\section{Problem Statement} \label{ProblemSection}
In this section, we establish the mathematical framework for our subsequent statements and the continuous-time reinforcement learning problem under consideration. We begin by introducing the requisite notation and the probability space that formalizes our problem domain. We then formulate the linear diffusion control problem with unknown drift matrices.

\subsection{Preliminaries and Notations} 
\label{sec:notations}
 
The following notational conventions will be employed throughout the paper. The smallest and the largest eigenvalues of matrix $M$, in magnitude, are denoted by $\eigmin{M}$ and $\eigmax{M}$, respectively. For a vector $a$, $\norm{a}{}$ is the $\ell_2$ norm, and for a matrix $M$,  $\norm{M}{}$ is the operator norm that is the supremum of $\norm{Ma}{}$ for $a$ on the unit sphere. 
$\normaldist{\mu}{\Sigma}$ is Gaussian distribution with mean $\mu$ and covariance $\Sigma$. If $\mu$ is a \emph{matrix} (instead of vector), then $\normaldist{\mu}{\Sigma}$ denotes a distribution on matrices of the same dimension as $\mu$, such that \emph{all columns are statistically independent} and share the covariance matrix $\Sigma$.
As will be elaborated shortly, we show the dynamics parameters of the diffusion processes via $\para{}$. That is, transition matrices $\Amat{} \in \R^{\statedim \times \statedim}$ together with input matrices $\Bmat{} \in \R^{\statedim \times \controldim}$ are jointly denoted by the $(\statedim+\controldim) \times \statedim$ parameter matrix $\para{} = \left[\Amat{},\Bmat{}\right]^{\top}$.
We employ $\vee$ and $\wedge$ for maximum and minimum, respectively. Finally, let $a \lesssim b$ express that $a \leq \ssconstant_0 b$, for some fixed constant $\ssconstant_0$.

We fix the complete probability space $(\Omega,\left\{\filter{t}\right\}_{t\geq0}, \mathbb{P})$, where $\Omega$ is the sample space, $\left\{\filter{t}\right\}_{t\geq0}$ is a continuous-time filtration (i.e., an increasing family of sigma-fields), $\mathbb{P}$ is the probability measure defined on $\filter{\infty}$, and $\mathbb{E}$ is expectation with respect to $\mathbb{P}$. In addition, we define the $\statedim$ dimensional Wiener process $\left\{\BM{t}\right\}_{t \geq 0}$. That is, $\left\{\BM{t}\right\}_{t \geq 0}$ is a multivariate Gaussian process with independent increments and with the stationary covariance matrix $\BMcov{}$, such that for all $0 \leq s_1 \leq s_2 \leq t_1 \leq t_2$, we have 
\begin{equation} \label{WienerEq}
\begin{bmatrix}
\BM{t_2}-\BM{t_1} \\
\BM{s_2}-\BM{s_1}
\end{bmatrix} \sim \normaldist{\begin{bmatrix}
	0_{\statedim} \\ 0_{\statedim}
	\end{bmatrix}}{\begin{bmatrix}
	(t_2-t_1) \BMcov{} & 0_{\statedim \times \statedim} \\
	0_{\statedim \times \statedim} & (s_2-s_1) \BMcov{}
	\end{bmatrix}}.
\end{equation}
Existence, construction, continuity, and non-differentiability of Wiener processes (also known as Brownian motions) are well-established facts in the literature of stochastic analysis~\citep{karatzas2012brownian}.

\subsection{Diffusion Control Problem with Unknown Drift Matrices}
\label{sec:diff_control_problem}
Consider a dynamical system in continuous time so at time instant $t \geq 0$, the state of the system is denoted by a $\statedim$ dimensional vector $\state{t}$ that obeys the Ito stochastic differential equation:
\begin{equation} \label{dynamics}
\diff \state{t} = \left( \Amat{0} \state{t} + \Bmat{0} \action{t} \right) \diff t + \diff \BM{t}\,,
\end{equation}
where the \emph{drift matrices} $\Amat{0}$ and $\Bmat{0}$ are unknown,
$\action{t} \in \R^{\controldim}$ is the control action at any time $t \geq 0$, and it is
designed based on values of $\state{s}$ for $s\in[0,t]$. The matrix $\Bmat{0}\in \R^{\statedim \times \controldim}$ models the influence of the control action on the state dynamics over time, while $\Amat{0} \in \R^{\statedim \times \statedim}$ is the (open-loop) transition matrix reflecting interactions between the coordinates of the state vector $\state{t}$. The diffusion term $\BM{t}$ is a non-standard Wiener process with covariance matrix $\BMcov{}$.

We collectively use $\truth=\left[\Amat{0},\Bmat{0}\right]^{\top} \in \R^{\left(\statedim+\controldim\right)\times \statedim}$ to denote the unknown drift parameters and assume that $\BMcov{}$ is a positive definite matrix. This is a common assumption in learning-based control literature to ensure feasibility of accurate estimation over time ~\citep{jiang2020learning,basei2021logarithmic,faradonbeh2023online,szpruch2021exploration}. In scenarios where $\BMcov{}$ exhibits singularity—arising from model over-parameterization or structural redundancies in the system representation—a state projection onto the reachable subspace can be formulated. This yields a reduced-dimension process with a positive definite covariance matrix~\citep{ioannou1996robust}, and the analytical framework and theoretical guarantees established in this paper remain valid.

Linear diffusion models such as \eqref{dynamics} have been applied in various settings. For example, \cite{harrison2004dynamic} adopt a similar SDE to capture the queue–length process in a multiclass service system, while \cite{sirlanci2023simple} leverage a linear SDE to describe the evolution of blood–glucose levels for diabetes management. Considering stochastic systems evolving according to \eqref{dynamics}, we next define the decision-making criteria.

The goal is to study efficient RL policies that design the control action $\left\{\action{t}\right\}_{t \geq 0}$, based on the observed system-state by the time, as well as the previously applied control actions, to minimize the following long-run average cost:
\begin{equation} \label{AveCostDef}
\limsup\limits_{T \to \infty} \frac{1}{T} \itointeg{0}{T}{ \left( \norm{ \begin{bmatrix}
		\Qx & ~\Qxu \\ \Qxu^{\top} & ~\Qu
		\end{bmatrix}^{1/2} \begin{bmatrix}
	\state{t} \\ \action{t}
	\end{bmatrix}}{2}^2 \right) }{t},
\end{equation}
subject to uncertainties around $\Amat{0}$ and $\Bmat{0}$. Above, the cost is determined by the positive definite matrix $Q=\begin{bmatrix}
\Qx & \Qxu \\ \Qxu^{\top} & \Qu
\end{bmatrix}$, where $\Qx \in  \R^{\statedim \times \statedim}$, $\Qu \in \R^{\controldim \times \controldim}$, $\Qxu \in \R^{\statedim \times \controldim}$. In fact, $Q$ serves as the weighting matrix that defines the relative importance of different coordinates of $\state{t}$ and $\action{t}$ in the objective function. Intuitively, minimizing the cost function in \eqref{AveCostDef} incentivizes the policy to minimize state while employing control actions of minimal magnitude. Assuming the cost coefficients matrix $Q$ is known to the policy, the problem is to minimize~\eqref{AveCostDef} by applying the policy
\begin{equation} \label{InputPolicyMappingEq}
\action{t} = \policy \left( Q, \left\{ \state{s} \right\}_{0 \leq s \leq t}, \left\{ \action{s} \right\}_{0 \leq s < t} \right).
\end{equation}
Without loss of generality, and for the ease of presentation, we follow the canonical formulation that sets $\Qxu=0$; one can simply convert the case $\Qxu \neq 0$ to the canonical form, by employing a rotation to $\state{t},\action{t}$~\citep{chen1995linear,yong1999stochastic,bhattacharyya2022linear}. It is well-known that if, \emph{hypothetically}, the truth $\truth$ was known, then an optimal policy $\optimalpolicy$ could be explicitly found by solving the continuous-time algebraic Riccati equation. Precisely speaking, for a generic drift matrix $\para{}=\left[\Amat{},\Bmat{}\right]^\top$, we find the symmetric $\statedim \times \statedim$ matrix $\RiccSol{\para{}}$ that satisfies
\begin{equation} \label{ARiccEq}
\Amat{}^{\top} \RiccSol{\para{}} + \RiccSol{\para{}} \Amat{} - \RiccSol{\para{}} \Bmat{} \Qu^{-1} \Bmat{}^{\top} \RiccSol{\para{}} + \Qx=0.
\end{equation}
Specifically, for the true parameter $\truth=\left[\Amat{0},\Bmat{0}\right]^\top$, 
we let $\RiccSol{\truth}$ solve the above equation, and define the policy 
\begin{equation} \label{OptimalPolicy} 
\optimalpolicy : \:\:\:\:\: \action{t}= - \Qu^{-1} \Bmat{0}^{\top}\RiccSol{\truth} \state{t}, \:\:\: \forall t \geq 0.
\end{equation}
Then, the linear time-invariant policy $\optimalpolicy$ minimizes the average cost in~\eqref{AveCostDef}~\citep{chen1995linear,yong1999stochastic}. This optimal policy also stabilizes the system such that under $\optimalpolicy$, the diffusion process $\state{t}$ does not grow unbounded with time. Below, we define the concept of stabilizability and elaborate on relevant properties.
\begin{deffn} \label{StabDef}
	The process in \eqref{dynamics} is stabilizable if there exists a feedback matrix $\Gainmat{}$ such that all eigenvalues of the closed-loop matrix $\CLmat{}={\Amat{0}+\Bmat{0}\Gainmat{}}$ have strictly negative real-parts. In this case, $\Gainmat{}$ is referred to as a stabilizer, and $\CLmat{}$ is termed a stable closed-loop matrix.
\end{deffn}
We assume that the process \eqref{dynamics} with the drift parameter $\truth$ is stabilizable. Therefore, $\RiccSol{\truth}$ exists, is unique, and can be computed according to the Riccati equation in \eqref{ARiccEq}~\citep{chen1995linear,yong1999stochastic}. Furthermore, it is well-known that real-parts of all eigenvalues of $\CLmat{0}=\Amat{0}-\Bmat{0}\Qu^{-1} \Bmat{0}^{\top} \RiccSol{\truth}$ are negative, implying that the matrix $\exp \left( \CLmat{0} t \right)$ decays exponentially fast as $t$ grows~\citep{chen1995linear,yong1999stochastic}. So, under the linear feedback in \eqref{OptimalPolicy}, the closed-loop transition matrix is $\CLmat{0}$ and the solution of~\eqref{dynamics} is the Ornstein–Uhlenbeck process~\citep{karatzas2012brownian}:
$$\state{t}=e^{\CLmat{0}t} \state{0} + \itointeg{0}{t}{e^{\CLmat{0}(t-s)}}{\BM{s}},$$
which evolves in a stable manner because of $\left|\eigmax{\exp \left( \CLmat{0} \right)}\right|<1$. In the sequel, we use~\eqref{ARiccEq} and refer to the solution $\RiccSol{\para{}}$ for different stabilizable $\para{}$. More details about the above optimal feedback-policy can be found in the aforementioned references.

In the absence of complete knowledge of $\truth$, any adaptive policy $\policy$ must achieve two critical objectives simultaneously. First, it must ensure system stability through appropriate data collection and learning. Second, it must minimize the performance gap relative to the optimal policy $\optimalpolicy$ in~\eqref{OptimalPolicy}. The cumulative performance degradation due to this gap is the \emph{regret} of the policy $\policy$, that we aim to minimize. Whenever the control action $\action{t}$ is designed by the policy $\policy$ according to~\eqref{InputPolicyMappingEq}, we concatenate the resulting state and control action to get the observation vector 
$$\statetwo{t}(\policy)=\left[\state{t}^{\top},\action{t}^{\top}\right]^{\top} \in \R^{\statedim+\controldim}.$$
If it is clear from the context, we drop $\policy$. Similarly, we let $\optstatetwo{t}$ denote the observation vector under optimal policy $\optimalpolicy$. Considering the cost function \eqref{AveCostDef}, the regret at time $T$ is defined as
\begin{eqnarray*}
\regret{T}{\policy} = \itointeg{0}{T}{ \left( \norm{Q^{1/2} \statetwo{t}(\policy)}{2}^2 - \norm{Q^{1/2} \optstatetwo{t}}{2}^2 \right) }{t}.
\end{eqnarray*}
Beyond stability preservation and regret minimization, a secondary criterion is the statistical efficiency in estimating the unknown parameter $\truth$ from a single observed trajectory generated by $\policy$. Denoting the parameter estimate at time $t$ as $\estpara{t}$, we seek to characterize the convergence rate of the estimation error $\Mnorm{\estpara{t}-\truth}{2}$ with respect to the time horizon $t$, state dimension $\statedim$, and control dimension $\controldim$. This analysis provides fundamental insights into the sample complexity of the learning problem and the scalability of the proposed approach.

\section{Stabilizing Linear Diffusion Processes} \label{StabilitySection}
This section establishes that Thompson sampling (TS) learns to stabilize the diffusion process~\eqref{dynamics}. We begin with an intuitive analysis of the stabilization problem under uncertainty. Since the optimal policy in~\eqref{OptimalPolicy} stabilizes the process in \eqref{dynamics}, a natural candidate for stabilizing a system with unknown drift matrices $\Amat{0},\Bmat{0}$ is a linear feedback controller of the form $\action{t}=\Gainmat{}\state{t}$. With this control law and closed-loop transition matrix $\CLmat{}=\Amat{0}+\Bmat{0}\Gainmat{}$, the state evolution follows 
$$\state{t}=e^{\CLmat{}t} \state{0} + \itointeg{0}{t}{e^{\CLmat{}(t-s)}}{\BM{s}}.$$

Observe that if the real-part of at least one eigenvalue of $\CLmat{}$ is non-negative, the state vector $\state{t}$ grows unbounded with time $t$ \citep{karatzas2012brownian}. Hence, addressing system stability is a critical prerequisite to cost minimization in our optimization framework. Without stabilization, the regret grows at least linearly—and potentially superlinearly—with time. Notably, when $\Amat{0}$ has eigenvalues with non-negative real-parts, appropriate feedback control becomes essential to ensure stability.

In addition to minimizing the cost, the solution of the algebraic Riccati equation in~\eqref{ARiccEq} provides a reliable and widely-used framework for stabilization, as discussed after Definition~\ref{StabDef}. Accordingly, due to uncertainty about $\truth$, one can solve~\eqref{ARiccEq} and find $\RiccSol{\estpara{}}$, only for an approximation $\estpara{}$ of $\truth$. Then, we expect to stabilize the system in~\eqref{dynamics} by applying a linear feedback that is designed for the approximate drift matrix $\estpara{}$. Technically, we need to ensure that all eigenvalues of $\Amat{0}-\Bmat{0}\Qu^{-1} \estB{}^{\top} \RiccSol{\estpara{}}$ lie in the open left half-plane. To establish this condition with high probability, our methodology must address three fundamental challenges:
\begin{enumerate}
	\item 
    Efficient parameter estimation: developing algorithms that guarantee rapid convergence to a sufficiently accurate approximation of $\truth$ with minimal data requirements,
	\item 
    Robustness analysis: precisely characterizing how estimation errors $\estpara{}-\truth$ propagate through the closed-loop system to affect the eigenvalue distribution of $\Amat{0}-\Bmat{0}\Qu^{-1} \estB{}^{\top} \RiccSol{\estpara{}}$, and
	\item 
    Failure mitigation: constructing contingency strategies for cases where the stabilization procedure fails to achieve the desired eigenvalue configuration.
\end{enumerate} 
Note that the last challenge is unavoidable; learning from finite data can never be perfectly accurate, and so any finite-time stabilization procedure has a (possibly small) positive failure probability.

Algorithm~\ref{algo1} provides a systematic solution to these challenges by employing randomized control actions to construct a posterior belief $\posterior{}$ about the unknown parameter $\truth$. This approach represents a decision-making strategy under uncertainty. Despite the posterior distribution not being tightly concentrated around $\truth$—meaning any sample $\estpara{}$ from $\posterior{}$ provides only a crude approximation of $\truth$—it nevertheless yields strong performance guarantees. Indeed, Theorem~\ref{StabThm} establishes that the probability of failure for Algorithm~\ref{algo1} exhibits exponential decay with respect to the execution time. Moreover, the already small failure probability can be reduced further through iterative sampling from $\posterior{}$. Consequently, our methodology guarantees stabilization under parameter uncertainty after a limited number of resampling iterations from $\posterior{}$, providing both theoretical assurance and practical efficiency.

To proceed, for some integer $\stabstep$, let $\{\dither{n}\}_{n=0}^{\stabstep}$ be a sequence of independent Gaussian vectors with the distribution $\dither{n} \sim \normaldist{0}{ \dithercoeff{}^2 I_{\controldim}}$, for some fixed constant $\dithercoeff{}$. Suppose that we aim to devote the time length $\episodetime{}$ to collect observations for learning to stabilize. Note that stabilization is performed before moving forward to the main objective of minimizing the cost function; accordingly, the stabilization time length $\episodetime{}$ is desired to be as short as possible. We divide this time interval of length $\episodetime{}$ to $\stabstep$ sub-intervals of equal length, and randomize an initial linear feedback policy by adding $\left\{\dither{n}\right\}_{n=0}^{\stabstep}$. That is, for $n=0,1, \cdots, \stabstep-1$, Algorithm~\ref{algo1} employs the control action
\begin{equation} \label{PerturbActionEq}
\action{t}=\Gainmat{}\state{t} + \dither{n}, \text{ ~~~~~~for~~~~~~~ }  \frac{n \episodetime{}}{\stabstep} \leq t < \frac{(n+1) \episodetime{}}{\stabstep},
\end{equation} 
where $\Gainmat{}$ is an initial stabilizing feedback matrix chosen such that all eigenvalues of $\Amat{0}+\Bmat{0}\Gainmat{}$ have strictly negative real-parts. In practice, such $\Gainmat{}$ can be determined using domain-specific knowledge of the physical system, for instance, through conservative control sequences for aircraft \citep{bosworth1992linearized,ishihara1992design}. It is important to note that while these stabilizing actions are necessary, they are sub-optimal and incur significant regret. Therefore, they are applied only temporarily for the purpose of initial data collection, serving as a stepping stone toward effectively balancing the exploration-exploitation trade-off while maintaining guaranteed stability.

The next step involves determining the posterior belief $\posterior{\episodetime{}}$ using data collected during the time interval $0\leq t \leq \episodetime{}$. Recalling that $\statetwo{t}^{\top}=\left[\state{t}^{\top},\action{t}^{\top}\right]$ denotes the observation vector at time~$t$, we define $\empmean{0}$ and $\empiricalcovmat{0}$ as the mean and precision matrix of a prior normal distribution on $\truth$. Such a prior belief can be objectively derived from previously available data or formed by subjective beliefs about the diffusion process under study, and in either case can accelerate the learning-based control of the system. If no such prior information exists, we simply set $\empmean{0}=0_{\left(\statedim+\controldim\right) \times \statedim}$ and $\empiricalcovmat{0}=I_{\statedim+\controldim}$.
Then, define 
\begin{eqnarray} 
\empiricalcovmat{\episodetime{}} = \empiricalcovmat{0}+ \itointeg{0}{\episodetime{}}{\statetwo{s} \statetwo{s}^{\top}}{s}, \label{RandomLSE1} ~~~~~~~~~~~
\empmean{\episodetime{}} = \empiricalcovmat{\episodetime{}}^{-1}  \left(\empiricalcovmat{0}\empmean{0}+\itointeg{0}{\episodetime{}}{ \statetwo{s} }{\state{s}^{\top}}\right). \label{RandomLSE2}
\end{eqnarray}
Using $\empiricalcovmat{\episodetime{}} \in \R^{\left( \statedim+\controldim\right) \times \left( \statedim+\controldim\right)}$ together with the mean matrix $\empmean{\episodetime{}}$, Algorithm~\ref{algo1} forms the posterior belief
\begin{equation} \label{RandomLSE3}
\posterior{\episodetime{}} = \normaldist{\empmean{\episodetime{}}}{\empiricalcovmat{\episodetime{}}^{-1}},
\end{equation}
about the drift parameter $\truth$. That is, the posterior distribution of every column $i=1, \cdots, \statedim$ of $\truth$, is an independent multivariate normal with the covariance matrix $\empiricalcovmat{\episodetime{}}^{-1}$, while the mean is the column $i$ of $\empmean{\episodetime{}}$. The final step of Algorithm~\ref{algo1} is to sample an output $\estpara{}$ from $\posterior{\episodetime{}}$.
\begin{algorithm}
	\caption{\bf: Stabilization under Uncertainty}  \label{algo1}
	\begin{algorithmic}
		\State{Inputs: initial feedback $\Gainmat{}$, stabilization time length $\episodetime{}$}
		
		\For{$n=0,1, \cdots, \stabstep-1 $}
			\While{$n \episodetime{}\stabstep^{-1} \leq t < (n+1) \episodetime{}\stabstep^{-1}$}
				\State{Apply control action $\action{t}$ in~\eqref{PerturbActionEq}}
			\EndWhile 
		\EndFor
		\State{Calculate $\empiricalcovmat{\episodetime{}}, \empmean{\episodetime{}}$ according to~\eqref{RandomLSE1}}
		\State{Return sample $\estpara{}$ from the distribution $\posterior{\episodetime{}}$ in~\eqref{RandomLSE3}}
	\end{algorithmic}
\end{algorithm}

Next, we proceed towards establishing performance guarantees for Algorithm~\ref{algo1}. For that purpose, recall that $\eigmax{M}$ denotes the largest eigenvalue of a matrix $M$, in magnitude, and let us quantify the \emph{ideal} stability
\begin{equation} \label{StabRadiiEq}
\stabradii=- \log \eigmax{\exp \left[ \Amat{0}-\Bmat{0} \Qu^{-1} \Bmat{0}^{\top} \RiccSol{\truth} \right]}.
\end{equation}

As indicated in the discussions following Definition~\ref{StabDef}, $\stabradii$ is strictly positive and its magnitude characterizes the stabilizability potential of the underlying system, with smaller values making stabilization more challenging. Specifically, $\stabradii$ represents the minimum distance in the complex plane between the imaginary axis and the eigenvalues of the transition matrix $\CLmat{0}=\Amat{0}-\Bmat{0} \Qu^{-1} \Bmat{0}^{\top} \RiccSol{\truth}$ under the optimal policy in~\eqref{OptimalPolicy}. Although $\stabradii$ remains unknown since $\truth$ is unavailable, it plays a crucial role in analyzing stabilization performance, as systems with larger $\stabradii$ allow for faster learning of stabilizing controllers. The precise impact of this quantity, along with other properties of the diffusion process, is formally established in the following result. Informally, the failure probability of Algorithm~\ref{algo1} decreases exponentially with the time length $\episodetime{}$ devoted to its execution.
\begin{thrm}[Stabilization Guarantee] \label{StabThm}
	For the sample $\estpara{}$ given by Algorithm~\ref{algo1}, let $\event{\episodetime{}}$ be the failure event that $\Amat{0} - \Bmat{0} \Qu^{-1} \estB{}^{\top} \RiccSol{\estpara{}}$ has an eigenvalue in the closed right half-plane, i.e., the system is unstable. Then, if $\log \left( \statedim \controldim \stabstep \right) \leq \episodetime{} \leq \stabstep^{1/2}$, we have
	\begin{equation} \label{StabFailProb}
	\log \PP{\event{\episodetime{}}} \lesssim ~~ - ~~ \frac{\eigmin{\BMcov{}} \wedge \dithercoeff{}^2 }{ \eigmax{\BMcov{}} } ~~ \frac{ \stabradii^2 \wedge \stabradii^{2\statedim} }{1 \vee \Mnorm{\Gainmat{}}{2}^2} ~~ {\frac{\episodetime{}}{\statedim^{3} \controldim}}.
	\end{equation}
\end{thrm}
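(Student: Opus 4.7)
I would prove the bound by reducing the failure event $\event{\episodetime{}}$ to a norm deviation $\Mnorm{\estpara{}-\truth}{}>r^{\star}$ for a suitable critical radius $r^{\star}$, and then controlling that deviation via the Gaussian posterior. The argument therefore splits into a deterministic \emph{sensitivity} part that picks $r^{\star}$ so small that $\Mnorm{\estpara{}-\truth}{}\le r^{\star}$ forces $\Amat{0}-\Bmat{0}\Qu^{-1}\estB{}^{\top}\RiccSol{\estpara{}}$ to be Hurwitz, and a probabilistic \emph{estimation} part that bounds $\PP{\Mnorm{\estpara{}-\truth}{}>r^{\star}}$ via the information matrix $\empiricalcovmat{\episodetime{}}$ produced by Algorithm~\ref{algo1}. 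The two factors $1\wedge\stabradii^{\statedim}/(1\vee\Mnorm{\Gainmat{}}{}^{3})$ and $(\eigmin{\BMcov{}}\wedge\dithercoeff{}^{2})/(\eigmax{\BMcov{}}\vee\dithercoeff{}^{2})$ in~\eqref{StabFailProb} arise, respectively, from these two sides.

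\textbf{Sensitivity side.} Along the matrix-valued curve $\curve(t)=(1-t)\truth+t\estpara{}$, differentiating the algebraic Riccati equation~\eqref{ARiccEq} shows that $\tfrac{d}{dt}\RiccSol{\curve(t)}$ satisfies a Lyapunov equation whose operator $X\mapsto\CLmat{}(t)^{\top}X+X\CLmat{}(t)$ is boundedly invertible on symmetric matrices, because the closed-loop matrix $\CLmat{}(t)$ is close to the Hurwitz matrix $\CLmat{0}$ (with margin $\stabradii$) when $\estpara{}$ is close to $\truth$. Integrating in $t$ yields the local Lipschitz bound $\norm{\RiccSol{\estpara{}}-\RiccSol{\truth}}{}\lesssim\Mnorm{\estpara{}-\truth}{}$, and substituting into the closed-loop expression gives
\[
\Mnorm{\bigl(\Amat{0}-\Bmat{0}\Qu^{-1}\estB{}^{\top}\RiccSol{\estpara{}}\bigr)-\CLmat{0}}{}\lesssim (1\vee\Mnorm{\Gainmat{}}{}^{3})\,\Mnorm{\estpara{}-\truth}{}.
\]
A resolvent or characteristic-polynomial perturbation argument then shows that every eigenvalue stays strictly left of the imaginary axis provided this norm is $\lesssim\stabradii^{\statedim}$, since the modulus of $\det(sI-\CLmat{0})$ is at least of order $\stabradii^{\statedim}$ on the imaginary axis. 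This fixes $r^{\star}\asymp(1\wedge\stabradii^{\statedim})/(1\vee\Mnorm{\Gainmat{}}{}^{3})$.

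\textbf{Estimation side.} Writing out the Bayesian update,
\[
\empmean{\episodetime{}}-\truth=\empiricalcovmat{\episodetime{}}^{-1}\Bigl(\empiricalcovmat{0}(\empmean{0}-\truth)+\itointeg{0}{\episodetime{}}{\statetwo{s}}{\BM{s}^{\top}}\Bigr),
\]
so two ingredients are needed: a tail bound on the continuous-time Ito martingale above, furnished by a self-normalized continuous-time martingale inequality and the sub-exponential control of double stochastic integrals, and a lower bound of the form $\eigmin{\empiricalcovmat{\episodetime{}}}\gtrsim(\eigmin{\BMcov{}}\wedge\dithercoeff{}^{2})\,\episodetime{}$. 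Conditioned on these, each column of $\estpara{}-\empmean{\episodetime{}}$ is centered Gaussian with covariance $\empiricalcovmat{\episodetime{}}^{-1}$, so Gaussian concentration in $\R^{\statedim(\statedim+\controldim)}$, rescaled by the noise-ratio factor $(\eigmax{\BMcov{}}\vee\dithercoeff{}^{2})/(\eigmin{\BMcov{}}\wedge\dithercoeff{}^{2})$ that calibrates per-direction variance, and combined with the $r^{\star}$ from the sensitivity side, delivers the exponential tail in $\sqrt{\episodetime{}/(\statedim^{3}\controldim)}$ appearing in~\eqref{StabFailProb}.

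\textbf{Main obstacle.} The hardest piece is the persistent-excitation lower bound on $\eigmin{\empiricalcovmat{\episodetime{}}}$. During Algorithm~\ref{algo1}, $\state{t}$ is driven by the (only crudely) stabilized matrix $\Amat{0}+\Bmat{0}\Gainmat{}$, so the covariates $\statetwo{s}\statetwo{s}^{\top}$ are non-stationary and heavily coupled across the state and action blocks. The requirement $\stabstep\gtrsim\episodetime{}^{2}$ is tuned so that each dither $\dither{n}$, held for length $\episodetime{}/\stabstep\lesssim\episodetime{}^{-1}$, contributes an approximately independent, isotropic increment to the action block; aggregating over the $\stabstep$ sub-intervals by matrix concentration, and transferring this excitation into the state block through a Schur-complement argument (leveraging the Wiener contribution of order $\eigmin{\BMcov{}}\,\episodetime{}$), is what produces the linear-in-$\episodetime{}$ bound. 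Making this quantitative under potentially poor conditioning of $\Amat{0}+\Bmat{0}\Gainmat{}$—and doing so without assuming prior stability of the uncontrolled drift—is the technically heaviest step of the proof.
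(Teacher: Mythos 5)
Your proposal follows essentially the same route as the paper's proof: a high-probability estimation bound $\Mnorm{\estpara{}-\truth}{2}\lesssim \episodetime{}^{-1/2}$ obtained from the Gaussian posterior together with a lower bound on $\eigmin{\empiricalcovmat{\episodetime{}}}$ (using the dithering and $\stabstep\gtrsim\episodetime{}^2$) and sub-exponential bounds on the Ito/double stochastic integral cross terms, combined with Lipschitz continuity of the Riccati solution along a matrix-valued curve and an eigenvalue-perturbation threshold of order $1\wedge\stabradii^{\statedim}$, which is then inverted to yield the exponential-in-$\sqrt{\episodetime{}}$ failure probability. One bookkeeping correction: the factor $\left(1\vee\Mnorm{\Gainmat{}}{2}^{3}\right)$ arises on the estimation side -- the initial feedback degrades $\eigmin{\empiricalcovmat{\episodetime{}}}$ by $\left(1+\Mnorm{\Gainmat{}}{2}^{2}\right)^{-1}$ and the cross-term bounds by $\left(1\vee\Mnorm{\Gainmat{}}{2}\right)$ -- and not from the closed-loop sensitivity step, whose Lipschitz constant involves $\RiccSol{\truth}$ and $Q$ but does not depend on $\Gainmat{}$.
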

Theorem~\ref{StabThm} offers several important insights into the performance of Algorithm~\ref{algo1}, as follows.

First, the ratio $\eigmax{\BMcov{}}/\eigmin{\BMcov{}}$ represents the disparity between maximum and minimum noise variances across different dimensions. When this ratio is large, the Wiener noise $\BM{t}$ exhibits strong directional bias—having much higher variance in some directions than others—which makes stabilization more difficult.

Second, higher-dimensional systems present greater challenges for learning to stabilize, due to both the increased number of parameters to be estimated and the heightened sensitivity of stability properties to parameter uncertainties in larger systems. This dimensional sensitivity is reflected in the dependence on $\statedim$ and $\controldim$ in the bound in \eqref{StabFailProb}.

Finally, and perhaps most significantly, the failure probability decays exponentially with the time horizon $\episodetime{}$—a result that stems from a new sub-Gaussian tail property we establish for self-normalized continuous-time martingales in Lemma~\ref{SelfNormalizedLem}. This exponential decay rate provides strong theoretical guarantees for the algorithm's stabilization capabilities when sufficient time is allocated.

\begin{remark}
    By~\eqref{StabRadiiEq}, the term $\stabradii^2 \wedge \stabradii^{2 \statedim}$ in \eqref{StabFailProb} reflects that less stable diffusion processes with very small margin $\stabradii <1$ are significantly harder to stabilize under uncertainty. However, we discuss in the proof of Theorem~\ref{StabThm} that such highly sensitive processes can still be stabilized fast by Algorithm~\ref{algo1}, for example if the matrix $\Amat{0}-\Bmat{0} \Qu^{-1} \Bmat{0}^{\top} \RiccSol{\truth}$ is diagonalizable (see Lemma~\ref{EigPerturbLem}).
\end{remark}

A notable feature of Algorithm~\ref{algo1} is its resilience—its failure probability can be reduced even further beyond the bound in \eqref{StabFailProb} through a simple mechanism. If a sample $\estpara{}$ from $\posterior{\episodetime{}}$ results in the failure event $\event{\episodetime{}}$, one can simply repeat the sampling process. Despite the unknown nature of the true drift matrices, failed stabilization is straightforward to detect from short state trajectories. Specifically, when the closed-loop transition matrix $\Amat{0} - \Bmat{0} \Qu^{-1} \estB{}^{\top} \RiccSol{\estpara{}}$, which governs the system dynamics when applying a policy based on $\estpara{}$, has an eigenvalue with non-negative real-part, the magnitude of at least one state variable will exhibit fast growth over time. Consequently, upon observing rapid growth in the state vector's magnitude, the controller can immediately repeat just the final sampling step of the algorithm without restarting the entire procedure. This resilience property allows Algorithm~\ref{algo1} to effectively learn to stabilize the unknown diffusion process with only a state trajectory of length $\episodetime{}$, making it particularly data-efficient in practice.

\subsection{Proof of Theorem \ref{StabThm}}
\label{sec:proof_StabThm}
The proof of this theorem relies on multiple intermediate lemmas. The statements and proofs of the lemmas are provided in the appendices and here we briefly express some results of the lemmas that are directly utilize to establish Theorem~\ref{StabThm}. 

Let us start by focusing on the error for recovering the true dynamics $\truth$ that Algorithm~\ref{algo1} incurs. We decompose the error of the output sample $\estpara{}$ of the algorithm into two components. The first component is the error of the posterior mean $\empmean{\episodetime{}}$ in \eqref{RandomLSE2} for estimating the truth, i.e., $\Mnorm{ \empmean{\episodetime{}} - \truth }{} $, while the second component consists of the additional randomness the sampling procedure imposes, i.e., $\Mnorm{ \estpara{}-\empmean{\episodetime{}} }{}$. In the sequel, we elaborate that both of the above-mentioned errors heavily depend on the precision matrix of the Gaussian distribution in \eqref{RandomLSE3} that is defined in \eqref{RandomLSE2}. 

	Accordingly, we proceed by bounding this matrix from below in Lemma~\ref{MinEigEmpCovLem}. That is, under the conditions $\episodetime{} \gtrsim \left( \statedim+\controldim \right) \log \delta^{-1}$ and $\stabstep \gtrsim \episodetime{} \controldim \log \left( \stabstep \controldim \delta^{-1} \right)$, with probability at least $1-\delta$ it holds that 
	\begin{eqnarray} \label{Lemma3resultEq}
    \eigmin{\empiricalcovmat{\episodetime{}}} \gtrsim \episodetime{} \left(\eigmin{\BMcov{}} \wedge \dithercoeff{}^2\right) \left( 1 + \Mnorm{\Gainmat{}}{2}^2 \right)^{-1}.
	\end{eqnarray}
	
	The next two steps that study the two components of the error $\estpara{} - \truth$ as described above are completely provided and proved in Lemma~\ref{ParaEstimationLem}. Here we discuss the main structure and delegate the fully detailed discussion to the appendix. Since the posterior distribution on the dynamics matrices $\para{}$ is Gaussian, the above inequality immediately leads to the following high-probability error bound for the sampling step of the algorithm:
	\begin{equation} \label{NewAuxIneq1}
		\Mnorm{ \estpara{} - \empmean{\episodetime{}} }{}^2 \lesssim \frac{ \left( 1 + \Mnorm{\Gainmat{}}{2}^2 \right) \statedim \left( \statedim + \controldim \right) }{ \episodetime{} \left(\eigmin{\BMcov{}} \wedge \dithercoeff{}^2\right) } \log \frac{ \statedim \left( \statedim + \controldim \right) }{ \delta }.
	\end{equation}
	
	On the other hand, we establish Lemma~\ref{SelfNormalizedLem} for studying the estimation error. 
	Recall that in \eqref{RandomLSE1}, the vector $\statetwo{t}=\left[\state{t}^\top,\action{t}^\top\right]^\top$ denotes the observation signal. Then, we have a high probability bound for the following stochastic matrix-valued integral:  
	\begin{equation*}
		\Mnorm{ \empiricalcovmat{t}^{-1/2} \itointeg{0}{t}{ \statetwo{s} }{\BM{s}^{\top}} }{}^2 \lesssim  \statedim \left( \statedim+\controldim \right) \eigmax{\BMcov{}} \log \left( \frac{\episodetime{}}{\delta} \right) .
	\end{equation*}
	
	Lemma~\ref{ParaEstimationLem} leverages the above result together with the result of Lemma~\ref{MinEigEmpCovLem} that we expressed above in \eqref{Lemma3resultEq}, in order to show that with probability at least $1-\delta$, we have
	\begin{equation} \label{NewAuxIneq2}
		\Mnorm{\empmean{\episodetime{}} - \truth}{2}^2 \lesssim \frac{ \left( 1 + \Mnorm{\Gainmat{}}{2}^2 \right) \statedim \left( \statedim + \controldim \right) \eigmax{\BMcov{}} }{ \episodetime{} \left(\eigmin{\BMcov{}} \wedge \dithercoeff{}^2\right) } \log \frac{ \episodetime{} }{ \delta } .
	\end{equation}
	Therefore, putting together \eqref{NewAuxIneq1} and \eqref{NewAuxIneq2}, we obtain the following result that is proven in Lemma~\ref{ParaEstimationLem}:
	\begin{equation*} 
		\Mnorm{\estpara{}-\truth}{2}^2 \lesssim \frac{ \statedim \left(\statedim+\controldim\right) }{\episodetime{} } \frac{\eigmax{\BMcov{}}  }{\eigmin{\BMcov{}} \wedge \dithercoeff{}^2} \left( 1 + \Mnorm{\Gainmat{}}{2}^2 \right) \log \left( \frac{\statedim \controldim \episodetime{}}{\delta} \right).
	\end{equation*}
	
	Next, we use the above bound together with Lemma~\ref{LipschitzLemma} to establish a similar inequality for the solutions of \eqref{ARiccEq}. Intuitively speaking, Lemma~\ref{LipschitzLemma} provides a sensitivity analysis for \eqref{ARiccEq} by establishing that the solution matrix $\RiccSol{\para{}}$ is a locally Lipschitz function of the dynamics matrix $\para{}$, and provides its Lipschitz constant in terms of the relevant parameters. So, Lemma~\ref{ParaEstimationLem} and Lemma~\ref{LipschitzLemma} imply that with probability at least $1-\delta$, it holds that
	\begin{equation*}
	\Mnorm{\RiccSol{\estpara{}}-\RiccSol{\truth}}{2}^2 \lesssim  \frac{ \statedim \left(\statedim+\controldim\right) }{\episodetime{} } \frac{\eigmax{\BMcov{}}  }{\eigmin{\BMcov{}} \wedge \dithercoeff{}^2} \left( 1 + \Mnorm{\Gainmat{}}{2}^2 \right) \log \left( \frac{\statedim \controldim \episodetime{}}{\delta} \right).
	\end{equation*}
	
	Note that the right-hand-side above is the bound for both $\Mnorm{\estpara{}-\truth}{2}^2$ as well as $\Mnorm{\RiccSol{\estpara{}}-\RiccSol{\truth}}{2}$. Thus, we get the same upper bound for $\Mnorm{\estB{}^\top \RiccSol{\estpara{}} - \Bmat{0}^\top \RiccSol{\truth}}{2}$. Furthermore, remember that Algorithm~\ref{algo1} uses the sample $\estpara{}$ to employ  the linear feedback $\action{t} = -\Qu^{-1} \estB{}^{\top} \RiccSol{\estpara{}} \state{t}$ for stabilizing the unknown diffusion process in \eqref{dynamics}. Hence, we get a similar error bound for the difference between the ideal closed-loop matrix $\CLmat{0}=\Amat{0}-\Bmat{0} \Qu^{-1} \Bmat{0}^{\top} \RiccSol{\truth}$ under full knowledge of the truth, against the actual closed-loop matrix $\CLmat{}=\Amat{0}-\Bmat{0} \Qu^{-1} \estB{}^{\top} \RiccSol{\estpara{}}$ that governs the dynamics after running Algorithm~\ref{algo1}. That is,  
	\begin{equation} \label{StabThmProofEq1}
	\Mnorm{\CLmat{}-\CLmat{0}}{2}^2 \lesssim  \frac{ \statedim \left(\statedim+\controldim\right) }{\episodetime{} } \frac{\eigmax{\BMcov{}}  }{\eigmin{\BMcov{}} \wedge \dithercoeff{}^2} \left( 1 + \Mnorm{\Gainmat{}}{2}^2 \right) \log \left( \frac{\statedim \controldim \episodetime{}}{\delta} \right),
	\end{equation}
	with probability at least $1-\delta$. 
	
	To proceed, we consider the effect of the above errors on the eigenvalues of $\CLmat{}$ in comparison to that of $\CLmat{0}$. For that purpose, we establish Lemma~\ref{EigPerturbLem} that proves a bound for the sensitivity of eigenvalues of matrices to the perturbation in their entries. Lemma~\ref{EigPerturbLem} expresses that the difference between the leading eigenvalues of $\CLmat{}$ and $\CLmat{0}$ is (apart from a constant factor) at most $\left( 1 \vee \mult{} \Mnorm{\CLmat{}-\CLmat{0} }{2}  \right)^{1/\mult{}}$, where $\mult{}$ is the size of biggest block in the Jordan decomposition of $\CLmat{0}$. The general statement of the lemma containing all exact definitions and full technical details, together with the proof of the lemma, are provided in the appendix. For the sake of simplicity, in the remainder of this proof, we use the fact $\mult{} \leq \statedim$. However, in cases that the difference between $\statedim$ and $\mult{}$ is not small, one can simply get a tighter analysis by replacing the exponent $\statedim$ with $\mult{}$. For example, when the dimension of the process $\statedim$ is not small, while $\CLmat{0}$ is a diagonal matrix and so satisfies $\mult{}=1$. 
	
	Success of the stabilization Algorithm~\ref{algo1} means that all eigenvalues of $\CLmat{}$ are on the open left half-plane of the complex plane. On the other hand, according to the definition of the largest real-part $-\stabradii$ in \eqref{StabRadiiEq}, real-parts of all eigenvalues of $\CLmat{0}$ are at most $-\stabradii$. Therefore, the eigenvalue sensitivity analysis of the previous paragraph indicates that the failure event $\event{\episodetime{}}$ does not occur, as long as
	\begin{equation*}
		\left( 1 \vee \mult{} \Mnorm{\CLmat{}-\CLmat{0} }{2}  \right)^{1/\mult{}} \lesssim \stabradii.
	\end{equation*}
	 That is,  
	 \begin{equation*}
	 	\PP{ \event{\episodetime{}} } \lesssim \PP{ \Mnorm{\CLmat{}-\CLmat{0}}{2} \gtrsim \frac{ \stabradii \wedge \stabradii^{\statedim}  }{\statedim} }.
	 \end{equation*}
	 Finally, the above inequality and \eqref{StabThmProofEq1} yield to the desired result in \eqref{StabFailProb}.
~\hfill~$\blacksquare$
\endproof

\section{Efficiency and Regret Analysis} \label{CostMinSection}
In this section, we analyze Thompson sampling (TS) for minimizing the quadratic cost in \eqref{AveCostDef}, demonstrating that TS efficiently learns optimal control actions through effective management of the trade-off between the exploration and exploitation. This approach achieves a regret that grows with (nearly) the square-root rate, as time grows. Below, we first introduce Algorithm~\ref{algo2} and examine its underlying framework and technical mechanisms. Subsequently, we establish theoretical guarantees by deriving explicit regret bounds parameterized by system characteristics and provide convergence rates for estimating unknown drift matrices. These results contribute to the broader literature on stochastic control by providing performance guarantees for adaptive learning in continuous-time systems.

To formalize the exploration-exploitation dilemma in our diffusion control setting, note that any policy aiming to achieve sub-linear regret must take near-optimal control actions in the long run; that is, 
\begin{equation} \label{NearOptActionEq}
    \action{t} \approx - \Qu^{-1} \Bmat{0}^\top \RiccSol{\truth} \state{t}.
\end{equation}
However, such exploitation-focused policies, despite approximating the optimal policy $\optimalpolicy$, suffer from insufficient information acquisition. From a learning perspective, the resulting observation trajectory $\left\{\statetwo{t}\right\}_{t \geq 0}$ lacks the necessary diversity for accurate parameter estimation. This occurs because in the observation vector $\statetwo{t}^\top=\left[\state{t}^\top,\action{t}^\top\right]$, the control-action component $\action{t}$ becomes approximately collinear with the state signal $\state{t}$ as shown in \eqref{NearOptActionEq}, thereby providing minimal new information about the unknown parameter $\truth$. Conversely, policies that prioritize exploration by deliberately deviating from $\optimalpolicy$ generate more informative data but incur substantial immediate regret. This fundamental trade-off between information acquisition and optimality must be precisely calibrated—a balance that we demonstrate TS achieves.

\begin{algorithm}
	\caption{\bf: Thompson Sampling for Control Policy} \label{algo2}
	\begin{algorithmic}
		\State{Inputs: stabilization time $\episodetime{0}$}
		\State{Calculate sample $\estpara{0}$ by running Algorithm~\ref{algo1} for time $\episodetime{0}$}
		
		\For{$n=1,2, \cdots$}
		\While {$\episodetime{n-1} \leq t < \episodetime{n}$}
		\State{Apply control action $\action{t}=-\Qu^{-1} \estB{n-1}^{\top} \RiccSol{\estpara{n-1}} \state{t}$}
		\EndWhile	
		\State{Letting $\empiricalcovmat{\episodetime{n}},\empmean{\episodetime{n}}$ be as \eqref{RandomLSE1}, sample $\estpara{n}$ from $\posterior{\episodetime{n}}$ given in~\eqref{RandomLSE3}}
		\EndFor
	\end{algorithmic}
\end{algorithm}

Algorithm~\ref{algo2} begins by executing the learning-based stabilization procedure (Algorithm~\ref{algo1}) during the interval $0 \leq t <\episodetime{0}$. As established by Theorem~\ref{StabThm}, when $\episodetime{0}$ is sufficiently large, the optimal feedback policy derived from $\estpara{0}$ stabilizes the system with high probability. If state vector trajectories indicate instability, one can implement repeated sampling from the posterior distribution $\posterior{\episodetime{0}}$ in Algorithm~\ref{algo1} until stability is achieved. Therefore, we can operate under the assumption that the controlled diffusion process maintains stability during the execution of Algorithm~\ref{algo2}. This initialization phase serves a dual purpose: beyond establishing system stability, it also conducts an initial exploration of the state space, generating informative data that Algorithm~\ref{algo2} leverages to construct policies that minimize cumulative regret. This two-phase approach represents an efficient method for addressing the joint challenges of stabilization and optimal decision-making under uncertainty.

The next step of Algorithm~\ref{algo2} is episodic; the parameter estimates $\estpara{n}$ are updated only at discrete episode boundaries $\left\{ \episodetime{n} \right\}_{n=0}^\infty$. During each episode, the controller implements control actions \emph{as if} the sample $\estpara{n}=\left[ \estA{n},\estB{n} \right]^\top$ were the true system parameter $\truth$. Specifically, for all $t \in [\episodetime{n-1}, \episodetime{n})$, using the solution $\RiccSol{\estpara{n}}$ to the algebraic Riccati equation \eqref{ARiccEq}, the control action is given by $\action{t}=-\Qu^{-1} \estB{n}^\top \RiccSol{\estpara{n}} \state{t}$. For each $n =1, 2, \cdots$, at time $\episodetime{n}$, we then use the complete observation history to compute the sufficient statistics $\empiricalcovmat{\episodetime{n}}$ and $\empmean{\episodetime{n}}$ according to \eqref{RandomLSE1}, which are used to construct the posterior distribution $\posterior{\episodetime{n}}$ as defined in \eqref{RandomLSE3}. The parameter estimate for the subsequent episode is obtained by drawing a random sample from this new posterior distribution.

The selection of episode lengths is a design parameter that impacts computational efficiency. Specifically, the episodes in Algorithm~\ref{algo2} are structured to satisfy the following growth condition:
\begin{equation} \label{EpochLengthCond}
0 < \underline{\ssconstant} \leq  \inf_{n \geq 0} \frac{\episodetime{n+1}-\episodetime{n}}{\episodetime{n}} \leq \sup_{n \geq 0} \frac{\episodetime{n+1}-\episodetime{n}}{\episodetime{n}} \leq \overline{\ssconstant} < \infty,
\end{equation}
where the constants $\underline{\ssconstant}$ and $\overline{\ssconstant}$ establish lower and upper bounds on the relative growth rate of consecutive episodes. This balanced growth requirement ensures that episodes are neither too short (which would incur excessive computational overhead from frequent parameter updates) nor too long (which would delay the incorporation of new information into the control policy). This is especially appropriate for continuous-time systems where posterior updates must be sufficiently separated temporally. Frequent updates impose prohibitive computational burden. Moreover, they can degrade system performance by interrupting the natural response time of control actions, preventing policies from meaningfully influencing system dynamics before being replaced.

To understand the balance in episode design, observe that by \eqref{RandomLSE1}, the precision matrix $\empiricalcovmat{\episodetime{}}$ grows with $\episodetime{}$, causing the estimation error $\estpara{n}-\truth$ to decay (at best with a polynomial rate) with $\episodetime{n}$. Consequently, updating the posterior distribution before sufficient new information has accumulated would incur computational costs (posterior sampling and solving \eqref{ARiccEq}) without proportional improvement in control performance. This statistical-computational trade-off motivates setting the episode length $\episodetime{n+1} - \episodetime{n}$ to be at least $\underline{\ssconstant} \episodetime{n}$. Conversely, episodes cannot be excessively long, as this would delay the incorporation of new observations into the parameter estimates; hence the upper bound $\episodetime{n+1} \leq \left(1 + \overline{\ssconstant}\right) \episodetime{n}$. A particularly elegant implementation is achieved by setting $\underline{\ssconstant}=\overline{\ssconstant}$, resulting in geometrically increasing episode lengths $\episodetime{n}=\episodetime{0} \left( 1+\overline{\ssconstant} \right)^n$. 

We now demonstrate that Algorithm~\ref{algo2} efficiently resolves the exploration-exploitation dilemma through an adaptive sampling mechanism. The key insight lies in examining the sequence of posterior distributions $\posterior{\episodetime{n}}$. The exploration intensity induced by sampling $\estpara{n}$ from $\posterior{\episodetime{n}}$ is governed by matrix $\empiricalcovmat{\episodetime{n}}$. When $\eigmin{\empiricalcovmat{\episodetime{n}}}$ is not large enough, the posterior $\posterior{\episodetime{n}}$ exhibits higher variance around its mean $\empmean{\episodetime{n}}$, causing $\estpara{n}$ to likely differ substantially from previous estimates $\left\{\estpara{i}\right\}_{i=1}^{n-1}$. This posterior-induced randomization creates a self-correcting exploration mechanism: the controller implements more diverse control signals $\action{t}$, generating more informative observation data~$\statetwo{t}$. This information-rich data yields two benefits: (1) the subsequent posterior mean $\empmean{\episodetime{n+1}}$ provides a more accurate approximation of $\truth$, and (2) the minimum eigenvalue $\eigmin{\empiricalcovmat{\episodetime{n+1}}}$ increases at an accelerated rate. Consequently, the next posterior $\posterior{\episodetime{n+1}}$ produces samples with reduced estimation error $\|\estpara{n+1}-\truth\|$. Similarly, when a posterior becomes overly concentrated (indicating excessive exploitation), the adaptive mechanism readjusts within a few episodes to restore appropriate exploration levels. This inherent statistical adaptation ensures that TS eventually balances the exploration-exploitation trade-off without explicit parameter tuning. This is formalized below.

\begin{thrm} [Regret and Estimation Rates] \label{RegretThm}
	Parameter estimates and regret of Algorithm~\ref{algo2}, satisfy the following bounds:
	\begin{eqnarray*}
		\Mnorm{\estpara{n}-\truth}{2}^2 &\lesssim&  \frac{\eigmax{\BMcov{}}}{\eigmin{\BMcov{}}} ~~\left( \statedim+\controldim \right) \statedim ~~~\episodetime{n}^{-1/2} \log \episodetime{n}\,,\\
		\regret{T}{} &\lesssim& \left(\eigmax{\BMcov{}} + \dithercoeff{}^2\right) \episodetime{0}+ \frac{\eigmax{\BMcov{}}^2}{\eigmin{\BMcov{}}} \frac{ \Mnorm{\RiccSol{\truth}}{2}^6 }{ \eigmin{Q}^6} ~~~ \left( \statedim+\controldim \right) \statedim ~~~T^{1/2} \log T\,.
	\end{eqnarray*}
\end{thrm}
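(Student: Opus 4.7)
The plan is to first establish the estimation bound and then derive the regret bound from it. For estimation, decompose
\begin{equation*}
\estpara{n} - \truth = \left(\empmean{\episodetime{n}} - \truth\right) + \left(\estpara{n} - \empmean{\episodetime{n}}\right).
\end{equation*}
The first term is the regularized least-squares error, expressible (modulo the prior contribution) as $\empiricalcovmat{\episodetime{n}}^{-1}\itointeg{0}{\episodetime{n}}{\statetwo{s}}{\BM{s}^{\top}}$. A continuous-time self-normalized martingale inequality, one of the technical ingredients announced in the introduction, then bounds its squared operator norm by $\eigmax{\BMcov{}}(\statedim+\controldim)\log\episodetime{n}/\eigmin{\empiricalcovmat{\episodetime{n}}}$. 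The second term is a matrix whose $\statedim$ columns are independent $\normaldist{0}{\empiricalcovmat{\episodetime{n}}^{-1}}$ vectors, so a standard Gaussian tail bound gives it a comparable upper bound of the order $\statedim(\statedim+\controldim)/\eigmin{\empiricalcovmat{\episodetime{n}}}$.

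The main obstacle is the persistent-excitation lower bound $\eigmin{\empiricalcovmat{\episodetime{n}}} \gtrsim (\eigmin{\BMcov{}}/\eigmax{\BMcov{}})\,\episodetime{n}^{1/2}/\log\episodetime{n}$, from which the first inequality of the theorem follows immediately. Under the exploitative feedback $\action{t} = -\Qu^{-1}\estB{n-1}^{\top}\RiccSol{\estpara{n-1}}\state{t}$, the signal $\statetwo{t}$ lies close to a $\statedim$-dimensional subspace, so $\empiricalcovmat{\episodetime{n}}$ is nearly rank-deficient normal to it, and the only source of exploration in the starved direction is the random variation of $\estpara{n-1}$ across episodes. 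I would formalize this by tracking the posterior variance along the worst-excited direction: if it is large, the next Thompson sample deviates enough from previous samples to excite that direction during the upcoming episode; otherwise, by the very shape of $\posterior{\episodetime{n}}$, the direction is already well excited in $\empiricalcovmat{\episodetime{n}}$. Iterating this dichotomy over the geometric episode schedule~\eqref{EpochLengthCond} via a stopping-time/peeling scheme, and using sub-exponential concentration for Ito integrals together with the Ornstein--Uhlenbeck representation of $\state{t}$, produces the required square-root growth rate.

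For the regret, split $\regret{T}{} = R_0 + \sum_{n\geq 1} R_n$, where $R_0$ accounts for the stabilization interval $[0,\episodetime{0})$ under~\eqref{PerturbActionEq} and $R_n$ for the episode $[\episodetime{n-1},\episodetime{n})$. A direct moment bound on the perturbed Ornstein--Uhlenbeck state yields $R_0 \lesssim (\eigmax{\BMcov{}}+\dithercoeff{}^2)\episodetime{0}$. For $R_n$, I would apply Ito's formula to $\state{t}^{\top}\RiccSol{\estpara{n-1}}\state{t}$ and use the Riccati identity~\eqref{ARiccEq} at $\estpara{n-1}$: after adding and subtracting $\optimalpolicy$'s action and integrating, the excess instantaneous cost reduces to a quadratic form in $\action{t} + \Qu^{-1}\Bmat{0}^{\top}\RiccSol{\truth}\state{t}$ whose coefficient, by the Lipschitz continuity of the Riccati map established through the matrix-valued curves argument, is of order $\Mnorm{\RiccSol{\truth}}{2}^{6}\eigmin{\Qu}^{-6}\Mnorm{\estpara{n-1}-\truth}{2}^{2}$, plus a telescoping boundary term and an Ito martingale remainder handled by the double-integral concentration bounds. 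Substituting the first inequality and using~\eqref{EpochLengthCond} yields $R_n$ of the order $\overline{\ssconstant}\,\episodetime{n-1}^{1/2}\log\episodetime{n-1}$ times that constant; since episodes grow geometrically with ratio at least $1+\underline{\ssconstant}$, the sum telescopes to its last term, giving $T^{1/2}\log T$ with precisely the stated prefactor $\overline{\ssconstant}/\log(1+\underline{\ssconstant})$. The hardest technical points beyond persistent excitation are the uniform-in-time control of these Ito martingale remainders and the sharp Lipschitz bound for $\RiccSol{\cdot}$, both leveraging the ingredients advertised in the introduction.
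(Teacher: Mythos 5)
Your overall architecture matches the paper's: the same decomposition $\estpara{n}-\truth=(\empmean{\episodetime{n}}-\truth)+(\estpara{n}-\empmean{\episodetime{n}})$, a continuous-time self-normalized martingale bound for the first term, a Gaussian bound for the second, a persistent-excitation lower bound on $\eigmin{\empiricalcovmat{\episodetime{n}}}$, and a regret bound obtained by reducing $\regret{T}{}$ to the integrated squared action deviation and invoking Lipschitz continuity of $\para{}\mapsto\RiccSol{\para{}}$ before summing over the geometric episode schedule. Your regret half (Ito's formula applied to a quadratic value function rather than the paper's $\epsilon$-discretized policy-switching telescope in Lemma~\ref{GeneralRegretLem}) is a legitimate variant and would go through with the same $\Mnorm{\RiccSol{\truth}}{2}^6\eigmin{Q}^{-6}$ constant coming from Lemma~\ref{LipschitzLemma}.

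The genuine gap is in the step you yourself identify as the main obstacle: the lower bound $\eigmin{\empiricalcovmat{\episodetime{n}}}\gtrsim\episodetime{n}^{1/2}\eigmin{\BMcov{}}$. Your dichotomy argument (``if the posterior variance along the worst-excited direction is large, the next Thompson sample deviates enough to excite that direction'') tacitly assumes that a deviation of $\estpara{n}$ in parameter space produces a deviation of the applied feedback, and hence of the observation signal $\statetwo{t}$, along the starved direction. But the map $\para{}\mapsto -\Qu^{-1}\Bmat{}^{\top}\RiccSol{\para{}}$ is degenerate: there is a whole manifold of parameters sharing the same optimal feedback as the current estimate, and posterior spread along that manifold generates no new excitation of the closed-loop data at all. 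This is exactly why the paper needs Lemma~\ref{OptManifoldLemma} (the directional derivative $\Delta_{\para{1}}(X,Y)$ of $\Bmat{}^{\top}\RiccSol{\para{}}$ and the tangent space of the optimality manifold) and the Schur-complement analysis of Lemma~\ref{MinPELem}: the projections of $\estpara{i}-\truth$ onto previously excited feedback directions are already small by \eqref{TangentSpaceProjEq}, so the part of the feedback change that matters is driven by the $\controldim\times\statedim$ Gaussian block $\Omega_k^{-1/2}\RandMat{k}$ of the posterior, which is almost surely full rank and makes $\episodetime{n}^{-1/2}\Omega_n$ converge to a positive definite matrix. Relatedly, your sketch never explains why the accumulated excitation scales as $\episodetime{n}^{1/2}$ rather than some other power; in the paper this exponent comes out quantitatively because the effective randomization of the feedback has magnitude of order $\episodetime{k}^{-1/4}$ (posterior standard deviation) and enters the empirical covariance quadratically. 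Without an argument that rules out the optimality-manifold degeneracy and pins down this exponent, the first inequality of the theorem — and through it the regret bound — is not established.
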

The regret and estimation rates established in Theorem \ref{RegretThm} reveal several key dependencies. Similar to Theorem \ref{StabThm}, the ratio $\eigmax{\BMcov{}}/\eigmin{\BMcov{}}$ captures how the disparity between maximum and minimum noise variances across different dimensions of the Wiener noise $\BM{t}$ affects learning quality. The factor $\statedim (\statedim+\controldim)$ demonstrates a linear scaling relationship with both estimation error and regret, confirming that larger system dimensions—and consequently more parameters to learn—directly worsen performance. Within the regret bound, the term $\Mnorm{\RiccSol{\truth}}{2}/\eigmin{Q}$ characterizes the effect of the true system parameters $\truth$ and cost matrix $Q$, highlighting how intrinsic system properties influence achievable performance. Finally, the term $\left(\eigmax{\BMcov{}} + \dithercoeff{}^2\right) \episodetime{0}$ quantifies the cost associated with the initial stabilization phase where Algorithm~\ref{algo1} implements the necessarily sub-optimal control actions defined in~\eqref{PerturbActionEq}.

\subsection{Proof of Theorem \ref{RegretThm}.}
\label{sec:proog_RegretThm}
For this proof, we utilize multiple intermediate lemmas whose fully rigorous statements and proofs are delegated to the appendices. 

We proceed to establish the rates at which Algorithm~\ref{algo2} learns the unknown parameters. The estimation error $\estpara{n} - \truth$ can be decomposed into the portion $\empmean{\episodetime{n}} - \truth$ that is caused by the Wiener noise $\BM{t}$ in \eqref{dynamics}, and the portion originating from the randomization that TS employs for ensuring sufficient exploration; $\estpara{n} - \empmean{\episodetime{n}}$. Write the data generation mechanism in \eqref{dynamics} as $\diff \state{t} = \truth^\top \statetwo{t} \diff t + \diff \BM{t}$, and plug in the definition of $\empmean{\episodetime{n}}$ in \eqref{RandomLSE2} to obtain  
\begin{equation*}
	\empmean{\episodetime{n}} = \empiricalcovmat{\episodetime{n}}^{-1} \itointeg{0}{\episodetime{n}}{ \left( \statetwo{t} \statetwo{t}^\top \truth \right) }{t} + \empiricalcovmat{\episodetime{n}}^{-1} \itointeg{0}{\episodetime{n}}{\statetwo{t}}{\BM{t}^\top} = \truth + \empiricalcovmat{\episodetime{n}}^{-1} \left( \itointeg{0}{\episodetime{n}}{\statetwo{t}}{\BM{t}^\top} - \truth \right).
\end{equation*}
That is,
\begin{equation*}
	\Mnorm{\empmean{\episodetime{n}}-\truth}{2} \lesssim \Mnorm{\empiricalcovmat{\episodetime{n}}^{-1} \itointeg{0}{\episodetime{n}}{\statetwo{t}}{\BM{t}^\top} }{2}
	\leq \Mnorm{\empiricalcovmat{\episodetime{n}}^{-1/2}}{2} \Mnorm{\empiricalcovmat{\episodetime{n}}^{-1/2} \itointeg{0}{\episodetime{n}}{\statetwo{t}}{\BM{t}^\top} }{2} .
\end{equation*}
Lemma~\ref{SelfNormalizedLem} establishes an upper-bound for the matrix-valued stochastic integral $ \itointeg{0}{\episodetime{n}}{\statetwo{t}}{\BM{t}^\top}$ based on the dimensions $\statedim,\controldim$, the noise covariance matrix $\BMcov{}$, and the largest eigenvalue of the precision matrix $\empiricalcovmat{\episodetime{n}}$ in \eqref{RandomLSE1}. The result in Lemma~\ref{SelfNormalizedLem} indicates that 
\begin{equation} \label{NewRegretProofEq1}
\Mnorm{\empiricalcovmat{\episodetime{n}}^{-1/2} \itointeg{0}{\episodetime{n}}{\statetwo{t}}{\BM{t}^\top} }{2}^2 \lesssim ~~\statedim (\statedim+\controldim) \eigmax{\BMcov{}} \log \eigmax{\empiricalcovmat{\episodetime{n}}}.
\end{equation}

On the other hand, since by the design of TS in Algorithm~\ref{algo2} the $\statedim+\controldim$ by $\statedim$ matrix $\empiricalcovmat{\episodetime{n}}^{1/2} \left( \estpara{n}-\empmean{\episodetime{n}} \right)$ has a standard normal distribution (i.e., independent columns, as defined in the notation), we have 
\begin{equation} \label{NewRegretProofEq2}
	\Mnorm{\estpara{n}-\empmean{\episodetime{n}}}{2}^2 \lesssim \eigmin{ \empiricalcovmat{\episodetime{n}} }^{-1} ~\statedim (\statedim+\controldim) \log (\statedim \controldim).
\end{equation}

In Lemma~\ref{MinPELem}, we quantify the exploration TS policy in Algorithm~\ref{algo2} performs by establishing the following lower-bound on the precision matrix:
\begin{equation} \label{NewRegretProofEq3}
	\eigmin{\empiricalcovmat{\episodetime{n}}} \gtrsim \episodetime{n}^{1/2} \eigmin{\BMcov{}}.
\end{equation} 
The proof of Lemma~\ref{MinPELem} that is provided in the appendices utilizes Lemma~\ref{OptManifoldLemma}, which specifies the manifold of $ \left( \statedim+\controldim \right) \times \statedim$ matrices that share an optimal linear feedback matrix. Furthermore, it is shown in the proof of Lemma~\ref{MinPELem} (specifically, the inequality in \eqref{StateEmpCovBound2}) that for the largest eigenvalue of $\empiricalcovmat{\episodetime{n}}$ we have a linear growth rate with time, i.e., $\log \eigmax{\empiricalcovmat{\episodetime{n}}} \lesssim \log \episodetime{n}$. Therefore, by putting \eqref{NewRegretProofEq1}, \eqref{NewRegretProofEq2}, and \eqref{NewRegretProofEq3} together, we obtain
\begin{equation*}
	\Mnorm{ \estpara{n}-\truth}{2}^2 \lesssim ~~ \frac{\eigmax{\BMcov{}}}{\eigmin{\BMcov{}}} \left( \statedim+\controldim \right) \statedim ~\episodetime{n}^{-1/2} \log \episodetime{n},
\end{equation*} 
which is the desired result in the first statement of the theorem. 

To proceed towards establishing the regret bound, Lemma~\ref{GeneralRegretLem} shows that we need to integrate $\Mnorm{\action{t}+\Qu^{-1} \estB{n}^\top \RiccSol{\estpara{n}}\state{t}}{2}^2$ over the period $\episodetime{0} \leq t \leq T$: 
\begin{equation*}
	\regret{T}{} \lesssim \left(\eigmax{\BMcov{}} + \dithercoeff{}^2\right) \episodetime{0} 
	+ \itointeg{\episodetime{0}}{T}{ \norm{ \action{t} + \Qu^{-1} \Bmat{0}^{\top} \RiccSol{\truth} \state{t} }{}^2 }{t}.
\end{equation*}
Note that during the time period $\episodetime{n-1} < t \leq \episodetime{n}$ the control action taken by Algorithm~\ref{algo2} is $\action{t}= \Optgain{\estpara{n-1}} \state{t}$, for the linear feedback matrix $\Optgain{\estpara{n-1}} = -\Qu^{-1} \estB{n-1}^{\top} \RiccSol{\estpara{n-1}}$. Further, denote the optimal feedback matrix as $\Optgain{\truth} = -\Qu^{-1} \Bmat{0}^{\top} \RiccSol{\truth}$. Since the algorithm is episodic in the sense that the feedback matrices $\Optgain{\estpara{n}}$ remain unaltered during each episode, we can break the above integral into a sum of integrals over the episodes. To that end, let $n$ be the number of episodes by time $T$; that is, $\episodetime{n-1} \leq T < \episodetime{n}$. Then, 
\begin{equation*}
	\regret{T}{} \lesssim \left(\eigmax{\BMcov{}} + \dithercoeff{}^2\right) \episodetime{0} 
	+ \sum\limits_{i=0}^{n-1} \itointeg{\episodetime{i}}{\episodetime{i+1}}{ \norm{ \left( \Optgain{\estpara{i}} - \Optgain{\truth} \right) \state{t} }{}^2 }{t}.
\end{equation*}

Next, we bound the integrals above using the circular property of the trace of matrices:
\begin{equation*}
	\itointeg{\episodetime{i}}{\episodetime{i+1}}{ \norm{ \left( \Optgain{\estpara{i}} - \Optgain{\truth} \right) \state{t} }{}^2 }{t} = \tr{ \left( \Optgain{\estpara{i}} - \Optgain{\truth} \right)^{\top} \left( \Optgain{\estpara{i}} - \Optgain{\truth} \right)  \itointeg{\episodetime{i}}{\episodetime{i+1}}{  \state{t} \state{t}^\top  }{t} }.
\end{equation*}
Now, we use the fact that the stability of the dynamics in Algorithm~\ref{algo2} renders the growth of the matrix $\itointeg{\episodetime{i}}{\episodetime{i+1}}{  \state{t} \state{t}^\top  }{t}$ linear with time. Specifically, the bound in \eqref{StateEmpCovBound2} in the proof of Lemma~\ref{GeneralRegretLem} yields to 
$$\eigmax{\itointeg{\episodetime{i}}{\episodetime{i+1}}{  \state{t} \state{t}^\top  }{t}} \lesssim \left( \episodetime{i+1} - \episodetime{i} \right) \eigmax{\BMcov{}} .$$ 

Thus, the regret bound becomes
\begin{equation*}
\regret{T}{} \lesssim \left(\eigmax{\BMcov{}} + \dithercoeff{}^2\right) \episodetime{0} 
+ \eigmax{\BMcov{}} \sum\limits_{i=0}^{n-1} \left( \episodetime{i+1} - \episodetime{i} \right) \Mnorm{\Optgain{\estpara{i}}-\Optgain{\truth}}{2}^2.
\end{equation*}

We continue the proof by finding an upper-bound for $\Mnorm{\Optgain{\estpara{i}}-\Optgain{\truth}}{2}$. For that purpose, we leverage the Lipschitz continuity of $\Optgain{\para{}}$ that is proven in Lemma~\ref{LipschitzLemma}, to obtain 
\begin{equation*}
	\Mnorm{\Optgain{\estpara{i}} - \Optgain{\truth}}{2} \lesssim \frac{\Mnorm{\RiccSol{\truth}}{2}^3}{\eigmin{\Qx} \eigmin{\Qu}^2} \Mnorm{\estpara{i}-\truth}{2}.
\end{equation*}
Thus, the first result of the theorem on the decay rate of $\estpara{i} - \truth$ leads to 
\begin{equation*}
\regret{T}{} \lesssim \left(\eigmax{\BMcov{}} + \dithercoeff{}^2\right) \episodetime{0} 
+ \frac{\eigmax{\BMcov{}}^2}{\eigmin{\BMcov{}}} \frac{\Mnorm{\RiccSol{\truth}}{2}^6}{\eigmin{\Qx}^2 \eigmin{\Qu}^4} \statedim (\statedim+\controldim) \sum\limits_{i=0}^{n-1} \left( \episodetime{i+1} - \episodetime{i} \right) \frac{\log \episodetime{i}}{\episodetime{i}^{1/2}}.
\end{equation*}
Finally, since according to the condition \eqref{EpochLengthCond} for the lengths of the episodes we have $\sum\limits_{i=0}^{n-1} \left( \episodetime{i+1} - \episodetime{i} \right) \episodetime{i}^{-1/2} \log \episodetime{i} \lesssim T^{1/2} \log T$, the above upper-bound for the regret leads to the desired regret bound of Theorem~\ref{RegretThm}. ~\hfill~$\blacksquare$
\endproof

\section{Numerical Experiments} \label{NumericalSection}
We empirically evaluate our theoretical results and proposed method across three different real-world systems: blood-glucose control~\citep{zhou2018autoregressive,gondhalekar2016periodic}, flight control of X-29A airplane at 2000~ft altitude \citep{bosworth1992linearized}, and flight control of Boeing 747 airplanes at 20000 ft altitude~\citep{ishihara1992design}.

For implementation, we parameterize our framework using the dynamics in \eqref{dynamics} and the cost function in \eqref{AveCostDef} with the specific drift matrices appropriate for each system. We select $\BMcov{} =0.25~I_\statedim$, $\Qx =I_\statedim$, and $\Qu =0.1~I_\controldim$ where $I_n$ is the $n$ by $n$ identity matrix. To numerically simulate the diffusion process $\state{t}$ in~\eqref{dynamics}, we employ time-steps of length $10^{-3}$, ensuring computational accuracy. For Algorithm~\ref{algo1}, we set $\dithercoeff{}=5$ and $\stabstep = \lfloor \episodetime{}^{3/2}\rfloor$, with $\episodetime{}$ varied across different systems. The initial feedback matrix $\Gainmat{}$ in \eqref{PerturbActionEq} is generated randomly.

We benchmark Algorithm~\ref{algo2} against the Randomized Estimate policy~\citep{faradonbeh2023online} across all systems. For Algorithm~\ref{algo1}, we conduct $1,000$ independent replications. For Algorithm~\ref{algo2}, we implement a geometric progression of episode lengths with $\episodetime{n} = 20\times 1.1^{n}$ and perform $100$ replications, allowing us to characterize both average-case and worst-case performance—critical considerations for operational reliability.

\subsubsection*{Blood-glucose control.}
Blood-glucose control system is characterized by the following drift matrices, with Algorithm~\ref{algo1} applied for $\episodetime{}$ varying from 10 to 45 and Algorithm~\ref{algo2} for $50 \leq T \leq 600$:
\begin{align*}
	\Amat{0} = \begin{bmatrix}
		1.91\;\;\;    &-2.82\;\;\;		&0.91
		\\
		1.00\;\;\;    &-1.00\;\;\;		&0.00
		\\
		0.00\;\;\;    &1.00\;\;\;		&-1.00
	\end{bmatrix}, ~~~~
	\Bmat{0} = \begin{bmatrix}
		-0.0992	\\
		0.0000
		\\
		0.0000
	\end{bmatrix}.
\end{align*}
\begin{figure}[H]
    \centering
    \begin{minipage}[c]{0.45\linewidth}
        \centering
        \includegraphics[width=\linewidth]{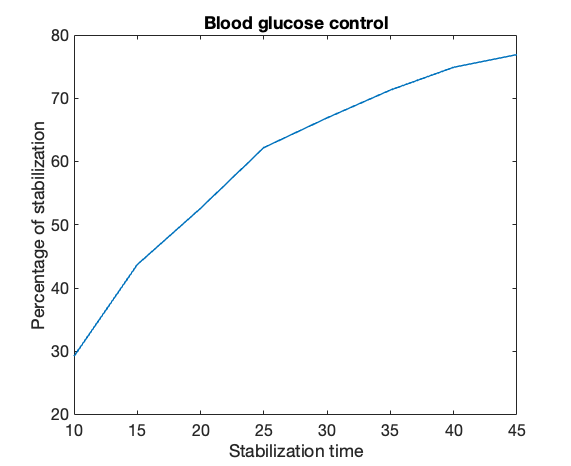}
    \end{minipage}%
    \hfill
    \begin{minipage}[c]{0.5\linewidth}
        \centering
        \includegraphics[width=\linewidth]{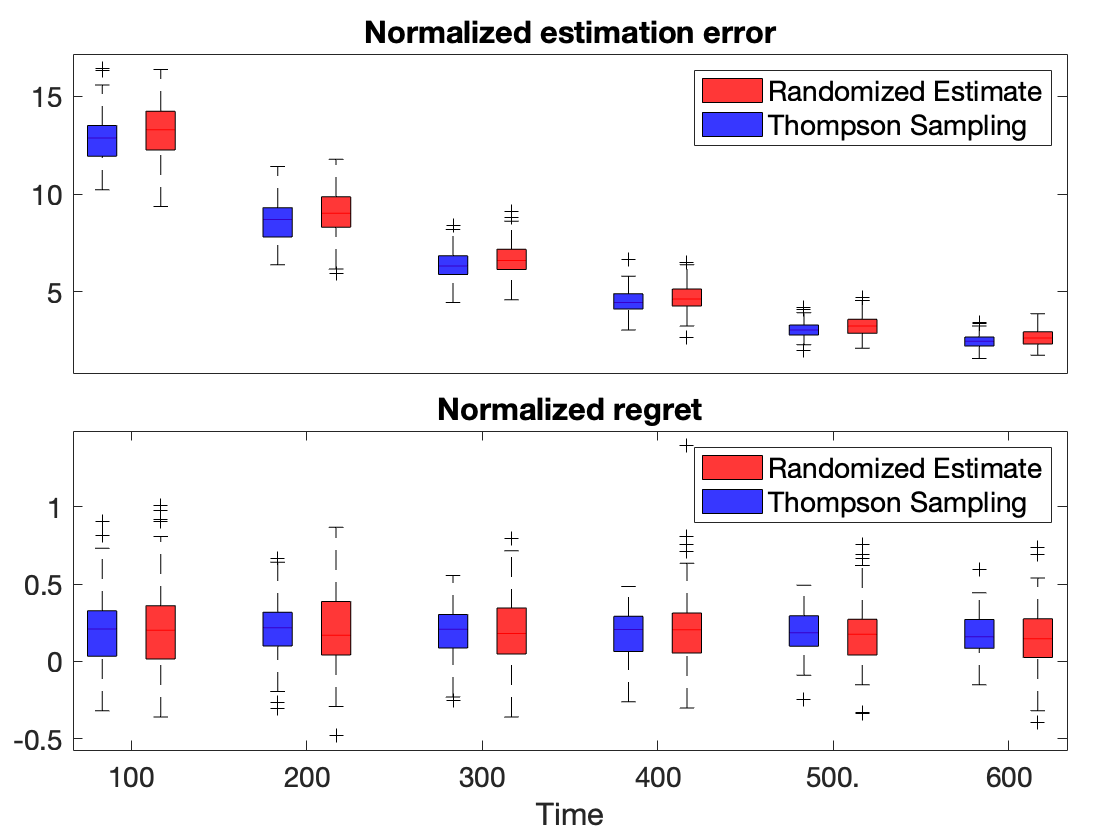}
    \end{minipage}
    \centering\includegraphics[width=.60\linewidth]{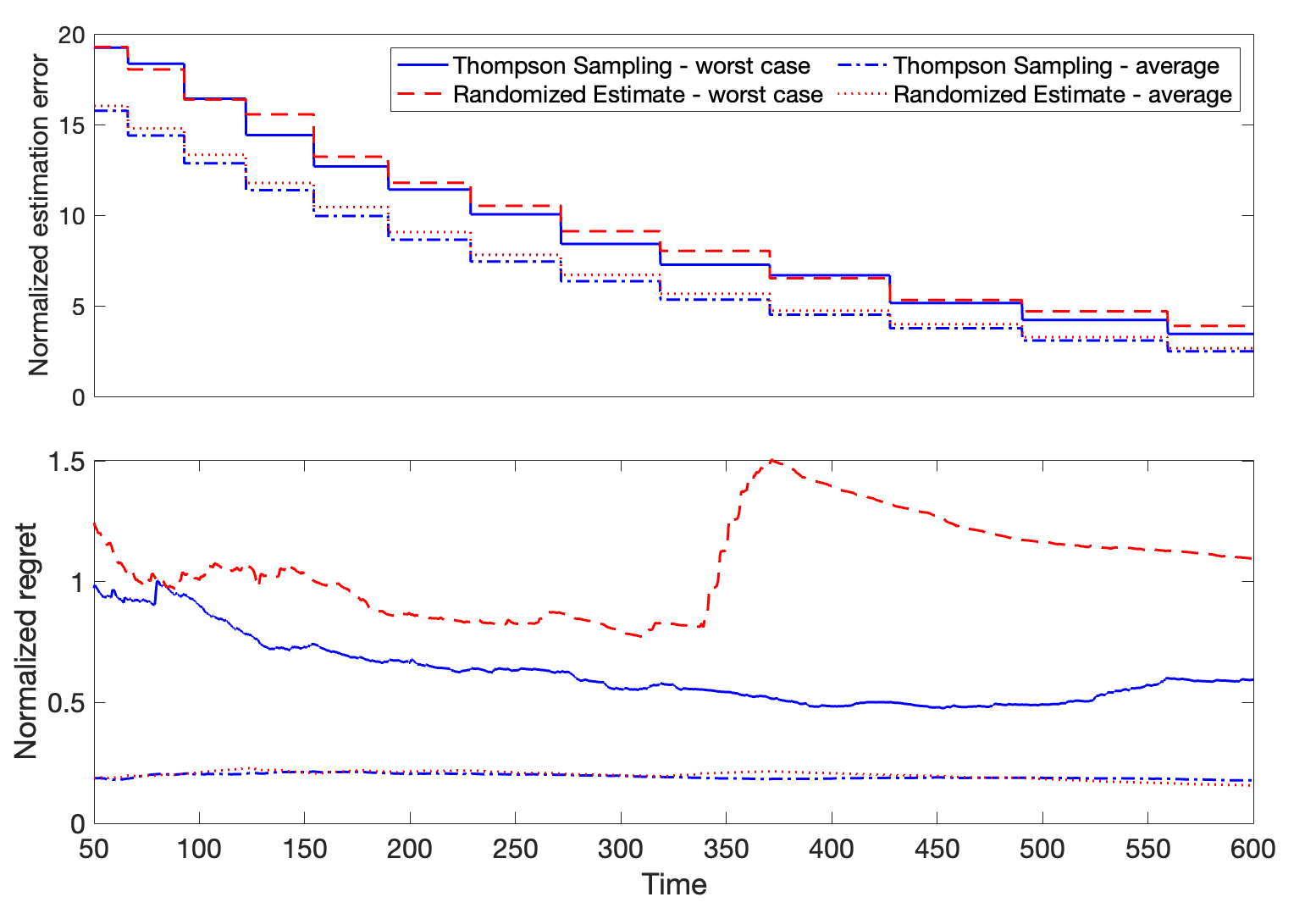}
    \caption{Analysis of blood glucose control system. \textbf{Left-top:} Percentage of successful system stabilization across 1000 runs of Algorithm~\ref{algo1}. \textbf{Right-top:} Comparative performance between Algorithm~\ref{algo2} (\textcolor{blue}{blue}) and Randomized Estimate policy (\textcolor{red}{red}). Upper panel shows normalized squared estimation error ($\Mnorm{\estpara{n}-\truth}{}^2$ divided by $\statedim (\statedim+\controldim) \episodetime{n}^{-1/2} \log \episodetime{n}$) at time points 100, 200, 300, 400, 500, and 600 seconds across 100 replications. Lower panel displays normalized regret ($\regret{T}{}$ divided by $\statedim (\statedim+\controldim) T^{1/2} \log T$). \textbf{Bottom:} Extended performance comparison between Algorithm~\ref{algo2} (\textcolor{blue}{blue}) and Randomized Estimate policy (\textcolor{red}{red}), with normalized squared estimation error (top) and normalized regret (bottom) for 100 replications.}
    \label{fig:BGC}
\end{figure}

\begin{samepage}
\subsubsection*{X-29A Airplane Control.}
The X-29A represents an advanced experimental aircraft system. The system is characterized by the following drift matrices, with Algorithm~\ref{algo1} applied for $\episodetime{}$ varying from 4 to 20 and Algorithm~\ref{algo2} for an operational horizon of $50 \leq T \leq 600$:
\begin{equation*}
	\Amat{0} = \begin{bmatrix}
		-0.16\;\;\;    & 0.07\;\;\;		& -1.00\;\;\;    &0.04
		\\
		-15.20\;\;\;    & -2.60\;\;\;		&1.11\;\;\;    &0.00
		\\
		6.84\;\;\;    &-0.10\;\;\;		&-0.06\;\;\;    &0.00
		\\
		0.00\;\;\;    &1.00\;\;\;		&0.07\;\;\;    &0.00
	\end{bmatrix}, ~~~~
	\Bmat{0} = \begin{bmatrix}
		-0.0006\;\;\;    &0.0007	\\
		1.3430 \;\;\;    &0.2345
		\\
		0.0897\;\;\;    &-0.0710
		\\
		0.0000\;\;\;    &0.0000
	\end{bmatrix}.
\end{equation*}%
\begin{figure}[ht]
    \centering
    \begin{minipage}[c]{0.45\linewidth}
        \centering
        \includegraphics[width=\linewidth]{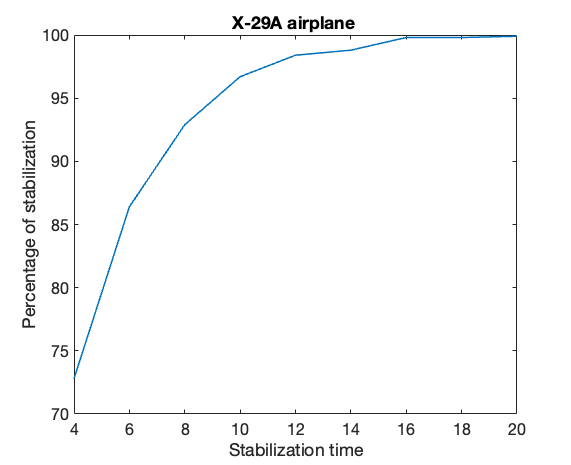}
    \end{minipage}%
    \hfill
    \begin{minipage}[c]{0.5\linewidth}
        \centering
        \includegraphics[width=\linewidth]{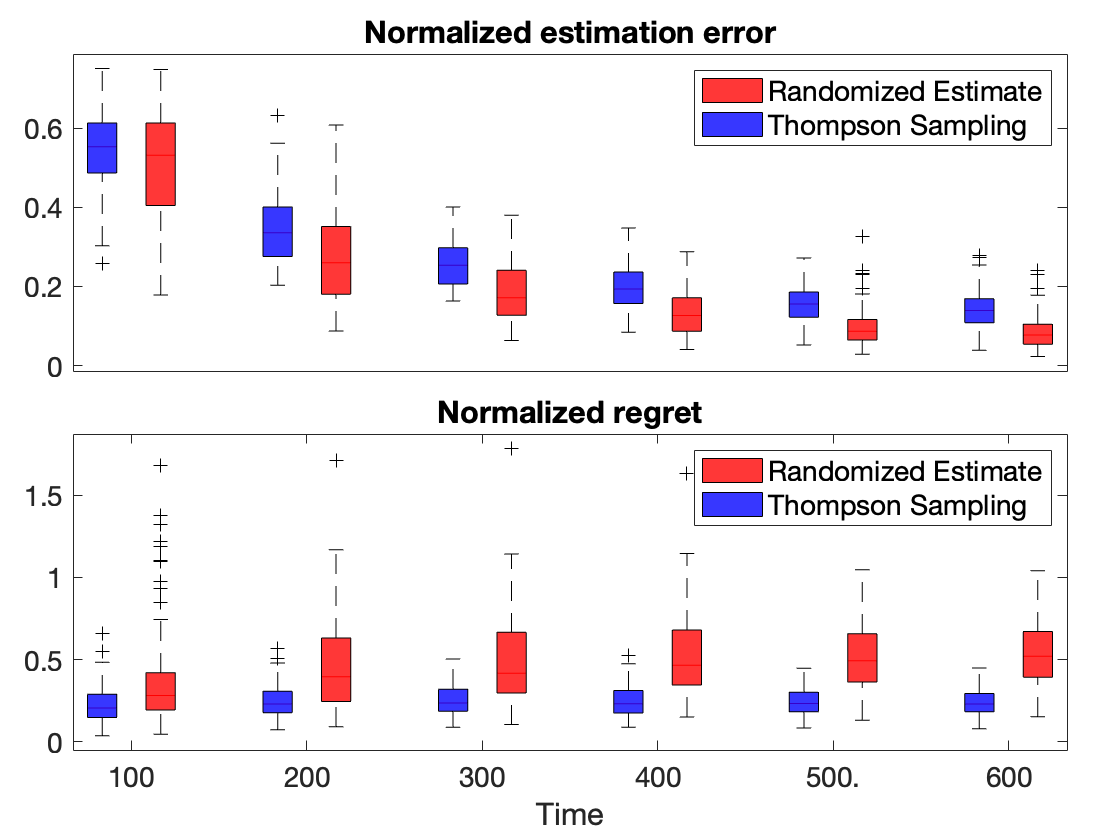}
    \end{minipage}
    \centering\includegraphics[width=.6\linewidth]{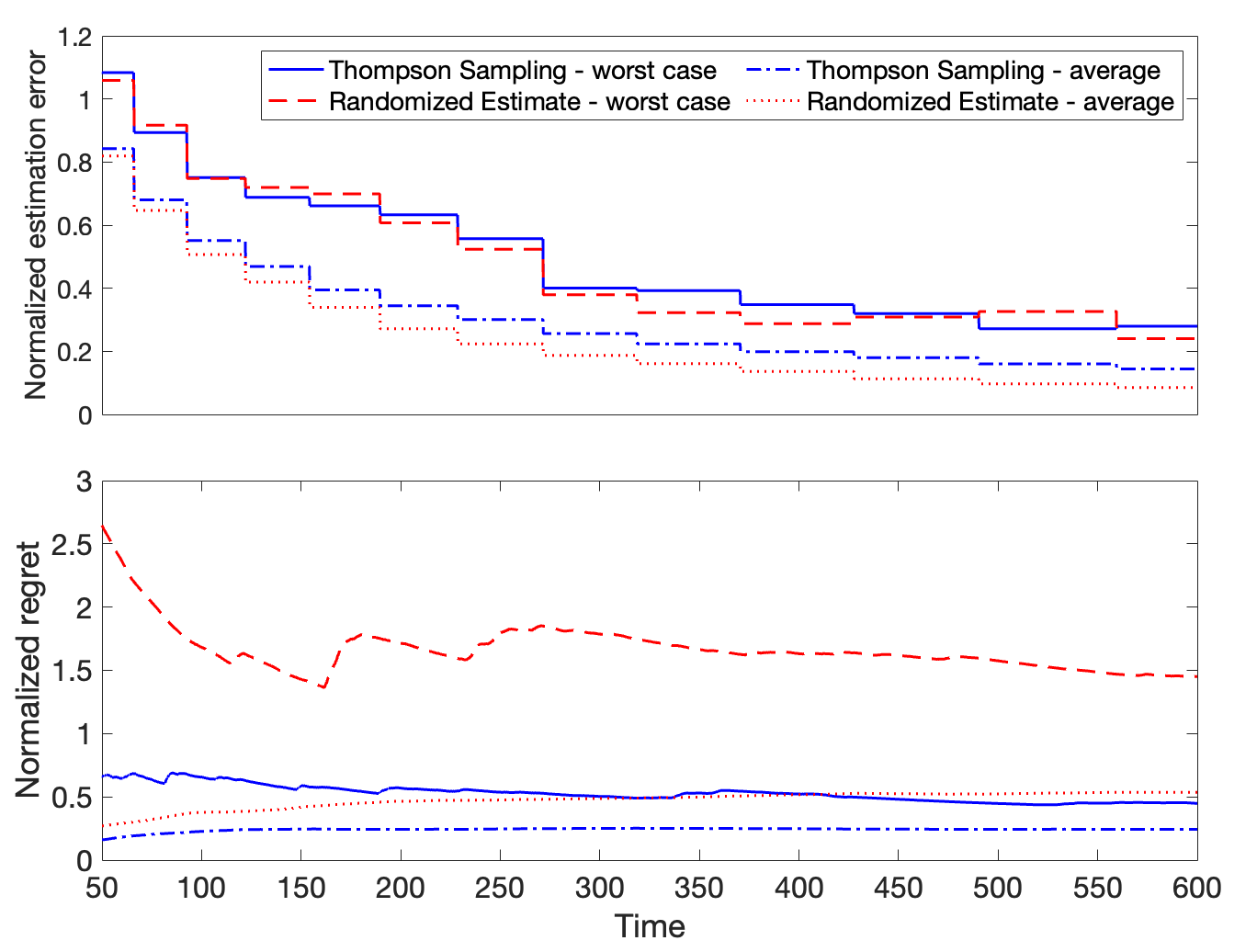}
    \caption{Analysis of X-29A airplane control system. \textbf{Left-top:} Percentage of successful system stabilization across 1000 runs of Algorithm~\ref{algo1}. \textbf{Right-top:} Comparative performance between Algorithm~\ref{algo2} (\textcolor{blue}{blue}) and Randomized Estimate policy (\textcolor{red}{red}). Upper panel shows normalized squared estimation error ($\Mnorm{\estpara{n}-\truth}{}^2$ divided by $\statedim (\statedim+\controldim) \episodetime{n}^{-1/2} \log \episodetime{n}$) at time points 100, 200, 300, 400, 500, and 600 seconds across 100 replications. Lower panel displays normalized regret ($\regret{T}{}$ divided by $\statedim (\statedim+\controldim) T^{1/2} \log T$). \textbf{Bottom:} Extended performance comparison between Algorithm~\ref{algo2} (\textcolor{blue}{blue}) and Randomized Estimate policy (\textcolor{red}{red}), with normalized squared estimation error (top) and normalized regret (bottom) for 100 replications.}
    \label{fig:X29}
\end{figure}
\end{samepage}

\subsubsection*{Boeing 747 Airplane Control.}
The Boeing 747 represents a commercial aviation system where operational efficiency and safety must be jointly optimized. The system dynamics are governed by the following drift matrices, with Algorithm~\ref{algo1} applied for $\episodetime{}$ varying from 10 to 45 and Algorithm~\ref{algo2} for an operational horizon of $50 \leq T \leq 600$: 
\begin{align*}
	\Amat{0} = \begin{bmatrix}
		-0.199\;\;\;    &0.003\;\;\;		&-0.980\;\;\;    &0.038
		\\
		-3.868\;\;\;    &-0.929\;\;\;		&0.471\;\;\;    &-0.008
		\\
		1.591\;\;\;    &-0.015\;\;\;		&-0.309\;\;\;    &0.003
		\\
		-0.198\;\;\;    &0.958\;\;\;		&0.021\;\;\;    &0.000
	\end{bmatrix}, ~~~~
	\Bmat{0} = \begin{bmatrix}
		-0.001\;\;\;    &0.058	\\
		0.296 \;\;\;    &0.153
		\\
		0.012\;\;\;    &-0.908
		\\
		0.015\;\;\;    &0.008
	\end{bmatrix}.
\end{align*}
\begin{figure}[ht]
    \centering
    \begin{minipage}[c]{0.45\linewidth}
        \centering
        \includegraphics[width=\linewidth]{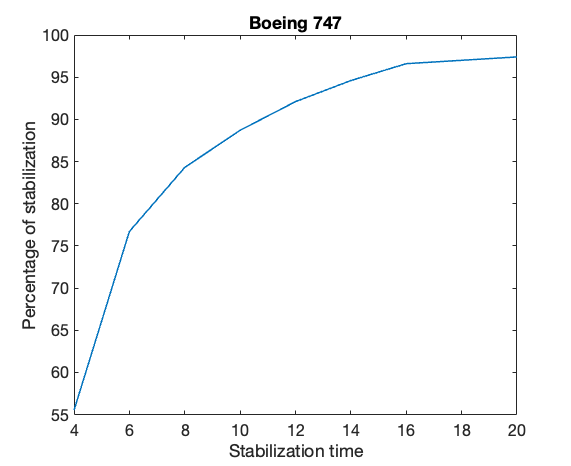}
    \end{minipage}%
    \hfill
    \begin{minipage}[c]{0.50\linewidth}
        \centering
        \includegraphics[width=\linewidth]{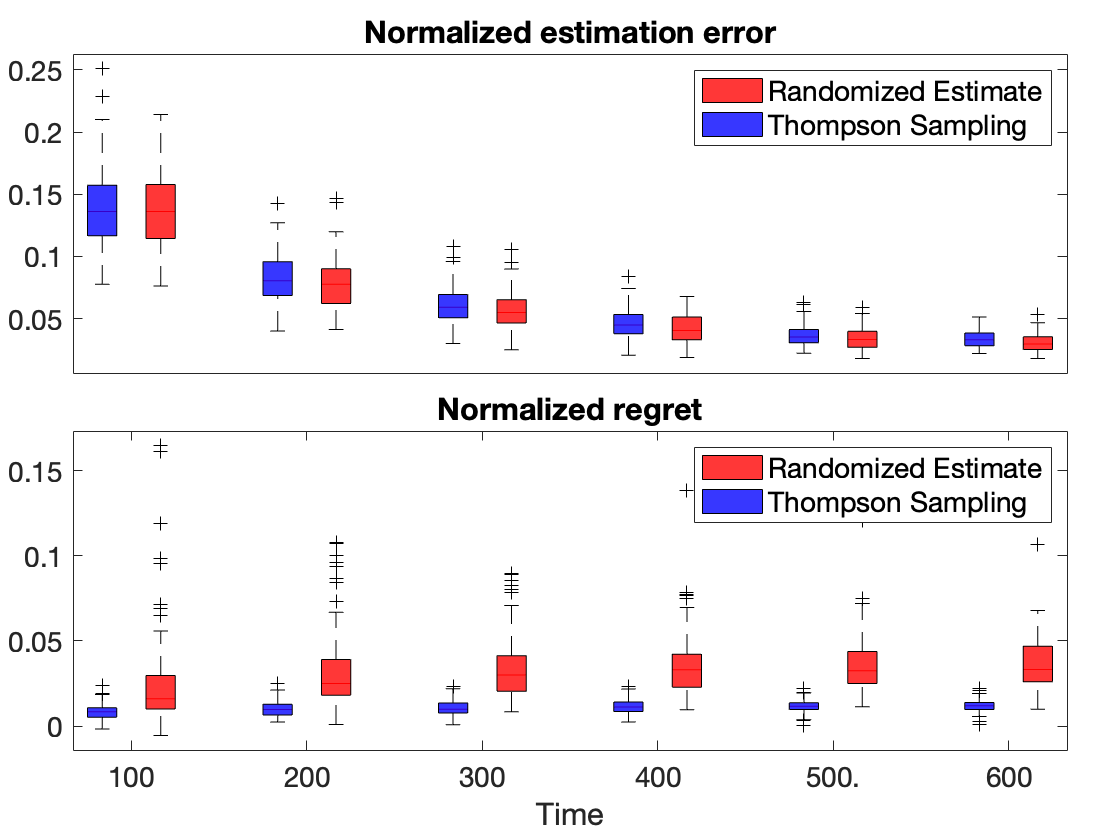}
    \end{minipage}
    \centering\includegraphics[width=.60\linewidth]{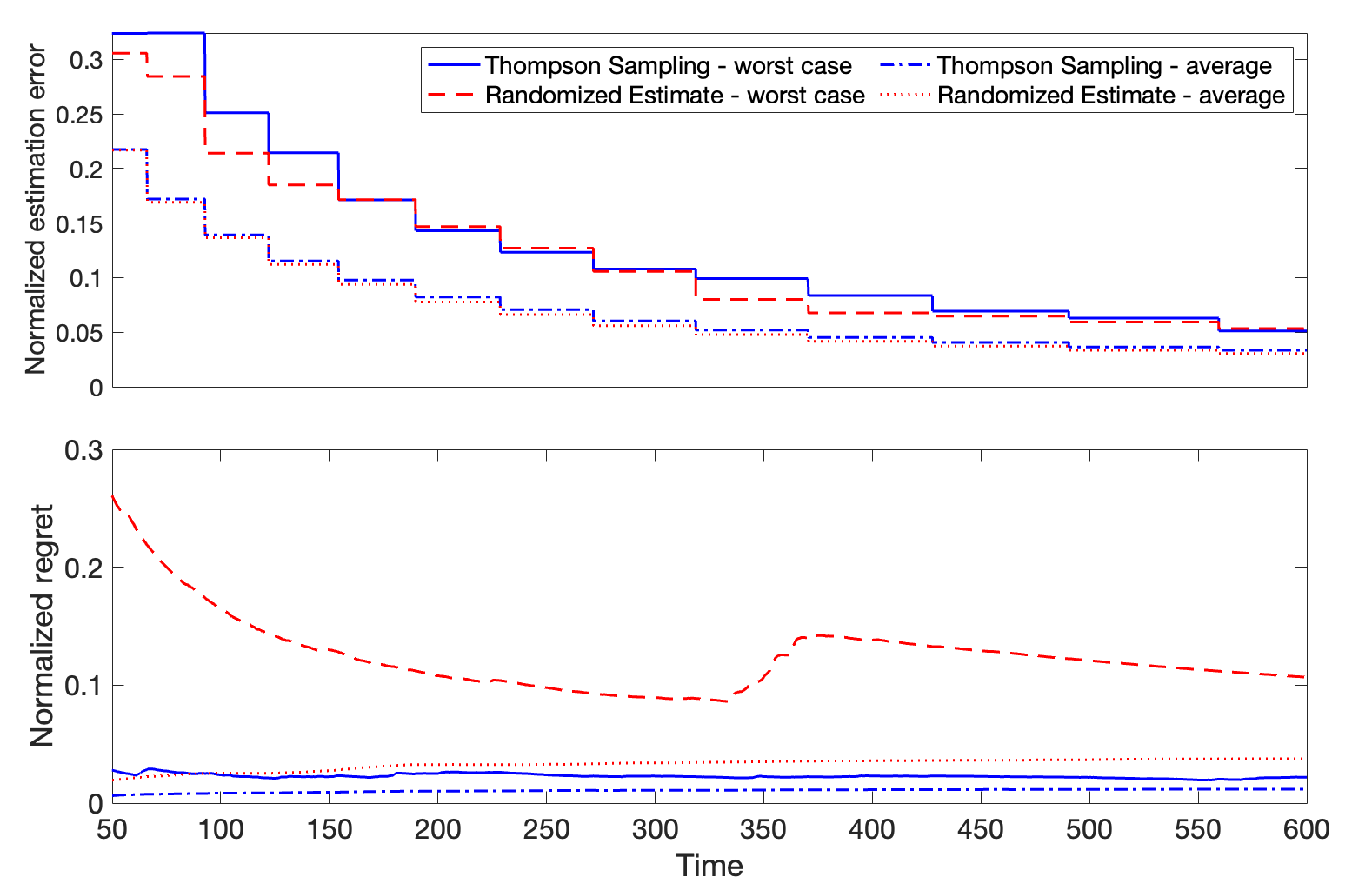}
    \caption{Analysis of Boeing 747 airplane control system. \textbf{Left-top:} Percentage of successful system stabilization across 1000 runs of Algorithm~\ref{algo1}. \textbf{Right-top:} Comparative performance between Algorithm~\ref{algo2} (\textcolor{blue}{blue}) and Randomized Estimate policy (\textcolor{red}{red}). Upper panel shows normalized squared estimation error ($\Mnorm{\estpara{n}-\truth}{}^2$ divided by $\statedim (\statedim+\controldim) \episodetime{n}^{-1/2} \log \episodetime{n}$) at time points 100, 200, 300, 400, 500, and 600 seconds across 100 replications. Lower panel displays normalized regret ($\regret{T}{}$ divided by $\statedim (\statedim+\controldim) T^{1/2} \log T$). \textbf{Bottom:} Extended performance comparison between Algorithm~\ref{algo2} (\textcolor{blue}{blue}) and Randomized Estimate policy (\textcolor{red}{red}), with normalized squared estimation error (top) and normalized regret (bottom) for 100 replications.}
    \label{fig:B747}
\end{figure}
\subsection{Results and Discussions}
Figures~\ref{fig:BGC}-\ref{fig:B747} depict the results of our simulation across all three settings, demonstrating several key insights across all three applications:

First, the probability of successful system stabilization increases exponentially with the running time of Algorithm~\ref{algo1}, confirming the theoretical guarantees established in Theorem~\ref{StabThm}. This exponential improvement in stabilization probability provides practical guidance for determining minimum operational time requirements in safety-critical applications.

Second, our comparative analysis reports both average- and worst-case performance metrics for estimation error and regret, normalized by their theoretical scaling with time and dimensionality as established in Theorem~\ref{RegretThm}. The results clearly demonstrate that Thompson sampling substantially outperforms the benchmark Randomized Estimate policy, particularly in worst-case scenarios. This suggests that our proposed approach explores the action space in a more robust fashion—a critical advantage in operational settings where reliability constraints dominate.

This performance advantage is consistent across all three systems. Notably, in the blood-glucose control application, the superior worst-case performance of our method has particularly significant implications, as healthcare operations typically prioritize predictable outcomes and worst-case guarantees over average-case performance. Similarly, the aircraft control applications benefit from improved worst-case guarantees, aligning with aviation safety requirements.

Overall, our empirical results validate our theoretical findings while demonstrating the practical advantages of TS for reinforcement learning in continuous-time stochastic control systems.

\section{Concluding Remarks}
\label{concluding_rem}
We studied TS reinforcement learning policies to control a diffusion process with unknown drift matrices. We first proposed a stabilization algorithm for linear diffusion processes and established that the failure probability of the algorithm decays exponentially with time. Furthermore, we demonstrated the efficiency of TS in balancing exploration versus exploitation for minimizing a quadratic cost function. More precisely, for Algorithm~\ref{algo2}, we established regret bounds growing as square-root of time and quadratically with dimensions. Our empirical studies across multiple applications—including aircraft control and blood-glucose regulation—demonstrate the superiority of TS over state-of-the-art methods.

Beyond the theoretical guarantees, our work also offers several insights for decision-making under uncertainty in continuous-time systems. The two-phase approach we present—first ensuring stability and then optimizing performance—provides a practical framework for implementing reinforcement learning in safety-critical operations where unbounded trajectories are unacceptable. Our analysis reveals a fundamental trade-off: systems with smaller stability margins (smaller $\zeta_0$) require significantly more exploration to learn to stabilize reliably, yet excessive exploration generates substantial operational costs. This trade-off is particularly relevant for practitioners, who must balance short-term performance with learning requirements. Notably, our findings suggest that the Bayesian approach of TS algorithm allows for more judicious exploration than traditional methods, resulting in substantially lower worst-case regret—a crucial consideration in operational settings where risk management is paramount.

As the first theoretical analysis of TS for control of continuous-time systems, this work opens several important directions for future research at the intersection of stochastic control and reinforcement learning. Establishing minimax regret lower-bounds for diffusion process control remains an open challenge that would provide valuable insights into the fundamental limitations of learning-based control methods. Additionally, extending TS for robust control of diffusion processes to simultaneously minimize cost functions across a family of drift matrices would address important practical concerns in operation contexts where model uncertainty is significant. Another promising research direction lies in analyzing the efficiency of TS for learning to control under partial observation scenarios, where the state is not directly observed but rather a noisy linear function of the state is available as the output signal—a common setting in many real-world operations and manufacturing systems.

\bibliographystyle{informs2014}
\bibliography{Ref}

\begin{APPENDICES}
	\newpage
\section*{Organization of Appendices}
This paper has three appendices.
Appendix~\ref{append1} contains the intermediate lemmas that are used in the proof of Theorem~\ref{StabThm} together with their proofs. Similarly, auxiliary results for the proof of Theorem~\ref{RegretThm} are presented in Appendix~\ref{append2}. Then, Appendix~\ref{append3} consists of statements and proofs of other technical results that are used for establishing both theorems.

\section{Auxiliary Lemmas in the Proof of Theorem~\ref{StabThm}} \label{append1} 

In the remainder of this section, the above-mentioned technical lemmas are stated and their proofs are provided in the corresponding subsections.

\subsection{Bounding cross products of state and randomization} 
\begin{deffn}
	For a set $\paraspace{}$, let $\indicator{\paraspace{}}$ be the indicator function that is $1$ on $\paraspace{}$, and vanishes outside of $\paraspace{}$.
\end{deffn}
\begin{lemm} \label{CrossTermLem1}
	In Algorithm~\ref{algo1}, for $t \geq 0$, define the piecewise-constant signal $v(t)$ below according to the randomization sequence $\dither{n}$:
	\begin{equation} \label{DitherTermDef}
	v(t)= \sum\limits_{n=0}^{\stabstep-1} \indicator{\frac{n\episodetime{}}{\stabstep} \leq t < \frac{(n+1)\episodetime{}}{\stabstep}} \dither{n}.
	\end{equation}
	Then, with probability at least $1-\delta$, we have
	\begin{eqnarray*}	
	&& {\Mnorm{ \itointeg{0}{\episodetime{}}{\state{s}v(s)^{\top} }{s} - \frac{\episodetime{}^2}{2 \stabstep^2} \Bmat{0} \sum\limits_{n=0}^{\stabstep-1} \dither{n} \dither{n}^{\top} }{2}} \\
	&\lesssim& {\left(\dithercoeff{}^2 + \eigmax{\BMcov{}}\right) \left( 1+ \itointeg{0}{\episodetime{}}{ \Mnorm{e^{\CLmat{}t}}{2} }{t} \right)}  \left( \statedim \controldim^{1/2} \episodetime{}^{1/2} \log \frac{\statedim \controldim}{\delta} + \controldim \left[ 1+ \frac{\episodetime{}^2}{ \stabstep^{2}} \right] \frac{\episodetime{}}{ \stabstep^{1/2}}  \log^{3/2} \frac{\stabstep \controldim}{\delta} \right).
	\end{eqnarray*}
\end{lemm}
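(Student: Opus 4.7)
The plan is to apply the Duhamel/variation-of-constants formula to the closed-loop SDE $\diff \state{t} = \CLmat{}\state{t} \diff t + \Bmat{0} v(t) \diff t + \diff \BM{t}$, which gives
\begin{equation*}
\state{t} = e^{\CLmat{}t}\state{0} + \itointeg{0}{t}{e^{\CLmat{}(t-s)} \Bmat{0} v(s)}{s} + \itointeg{0}{t}{e^{\CLmat{}(t-s)}}{\BM{s}}.
\end{equation*}
Substituting this into $\itointeg{0}{\episodetime{}}{\state{s} v(s)^\top}{s}$ splits the object of interest into three pieces: a transient term linear in $\state{0}$, a dither auto-interaction driven by $v$ alone, and a Wiener--dither cross-term. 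The identified main term $\frac{\episodetime{}^2}{2\stabstep^2}\Bmat{0}\sum_n \dither{n}\dither{n}^\top$ comes entirely from the dither piece; the remaining contributions must be fit into the two parenthesized error groups on the right of the stated inequality.

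For the dither piece I would expand the double integral over pairs of sub-intervals $(m,n)$ with $m \leq n$, and split out the diagonal $m = n$ contribution. On each such sub-interval the identity $e^{\CLmat{}(t-s)} = I + (e^{\CLmat{}(t-s)} - I)$, followed by exact integration of $I$ over the triangle $n\episodetime{}/\stabstep \leq s \leq t \leq (n+1)\episodetime{}/\stabstep$, produces precisely $\frac{\episodetime{}^2}{2\stabstep^2}\Bmat{0}\dither{n}\dither{n}^\top$. The leftover terms (both the diagonal remainder from $e^{\CLmat{}(t-s)} - I$ and all off-diagonal $m<n$ contributions) can be written as $\Bmat{0}\sum_{m\leq n} M_{n,m} \dither{m}\dither{n}^\top$ with deterministic matrices $M_{n,m}$ whose operator norms are controlled by $(\episodetime{}/\stabstep)\itointeg{0}{\episodetime{}}{\Mnorm{e^{\CLmat{}t}}{2}}{t}$. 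Since the $\dither{n}$'s are independent Gaussians with variance $\dithercoeff{}^2$, a Hanson--Wright / matrix Bernstein argument applied to this sum of independent rank-one outer products delivers the claimed $\controldim(1 + \episodetime{}^2\stabstep^{-2})\episodetime{}\stabstep^{-1/2}\log^{3/2}(\stabstep\controldim/\delta)$ contribution (weighted by $\dithercoeff{}^2$ and the semigroup integral).

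For the initial-condition piece I would write $\itointeg{0}{\episodetime{}}{e^{\CLmat{}t}\state{0} v(t)^\top}{t} = \sum_n \bigl(\itointeg{n\episodetime{}/\stabstep}{(n+1)\episodetime{}/\stabstep}{e^{\CLmat{}t}}{t}\bigr)\state{0}\dither{n}^\top$ and bound it via Gaussian tail bounds for $\sum_n a_n \dither{n}^\top$ applied entrywise; the Gaussian norm of $\state{0}$ is controlled by the stationary Ornstein--Uhlenbeck bound and produces the $\eigmax{\BMcov{}}$ prefactor. The Wiener--dither cross-term
\begin{equation*}
\itointeg{0}{\episodetime{}}{\Bigl[\itointeg{0}{t}{e^{\CLmat{}(t-s)}}{\BM{s}}\Bigr] v(t)^\top}{t}
\end{equation*}
is a double stochastic integral (or, after swapping the order of integration on each sub-interval, a sum over $n$ of Wiener integrals contracted against the constant $\dither{n}$). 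For this I would invoke the sub-exponential tail bounds of~\cite{cheridito2005small} for double stochastic integrals, together with the quadratic-form bounds of~\cite{laurent2000adaptive} and a union bound over the $\statedim\controldim$ matrix entries, producing the $\statedim\controldim^{1/2}\episodetime{}^{1/2}\log(\statedim\controldim/\delta)$ factor weighted by $(\dithercoeff{}^2 + \eigmax{\BMcov{}})(1 + \itointeg{0}{\episodetime{}}{\Mnorm{e^{\CLmat{}t}}{2}}{t})$.

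The principal obstacle is the dither--dither piece: a naive triangle-inequality bound on the off-diagonal $m \neq n$ double sum would cost a factor of $\stabstep$ rather than $\stabstep^{1/2}$ and would destroy the stated scaling. One must genuinely exploit independence of $\{\dither{n}\}_n$ across sub-intervals through a matrix-concentration step. A subsidiary technicality is that the semigroup must enter through the time-averaged quantity $1 + \itointeg{0}{\episodetime{}}{\Mnorm{e^{\CLmat{}t}}{2}}{t}$, never through $\sup_t\Mnorm{e^{\CLmat{}t}}{2}$, so that the bound remains usable when $\CLmat{}$ is only marginally stable; this requires careful placement of operator norms when controlling the deterministic coefficient matrices $M_{n,m}$.
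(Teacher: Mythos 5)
Your plan follows essentially the same route as the paper's proof: the same variation-of-constants decomposition into initial-condition, Wiener--dither, and dither--dither pieces, extraction of the main term $\frac{\episodetime{}^2}{2\stabstep^2}\Bmat{0}\sum_{n}\dither{n}\dither{n}^{\top}$ from the diagonal of the dither--dither double sum, and a matrix-concentration step exploiting independence of the $\dither{n}$ to get the $\stabstep^{1/2}$ (rather than $\stabstep$) scaling of the off-diagonal part, which is exactly the obstacle you flag. The only differences are implementation details (the paper uses a matrix Azuma/martingale-difference inequality where you cite Hanson--Wright/matrix Bernstein, and it handles the Wiener--dither cross term by conditioning on the Wiener path and a Toeplitz operator-norm bound rather than quoting double-stochastic-integral tail bounds directly), so the proposal is sound.
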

\begin{prf}
	First, after plugging the control signal $\action{t}$ in \eqref{dynamics} and solving the resulting stochastic differential equation, we obtain
	\begin{equation} \label{GeneralOUEq}
	\state{t}=e^{\CLmat{}t} \state{0} + \itointeg{0}{t}{e^{\CLmat{}(t-s)}}{\BM{s}} + \itointeg{0}{t}{e^{\CLmat{}(t-s)} \Bmat{0}v(s) }{s},
	\end{equation}
	where $\CLmat{}=\Amat{0}+\Bmat{0}\Gainmat{}$ is the closed-loop transition matrix. This implies that
	\begin{equation*}
		\itointeg{0}{\episodetime{}}{\state{t}v(t)^\top}{t} = \randommatrix{1}+ \randommatrix{2} + \randommatrix{3},
	\end{equation*}
	where the three terms on the right-hand-side above are as follows:
	\begin{eqnarray*}
		\randommatrix{1} &=& \itointeg{0}{\episodetime{}}{ e^{\CLmat{}t} \state{0} v(t)^\top}{t} = \sum\limits_{n=0}^{\stabstep-1} \left( \itointeg{n \episodetime{} \stabstep^{-1}}{(n+1)\episodetime{} \stabstep^{-1}}{ e^{\CLmat{}t} }{t} \right) \state{0} \dither{n}^\top ,\\
		\randommatrix{2} &=& \itointeg{0}{\episodetime{}}{ \itointeg{0}{t}{e^{\CLmat{}(t-s)}}{\BM{s}} v(t)^\top}{t} = \sum\limits_{n=0}^{\stabstep-1} \left( \itointeg{n \episodetime{} \stabstep^{-1}}{(n+1)\episodetime{} \stabstep^{-1}}{ \itointeg{0}{t}{e^{\CLmat{}(t-s)}}{\BM{s}}}{t} \right) \dither{n}^\top ,\\
		\randommatrix{3} &=& \itointeg{0}{\episodetime{}}{\itointeg{0}{t}{e^{\CLmat{}(t-s)} \Bmat{0}v(s)~ }{s} ~v(t)^\top}{t}.
	\end{eqnarray*}
	Next, we focus on bounding these three matrices. To analyze $\randommatrix{1}$, we use the fact that every entry of $\randommatrix{1}$ is a normal random variable with mean zero, while the variance is at most
	\begin{equation*}
	\dithercoeff{}^2 \sum\limits_{n=0}^{\stabstep-1} \Mnorm{\itointeg{n \episodetime{} \stabstep^{-1}}{(n+1)\episodetime{} \stabstep^{-1}}{ e^{\CLmat{}t} }{t}}{2}^2 \norm{\state{0}}{2}^2 \leq \dithercoeff{}^2 \left(\itointeg{0}{\episodetime{}}{\Mnorm{e^{\CLmat{}t}}{2} }{t} \right)^2 \norm{\state{0}}{2}^2.
	\end{equation*}
	Therefore, with probability at least $1-\delta$, it holds that
	\begin{equation} \label{CrossTermLemAuxEq1}
		\Mnorm{\randommatrix{1}}{2} \lesssim \dithercoeff{} \left(\itointeg{0}{\episodetime{}}{\Mnorm{e^{\CLmat{}t}}{2} }{t} \right) \norm{\state{0}}{2} \sqrt{\statedim \controldim \log \left( \frac{\statedim \controldim}{\delta}\right)}.
	\end{equation}
	
	Now, to study $\randommatrix{2}$, Fubini Theorem~\citep{karatzas2012brownian} gives
	\begin{eqnarray*}
		\itointeg{n \episodetime{} \stabstep^{-1}}{(n+1)\episodetime{} \stabstep^{-1}}{ \itointeg{0}{t}{e^{\CLmat{}(t-s)}}{\BM{s}}}{t} &=&
		\itointeg{0}{n \episodetime{} \stabstep^{-1}}{ \left( \itointeg{n \episodetime{} \stabstep^{-1}}{(n+1)\episodetime{} \stabstep^{-1}}{e^{\CLmat{}(t-n \episodetime{} \stabstep^{-1})} }{t} \right) e^{\CLmat{}(n \episodetime{} \stabstep^{-1}-s)} }{\BM{s}} \\
		&+&
		\itointeg{n \episodetime{} \stabstep^{-1}}{(n+1)\episodetime{} \stabstep^{-1}}{ \left( \itointeg{s}{(n+1)\episodetime{} \stabstep^{-1}}{ e^{\CLmat{}(t-s)}}{t} \right) }{ \BM{s}}.
	\end{eqnarray*}
	To proceed, define the the matrix 
	$$F =\itointeg{n \episodetime{} \stabstep^{-1}}{(n+1)\episodetime{} \stabstep^{-1}}{e^{\CLmat{}(t-n \episodetime{} \stabstep^{-1})} }{t} =\itointeg{0}{\episodetime{} \stabstep^{-1}}{e^{\CLmat{}t} }{t}$$ 
	that does not depend on $s$ or $n$, as well as the matrix 
	$$G_s=\itointeg{s}{(n+1)\episodetime{} \stabstep^{-1}}{ e^{\CLmat{}(t-s)}}{t}$$ 
	for $n \episodetime{}\stabstep^{-1} \leq s \leq (n+1) \episodetime{} \stabstep^{-1}$, i.e., $G_s$ does not depend on $n$. Therefore, using $F, G_s$ we can write
\begin{eqnarray*}
	\itointeg{n \episodetime{} \stabstep^{-1}}{(n+1)\episodetime{} \stabstep^{-1}}{ \itointeg{0}{t}{e^{\CLmat{}(t-s)}}{\BM{s}}}{t} 
	&=&  F \sum\limits_{m=1}^{n} e^{\CLmat{}( n -m )  \episodetime{} \stabstep^{-1} } \itointeg{ (m-1) \episodetime{} \stabstep^{-1}}{m \episodetime{} \stabstep^{-1}}{ e^{\CLmat{}( m \episodetime{} \stabstep^{-1}-s)} }{\BM{s}} \\
	&+&
	\itointeg{n \episodetime{} \stabstep^{-1}}{(n+1)\episodetime{} \stabstep^{-1}}{ G_s }{ \BM{s}}.
	\end{eqnarray*}
	
	So, letting $\basis{i}$ for $i=1, \cdots, \statedim$ be the standard basis of the Euclidean space $\R^{\statedim}$, conditioned on the Wiener process $\left\{ \BM{s} \right\}_{s \geq 0}$, for every $j=1,\cdots, \controldim$, the coordinate $j$ of $\basis{i}^\top \randommatrix{2}$ is a mean zero normal random variable. Thus, given $\left\{\BM{s}\right\}_{s \geq 0}$, with probability at least $1-\delta$, it holds that
	\begin{equation*}
	\left(\basis{i}^\top \randommatrix{2} \basis{j}\right)^2 \lesssim \mathrm{var \left( \basis{i}^\top \randommatrix{2} \basis{j} \Big| \sigfield{\BM{0:\episodetime{}}} \right) } \log \frac{1}{\delta}.
	\end{equation*}
	
	Now, to calculate the conditional variance, we can write
	\begin{equation*}
		\frac{\mathrm{var \left( \basis{i}^\top \randommatrix{2} \basis{j} \Big| \sigfield{\BM{0:\episodetime{}}} \right) }}{\dithercoeff{}^2} = \sum\limits_{n=0}^{\stabstep-1} \left[ \basis{i}^\top \itointeg{n \episodetime{} \stabstep^{-1}}{(n+1)\episodetime{} \stabstep^{-1}}{ \itointeg{0}{t}{e^{\CLmat{}(t-s)}}{\BM{s}}}{t} \right]^2 \lesssim \sum\limits_{n=1}^{\stabstep-1} \left[\left( \sum\limits_{m=1}^n \regterm{m,n} \right)^2 + \ssconstant_n^2\right],
	\end{equation*}
	where
	\begin{eqnarray*}
		\regterm{m,n} &=& \basis{i}^\top F e^{\CLmat{}( n -m )  \episodetime{} \stabstep^{-1} } \itointeg{ (m-1) \episodetime{} \stabstep^{-1}}{m \episodetime{} \stabstep^{-1}}{ e^{\CLmat{}( m \episodetime{} \stabstep^{-1}-s)} }{\BM{s}} , \\
		\ssconstant_n &=& \basis{i}^\top \itointeg{n \episodetime{} \stabstep^{-1}}{(n+1)\episodetime{} \stabstep^{-1}}{ G_s }{ \BM{s}}.
 	\end{eqnarray*}
 	To proceed, define the matrix $H = \left[ H_{n,m} \right]$, where for $1 \leq m,n \leq \stabstep-1$, every block $H_{n,m} \in \R^{1 \times \statedim}$ is
 	\begin{equation*} \label{ToeplitzMatrixDefEq}
	 	H_{n,m} = \basis{i}^\top F e^{\CLmat{}( n -m )  \episodetime{} \stabstep^{-1} },
 	\end{equation*}
 	for $m \leq n$, and is $0$ for $m >n$. Then, denote
 	\begin{equation*}
	 	\Gamma = \begin{bmatrix}
	 		\itointeg{ 0 }{ \episodetime{} \stabstep^{-1}}{ e^{\CLmat{}( \episodetime{} \stabstep^{-1}-s)} }{\BM{s}} \\
	 		\itointeg{ \episodetime{} \stabstep^{-1}}{2 \episodetime{} \stabstep^{-1}}{ e^{\CLmat{}( 2\episodetime{} \stabstep^{-1}-s)} }{\BM{s}} \\
	 		\vdots \\
	 		\itointeg{ (\stabstep-1) \episodetime{} \stabstep^{-1}}{ \episodetime{} }{ e^{\CLmat{}( \episodetime{}-s)} }{\BM{s}}
	 	\end{bmatrix} \in \R^{\statedim (\stabstep-1) \times 1 },
 	\end{equation*}
 	to get
 	\begin{equation*}
	 	\sum\limits_{n=0}^{\stabstep-1} \left( \sum\limits_{m=1}^n \regterm{m,n}^2 \right) = \norm{H \Gamma}{2}^2 \leq \eigmax{H^\top H} \norm{\Gamma}{2}^2.
 	\end{equation*}
 	Now, for the matrix $H$, we have~\citep{hartman1954spectra,reichel1992eigenvalues}:
 	\begin{equation*} \label{ToeplitzMatrixIneq}
	 	\eigmax{H^\top H} \lesssim \left( \sum\limits_{n=1}^{\stabstep-1} \norm{H_{n,1}}{2} \right)^2 \lesssim \left( \episodetime{}\stabstep^{-1} \sum\limits_{n=1}^{\stabstep} e^{\CLmat{}n \episodetime{}\stabstep^{-1} } \right)^2 \lesssim \left(\itointeg{0}{\episodetime{}}{\Mnorm{ e^{\CLmat{}t} }{2} }{t}\right)^2.
 	\end{equation*}
 	Note that thanks to the independent increments of the Wiener process, the blocks of $\Gamma$ are statistically independent. Further, by Ito Isometry~\citep{karatzas2012brownian}, every block of $\Gamma$ is a mean-zero normally distributed vector with the covariance matrix
 	\begin{equation*}
 		\itointeg{ 0 }{ \episodetime{} \stabstep^{-1}}{ e^{\CLmat{}( \episodetime{} \stabstep^{-1}-s)} \BMcov{} e^{\CLmat{}^\top ( \episodetime{} \stabstep^{-1}-s)} }{s}.
 	\end{equation*}
 	So, according to the exponential inequalities for quadratic forms of normally distributed random variables~\citep{laurent2000adaptive}, it holds with probability at least $1-\delta$, that
 	\begin{equation*}
	 	\norm{\Gamma}{2}^2 \lesssim \statedim \stabstep \eigmax{\BMcov{}} \left( \episodetime{} \stabstep^{-1} \right) \log \frac{1}{\delta}.
 	\end{equation*}
 	Thus, with probability at least $1-\delta$, we have
 	\begin{equation*}
	 	\sum\limits_{n=0}^{\stabstep-1} \left( \sum\limits_{m=1}^n \regterm{m,n}^2 \right) \lesssim \left(\itointeg{0}{\episodetime{}}{\Mnorm{ e^{\CLmat{}t} }{2} }{t}\right)^2 \statedim \eigmax{\BMcov{}} \episodetime{} \log \frac{1}{\delta}.
 	\end{equation*}
 	Similarly, the bound above can be shown for $\sum\limits_{n=1}^{\stabstep-1} \ssconstant_n^2$. Hence, we obtain the corresponding high probability bound for a single entry $\basis{i}^\top \randommatrix{2} \basis{j}$ of $\randommatrix{2}$, which together with a union bound, implies that
 	\begin{equation} \label{CrossTermLemAuxEq2}
	 	\Mnorm{\randommatrix{2}}{2} \lesssim \dithercoeff{} \statedim \controldim^{1/2} \left(\itointeg{0}{\episodetime{}}{\Mnorm{ e^{\CLmat{}t} }{2} }{t}\right) \eigmax{\BMcov{}}^{1/2} \episodetime{}^{1/2} \log \left(\frac{\statedim \controldim }{\delta}\right),
 	\end{equation}
 	with probability at least $1-\delta$.
 	
	Next, according to Fubini Theorem, $\randommatrix{3}$ can also be written as
	\begin{eqnarray*}
		\randommatrix{3} &=& \itointeg{0}{\episodetime{}}{\itointeg{0}{s}{e^{\CLmat{}(s-t)} \Bmat{0}v(t) v(s)^\top }{t} }{s} = \itointeg{0}{\episodetime{}}{\itointeg{t}{\episodetime{}}{e^{\CLmat{}(s-t)} \Bmat{0}v(t) v(s)^\top }{s} }{t}.
	\end{eqnarray*}
	Thus, we have
	\begin{equation*}
		2 \randommatrix{3}  = \itointeg{0}{\episodetime{}}{\itointeg{0}{\episodetime{}}{e^{\CLmat{}\left| t-s \right| } \Bmat{0} v(t \wedge s) v(s \vee t)^\top }{t} }{s}.
	\end{equation*}
	Recall that the signal $v(t)$ in \eqref{DitherTermDef} is piecewise-constant, and the values of this signal are determined by the randomization sequence $\dither{n}$. So, the above double integral can be written as a double sum
	\begin{eqnarray*}
		2 \randommatrix{3} &=& \sum\limits_{n=0}^{\stabstep-1} \sum\limits_{m=0}^{\stabstep-1} \left(\itointeg{n \episodetime{} \stabstep^{-1}}{(n+1)\episodetime{} \stabstep^{-1}}{ \itointeg{m \episodetime{} \stabstep^{-1}}{(m+1)\episodetime{} \stabstep^{-1}}{ e^{\CLmat{}\left| t-s \right| }  }{s}  }{t}\right) \Bmat{0} \dither{m \wedge n } \dither{m \vee n}^{\top} \\
		&=& \sum\limits_{n=0}^{\stabstep-1} \sum\limits_{m=0}^{\stabstep-1} \left( e^{\CLmat{}\left| m-n \right| \episodetime{}\stabstep^{-1} } \itointeg{0}{\episodetime{} \stabstep^{-1}}{ \itointeg{0}{\episodetime{} \stabstep^{-1}}{ e^{\CLmat{}\left| t-s \right| }  }{s}  }{t}\right) \Bmat{0} \dither{m \wedge n } \dither{m \vee n}^{\top}.
	\end{eqnarray*}
 	Thus, we have
 	\begin{equation} \label{CrossTermLemAuxEq3}
  		2 \randommatrix{3} - \frac{\episodetime{}^2}{ \stabstep^2} \Bmat{0} \sum\limits_{n=0}^{\stabstep-1} \dither{n} \dither{n}^{\top} = \randommatrix{4} + \randommatrix{5} ,
 	\end{equation}
 	for
 	\begin{eqnarray*}
	 	\randommatrix{4} &=& \left( \itointeg{0}{\episodetime{} \stabstep^{-1}}{ \itointeg{0}{\episodetime{} \stabstep^{-1}}{ e^{\CLmat{}\left| t-s \right| }  }{s}  }{t} - \episodetime{}^2 \stabstep^{-2} I_{\controldim} \right) \Bmat{0} \sum\limits_{n=0}^{\stabstep-1} \dither{ n } \dither{ n }^{\top} , \\
	 	\randommatrix{5} &=& 2 \left( \itointeg{0}{\episodetime{} \stabstep^{-1}}{ \itointeg{0}{\episodetime{} \stabstep^{-1}}{ e^{\CLmat{}\left| t-s \right| }  }{s}  }{t} \right) \sum\limits_{n=0}^{\stabstep-1} \sum\limits_{m=n+1}^{\stabstep-1} \left( e^{\CLmat{}\left( m-n \right) \episodetime{}\stabstep^{-1} } \Bmat{0} \dither{ n } \dither{ m }^{\top} \right).
 	\end{eqnarray*}
 	To proceed, we use the following concentration inequality for random matrices with martingale difference structures, titled as Matrix Azuma inequality~\citep{tropp2012user}.
 	\begin{thrm} \label{Azuma}
 		Let $\left\{ \RandMat{n} \right\}_{n=1}^k$ be a $d_1 \times d_2$ martingale difference sequence. That is, for some filtration $\left\{\filter{n} \right\}_{n=0}^k$, the matrix $\RandMat{n}$ is $\filter{n}$-measurable, and $\E{\RandMat{n}\Big| \filter{n-1}}=0$. Suppose that $\Mnorm{\RandMat{n}}{2} \leq \sigma_n$, for some fixed sequence $\left\{ \sigma_n \right\}_{n=1}^k$. Then, with probability at least $1-\delta$, we have
 		\begin{equation*}
 		\Mnorm{\sum\limits_{n=1}^{k}\RandMat{n}}{}^2 \lesssim \left( \sum\limits_{n=1}^k \sigma_n^2 \right) \log \frac{d_1+d_2}{\delta}.
 		\end{equation*}
 	\end{thrm}
 	
 	So, to study $\randommatrix{4}$, we apply Theorem~\ref{Azuma} to the random matrices $\RandMat{n} = \dither{n}\dither{n}^\top - \dithercoeff{}^2 I_{\controldim}$, using the trivial filtration and the high probability upper-bounds for $\Mnorm{\RandMat{n}}{2} \leq \norm{\dither{n}}{2}^2 + \dithercoeff{}^2$;
 	\begin{equation*}
	 	\Mnorm{\RandMat{n}}{2} \leq \sigma_n = \dithercoeff{}^2 \left( 1 + \controldim \log \frac{\controldim \stabstep }{\delta} \right),
 	\end{equation*}
 	as well as the fact
 	\begin{equation*}
 		\Mnorm{\itointeg{0}{\episodetime{} \stabstep^{-1}}{ \itointeg{0}{\episodetime{} \stabstep^{-1}}{ \left(e^{\CLmat{}\left| t-s \right| } - I_{\controldim}\right)  }{s}  }{t}}{2} \lesssim \episodetime{}^3 \stabstep^{-3},
 	\end{equation*}
 	to obtain the following bound, which holds with probability at least $1-\delta$:
 	\begin{equation} \label{CrossTermLemAuxEq4}
	 	\Mnorm{\randommatrix{4}}{2} \lesssim \Mnorm{\Bmat{0}}{2} \dithercoeff{}^2 \episodetime{}^3 \stabstep^{-2} \left( 1 + \frac{\controldim}{\stabstep^{1/2}} \log^{3/2} \frac{\stabstep \controldim}{\delta} \right) .
 	\end{equation}

 	On the other hand, to establish an upper-bound for $\randommatrix{5}$, consider the random matrices
 	\begin{equation*}
	 	\RandMat{n}= \sum\limits_{m=n+1}^{\stabstep-1} \left( e^{\CLmat{}\left( m-n \right) \episodetime{}\stabstep^{-1} } \Bmat{0} \dither{ n } \dither{ m }^{\top} \right),
 	\end{equation*}
 	subject to the natural filtration they generate, and apply Theorem~\ref{Azuma}, using the bounds
 	\begin{equation*}
	 	\Mnorm{\RandMat{n}}{2} \le \sigma_n \lesssim \episodetime{}^{-1} \stabstep \left( \itointeg{0}{\episodetime{}}{ \Mnorm{e^{\CLmat{}t}}{2} }{t} \right) \Mnorm{\Bmat{0}}{2} \dithercoeff{}^2 \controldim \log \frac{\stabstep \controldim}{\delta},
 	\end{equation*}
 	together with
 	\begin{equation*}
 	\Mnorm{\itointeg{0}{\episodetime{} \stabstep^{-1}}{ \itointeg{0}{\episodetime{} \stabstep^{-1}}{ e^{\CLmat{}\left| t-s \right| }  }{s}  }{t}}{2} \lesssim \episodetime{}^2 \stabstep^{-2}.
 	\end{equation*}
 	Therefore, Theorem~\ref{Azuma} indicates that with probability at least $1-\delta$, it holds that
 	\begin{equation} \label{CrossTermLemAuxEq5}
	 	\randommatrix{5} \lesssim \frac{\episodetime{}}{\stabstep^{1/2}} \left( \itointeg{0}{\episodetime{}}{ \Mnorm{e^{\CLmat{}t}}{2} }{t} \right) \Mnorm{\Bmat{0}}{2} \dithercoeff{}^2 \controldim \log^{3/2} \frac{\stabstep \controldim}{\delta}.
 	\end{equation}
 	
 	Now, plug in the inequalities in \eqref{CrossTermLemAuxEq4} and \eqref{CrossTermLemAuxEq5} that bound $\randommatrix{4}, \randommatrix{5}$ into the equation in \eqref{CrossTermLemAuxEq3}, to obtain a high probability concentration inequality for $\randommatrix{3}$. The latter result together with \eqref{CrossTermLemAuxEq1}, \eqref{CrossTermLemAuxEq2} that provide the corresponding upper-bounds for $\randommatrix{1}, \randommatrix{2}$, yield to the desired result.
 	
\end{prf}

\subsection{Bounding cross products of state and Wiener process}
\begin{lemm} \label{CrossTermLem2}
	In Algorithm~\ref{algo1}, with probability at least $1-\delta$, we have
	\begin{equation*}
	\Mnorm{\itointeg{0}{t}{\state{s}}{\BM{s}^{\top}}}{2}
	\lesssim \left(\itointeg{0}{\episodetime{}}{\Mnorm{ e^{\CLmat{}t} }{2} }{t}\right) \left( \eigmax{\BMcov{}} + \dithercoeff{}^2 \right) \statedim \left( \statedim+ \controldim \right)^{1/2} \episodetime{}^{1/2} \log \left(\frac{\statedim \controldim }{\delta}\right).
	\end{equation*}
\end{lemm}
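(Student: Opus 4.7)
The plan is to exploit the explicit Ornstein--Uhlenbeck representation~\eqref{GeneralOUEq} of $\state{t}$ under Algorithm~\ref{algo1} to split the cross product into three pieces that can be controlled by qualitatively different tail bounds, and then to combine them by a union bound. Concretely, I would write
\begin{equation*}
\itointeg{0}{\episodetime{}}{\state{s}}{\BM{s}^\top} = \RandMat{1} + \RandMat{2} + \RandMat{3},
\end{equation*}
where $\RandMat{1} = \itointeg{0}{\episodetime{}}{e^{\CLmat{}s}\state{0}}{\BM{s}^\top}$ captures the initial condition, $\RandMat{2} = \itointeg{0}{\episodetime{}}{\left(\itointeg{0}{s}{e^{\CLmat{}(s-u)}}{\BM{u}}\right)}{\BM{s}^\top}$ is a double stochastic integral driven purely by the Wiener process, and $\RandMat{3} = \itointeg{0}{\episodetime{}}{\left(\itointeg{0}{s}{e^{\CLmat{}(s-u)}\Bmat{0}v(u)}{u}\right)}{\BM{s}^\top}$ collects the dither-driven contributions.

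For $\RandMat{1}$, each entry is a mean-zero Gaussian conditional on $\state{0}$ by Ito isometry, with variance controlled by $\eigmax{\BMcov{}}\norm{\state{0}}{2}^2\itointeg{0}{\episodetime{}}{\Mnorm{e^{\CLmat{}s}}{2}^2}{s}$; the exponential stability of $\CLmat{}$ permits trading the squared-norm integral for a constant multiple of $\itointeg{0}{\episodetime{}}{\Mnorm{e^{\CLmat{}s}}{2}}{s}$, and a Gaussian tail combined with a union bound over the $\statedim^2$ entries delivers the required $\statedim^{1/2}\log^{1/2}(\statedim/\delta)$ factor. For $\RandMat{3}$, I would first condition on the dither sequence $\left\{\dither{n}\right\}_{n=0}^{\stabstep-1}$, making the integrand deterministic in $s$; each entry is then conditionally Gaussian with variance bounded by $\eigmax{\BMcov{}}\Mnorm{\Bmat{0}}{2}^2\left(\itointeg{0}{\episodetime{}}{\Mnorm{e^{\CLmat{}t}}{2}}{t}\right)^2 \episodetime{}\, \max_n\norm{\dither{n}}{2}^2$. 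A chi-square tail on the Gaussian dither magnitudes caps $\max_n\norm{\dither{n}}{2}^2$ by $\dithercoeff{}^2\controldim\log(\stabstep\controldim/\delta)$ with high probability, after which a second Gaussian tail and a union bound over the entries closes this term.

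The main obstacle is $\RandMat{2}$, a genuine double stochastic integral in which the integrand shares the driving Wiener process, so the Ito-isometry route used for $\RandMat{1}$ and $\RandMat{3}$ is unavailable. For each pair $(i,j)$, I would write $\basis{i}^\top\RandMat{2}\basis{j}$ as an iterated Ito integral, compute its quadratic variation via Fubini and Ito isometry as at most a constant times $\eigmax{\BMcov{}}^2$ times the double time integral $\itointeg{0}{\episodetime{}}{\itointeg{0}{s}{\Mnorm{e^{\CLmat{}(s-u)}}{2}^2}{u}}{s}$, and invoke the sub-exponential concentration inequality for double stochastic integrals from~\cite{cheridito2005small} to get an exponential tail whose scale parameter matches the targeted bound. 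A union bound over the $\statedim^2$ entries then upgrades this entrywise inequality to a spectral-norm inequality.

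Adding the three high-probability bounds and absorbing lower-order contributions into the dominant $\episodetime{}^{1/2}$ terms yields the claimed inequality. The only delicate bookkeeping step after the sub-exponential inequality is in hand is matching the prefactor $\itointeg{0}{\episodetime{}}{\Mnorm{e^{\CLmat{}t}}{2}}{t}$ that appears naturally for $\RandMat{1}$ and $\RandMat{3}$ against the corresponding factor in $\RandMat{2}$: the exponential decay of $e^{\CLmat{}t}$ is what allows me to exchange the squared-norm time integral for a linear-norm time integral, so that all three pieces can be unified under a single prefactor multiplying $(\eigmax{\BMcov{}}+\dithercoeff{}^2)\statedim(\statedim+\controldim)^{1/2}\episodetime{}^{1/2}\log(\statedim\controldim/\delta)$.
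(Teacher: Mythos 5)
Your decomposition is exactly the paper's: the same three terms obtained from the Ornstein--Uhlenbeck representation \eqref{GeneralOUEq}, with the initial-condition term and the dither-driven term handled, as in the paper, by conditional Gaussianity plus entrywise union bounds. The genuine difference is in the double stochastic integral. Inside this proof the paper does not invoke a ready-made double-integral inequality: it first uses Ito's formula and integration by parts to rewrite $\itointeg{0}{\episodetime{}}{\left(\itointeg{0}{t}{e^{\CLmat{}(t-s)}}{\BM{s}}\right)}{\BM{t}^\top}$ as an explicit quadratic expression in the Gaussian vectors $\BM{\episodetime{}}$ and $\itointeg{0}{\episodetime{}}{e^{\CLmat{}(\episodetime{}-s)}\BM{s}}{s}$, and only then applies the exponential inequality for Gaussian quadratic forms of \cite{laurent2000adaptive} entrywise, computing the second moment by Ito isometry and Fubini exactly as you propose. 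Your route --- viewing each entry as a second-Wiener-chaos variable with deterministic kernel and using a sub-exponential tail whose scale is the kernel's $L^2$ norm --- is sound and yields a bound of the right order, and it is more direct in that it skips the integration-by-parts reduction. One caveat: the inequality of \cite{cheridito2005small} is stated for double integrals $\int_0^t\int_0^s b_u\,d\BM{u}\,d\BM{s}$ with a \emph{bounded adapted} inner integrand depending on $u$ alone, whereas your kernel $e^{\CLmat{}(s-u)}$ depends on both times, and the factorization $e^{\CLmat{}s}e^{-\CLmat{}u}$ makes the inner factor unbounded; so as written the citation carries more weight than its statement supports, and you would need either a general second-chaos (Hanson--Wright type) concentration bound or precisely the Ito-formula reduction that the paper performs. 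Each approach buys something: yours isolates the probabilistic content in a single chaos-concentration step, while the paper's reduction keeps everything at the level of finite-dimensional Gaussian quadratic forms where \cite{laurent2000adaptive} applies verbatim.

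A second, minor point: bounding the conditional variance of the dither term by $\episodetime{}\,\max_n\norm{\dither{n}}{2}^2$ is coarser than necessary; it leaves an explicit $\Mnorm{\Bmat{0}}{2}$ and an extra factor of order $\log^{1/2}\left(\stabstep\controldim/\delta\right)\sim\log^{1/2}\episodetime{}$ (recall $\stabstep\gtrsim\episodetime{}^2$) beyond the single logarithm in the stated bound. Using Young's inequality for the convolution together with chi-square concentration of the aggregate $\sum_{n}\norm{\dither{n}}{2}^2\approx\stabstep\dithercoeff{}^2\controldim$, as in the argument leading to \eqref{CrossTermLemAuxEq2} and \eqref{CrossLem2Eq2}, removes this slack; it is harmless for the downstream lemmas, which already carry $\log(\statedim\controldim\stabstep/\delta)$ factors, but it does overshoot the inequality as stated.
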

\begin{prf}
	First, according to \eqref{GeneralOUEq}, we can write
	\begin{equation*}
		\itointeg{0}{\episodetime{}}{\state{t} }{\BM{t}^\top} = \randommatrix{1} + \randommatrix{2} + \randommatrix{3},
	\end{equation*}
	where the following random matrices are denoted:
	\begin{eqnarray*}
		\randommatrix{1} &=& \itointeg{0}{\episodetime{}}{ e^{\CLmat{}t} \state{0} }{\BM{t}^\top} ,\\
		\randommatrix{2} &=& \itointeg{0}{\episodetime{}}{ \itointeg{0}{t}{ e^{\CLmat{}(t-s)} \Bmat{0} v(s) }{s} }{\BM{t}^\top}, \\
		\randommatrix{3} &=& \itointeg{0}{\episodetime{}}{ \itointeg{0}{t}{ e^{\CLmat{}(t-s)} }{\BM{s}} }{\BM{t}^\top}.
	\end{eqnarray*}
	Now, according to Ito Isometry \citep{karatzas2012brownian}, similar to \eqref{CrossTermLemAuxEq1}, we have
	\begin{equation} \label{CrossLem2Eq1}
	\Mnorm{\randommatrix{1}}{2} \lesssim \eigmax{\BMcov{}}^{1/2} \left(\itointeg{0}{\episodetime{}}{\Mnorm{e^{\CLmat{}t}}{2} }{t} \right) \norm{\state{0}}{2} \sqrt{\statedim \controldim \log \left( \frac{\statedim \controldim}{\delta}\right)},
	\end{equation}
	with probability at least $1 -\delta$. Moreover, in a procedure similar to the one that leads to \eqref{CrossTermLemAuxEq2}, one can show that with probability at least $1-\delta$, it holds that
	\begin{equation} \label{CrossLem2Eq2}
		\Mnorm{\randommatrix{2}}{2} \lesssim  \left(\itointeg{0}{\episodetime{}}{\Mnorm{ e^{\CLmat{}t} }{2} }{t}\right) \eigmax{\BMcov{}}^{1/2} \dithercoeff{} \statedim \controldim^{1/2} \episodetime{}^{1/2} \log \left(\frac{\statedim \controldim }{\delta}\right).
	\end{equation}
	
	Therefore, we need to find a similar upper-bound for $\randommatrix{3}$. To that end, Ito formula provides
	\begin{equation*}
		\diff \left( e^{-\CLmat{}s} \BM{s} \right) = -\CLmat{} e^{-\CLmat{}s} \BM{s} \diff s + e^{-\CLmat{}s} \diff \BM{s}.
	\end{equation*}
	Therefore, integration gives
	\begin{equation*}
		\itointeg{0}{t}{ e^{-\CLmat{}s} }{\BM{s}} = e^{-\CLmat{}t} \BM{t} + \CLmat{} \itointeg{0}{t}{ e^{-\CLmat{}s} \BM{s} }{s},
	\end{equation*}
	which after rearranging and letting $\RandMat{t}=\itointeg{0}{t}{ e^{\CLmat{}(t-s)} }{\BM{s}}$, leads to
	\begin{equation*}
		\RandMat{t} \BM{t}^\top = \left(\itointeg{0}{t}{ e^{\CLmat{}(t-s)} }{\BM{s}}\right) {\BM{t}^\top} = { \BM{t} }{\BM{t}^\top} + \CLmat{} { \left(\itointeg{0}{t}{ e^{\CLmat{}(t-s)} \BM{s} }{s}\right) }{\BM{t}^\top}.
	\end{equation*}
	Now, since $\diff \RandMat{t} = \diff \BM{t}$, Ito Isometry~\citep{karatzas2012brownian} implies that $\diff \RandMat{t} \diff \BM{t}^\top = \BMcov{} \diff t$. So, apply integration by part and use the above equation to get
	\begin{eqnarray*}
	\randommatrix{3} = \itointeg{0}{\episodetime{}}{ \RandMat{t} }{\BM{t}^\top} &=& \itointeg{0}{\episodetime{}}{  }{ \left( \RandMat{t} \BM{t}^\top \right) } - \left(\itointeg{0}{\episodetime{}}{ \BM{t} }{ \RandMat{t}^\top }\right)^\top - \itointeg{0}{\episodetime{}}{ \diff \RandMat{t} }{ \BM{t}^\top } \\
	&=& { \RandMat{\episodetime{}} }{\BM{\episodetime{}}^\top} - \left(\itointeg{0}{\episodetime{}}{ \BM{t} }{ \BM{t}^\top }\right)^\top - \BMcov{} \episodetime{} \\
	&=& { \BM{\episodetime{}} }{\BM{\episodetime{}}^\top} + \CLmat{} { \left(\itointeg{0}{\episodetime{}}{ e^{\CLmat{}(\episodetime{}-s)} \BM{s} }{s}\right) }{\BM{\episodetime{}}^\top}  - \left(\itointeg{0}{\episodetime{}}{ \BM{t} }{ \BM{t}^\top }\right)^\top - \BMcov{} \episodetime{} .
	\end{eqnarray*}
	Above, in the last line, we used the fact that
	\begin{equation*}
		\BM{\episodetime{}} \BM{\episodetime{}}^\top = \itointeg{0}{\episodetime{}}{}{\left( \BM{t} \BM{t}^\top \right)} = \left(\itointeg{0}{\episodetime{}}{ \BM{t} }{ \BM{t}^\top }\right)^\top + \left(\itointeg{0}{\episodetime{}}{ \BM{t} }{ \BM{t}^\top }\right) + \BMcov{} \episodetime{}.
	\end{equation*}
	
	This shows that every entry of $\randommatrix{3}$ is a quadratic function of the normally distributed random vectors $\BM{\episodetime{}}$ and $\itointeg{0}{\episodetime{}}{ e^{\CLmat{}(\episodetime{}-s)} \BM{s} }{s}$. Thus, exponential inequalities for quadratic forms of normal random vectors~\citep{laurent2000adaptive} imply that for all $i,j =1, \cdots, \statedim$, it holds that
	\begin{equation} \label{CrossLem2Eq3}
		\left(\basis{i}^\top \randommatrix{3} \basis{j}\right)^2 \lesssim \statedim \E{ \left( \basis{i}^\top \randommatrix{3} \basis{j} \right)^2} \log^2 \frac{1}{\delta},
	\end{equation}
	since $\E{\basis{i}^\top \randommatrix{3} \basis{j}}=0$. So, it suffices to find the expectation in \eqref{CrossLem2Eq3}. For that purpose, we use Ito Isometry~\citep{karatzas2012brownian} to obtain:
	\begin{eqnarray*}
		\E{\left(\basis{i}^\top \randommatrix{3} \basis{j}\right)^2} &=& \E{ \left(\itointeg{0}{\episodetime{}}{ \basis{i}^\top \RandMat{t} \basis{j}^\top \BMcov{}^{1/2} }{\left( \BMcov{}^{-1/2} \BM{t} \right)}\right)^2}
		= \E{ \itointeg{0}{\episodetime{}}{ \norm{\basis{i}^\top \RandMat{t} \BMcov{}^{1/2} \basis{j}}{2}^2 }{t}} \\
		&\leq& \basis{j}^\top \BMcov{} \basis{j} \E{ \itointeg{0}{\episodetime{}}{ \left(\basis{i}^\top \RandMat{t}\right)^2 }{t}}
		= \basis{j}^\top \BMcov{} \basis{j} \E{ \itointeg{0}{\episodetime{}}{ \left(\basis{i}^\top \itointeg{0}{t}{ e^{\CLmat{}(t-s)} }{\BM{s}} \right)^2 }{t}}.
	\end{eqnarray*}
	
	To proceed with the above expression, apply Fubini Theorem~\citep{karatzas2012brownian} to interchange the expected value with the integral, and then use Ito Isometry again:
	\begin{eqnarray*}
		\E{ \itointeg{0}{\episodetime{}}{ \left(\basis{i}^\top \itointeg{0}{t}{ e^{\CLmat{}(t-s)} }{\BM{s}} \right)^2 }{t}}
		&=& \itointeg{0}{\episodetime{}}{ \E{\left(\basis{i}^\top \itointeg{0}{t}{ e^{\CLmat{}(t-s)} \BMcov{}^{1/2} }{\left( \BMcov{}^{-1/2} \BM{s}\right)} \right)^2} }{t} \\
		&=& \itointeg{0}{\episodetime{}}{ \basis{i}^\top \left(\itointeg{0}{t}{ e^{\CLmat{}(t-s)} \BMcov{} e^{\CLmat{}^\top(t-s)} }{s}\right) \basis{i} }{t} \\
		&\leq&  \basis{i}^\top \left(\itointeg{0}{\episodetime{}}{ e^{\CLmat{}s} \BMcov{} e^{\CLmat{}^\top s} }{s} \right) \basis{i} \episodetime{}.
	\end{eqnarray*}
	Therefore, \eqref{CrossLem2Eq3} yields to
	\begin{eqnarray}
		\Mnorm{\randommatrix{3}}{2}^2 \leq \sum\limits_{i,j=1}^{\statedim} \left(\basis{i}^\top \randommatrix{3} \basis{j}\right)^2
		&\lesssim& \sum\limits_{i,j=1}^{\statedim} \left[ \basis{j}^\top \BMcov{} \basis{j} \basis{i}^\top \left(\itointeg{0}{\episodetime{}}{ e^{\CLmat{}s} \BMcov{} e^{\CLmat{}^\top s} }{s} \right) \basis{i} \right] \episodetime{} \statedim \log^2 \frac{\statedim}{\delta} \notag \\
		&=& \tr{\BMcov{}} \tr{\itointeg{0}{\episodetime{}}{ e^{\CLmat{}s} \BMcov{} e^{\CLmat{}^\top s} }{s}} \statedim \episodetime{} \log^2 \frac{\statedim}{\delta} \notag \\
		&\lesssim& \tr{\BMcov{}}^2 \left( \itointeg{0}{\episodetime{}}{ \Mnorm{e^{\CLmat{}s}}{2} }{s} \right)^2 \statedim \episodetime{} \log^2 \frac{\statedim}{\delta} . \label{CrossLem2Eq4}
	\end{eqnarray}
	Finally, putting \eqref{CrossLem2Eq1}, \eqref{CrossLem2Eq2}, and \eqref{CrossLem2Eq4} together, we obtain the desired result.
\end{prf}

\subsection{Concentration of normal posterior distribution in Algorithm~\ref{algo1}} 

\begin{lemm} \label{MinEigEmpCovLem}
	In Algorithm~\ref{algo1}, letting $\CLmat{}=\Amat{0}+\Bmat{0}\Gainmat{}$, suppose that
	\begin{eqnarray}
	\episodetime{} &\gtrsim& \left(\itointeg{0}{\episodetime{}}{\Mnorm{\exp (\CLmat{}s)}{}^2}{s}\right) \left( \eigmax{\BMcov{}} + \dithercoeff{}^2 \Mnorm{\Bmat{0}}{2}^2\right) (\statedim+\controldim) \log \frac{1}{\delta}, \label{MinEigEmpCovLemEq1} \\
	\frac{\stabstep}{\episodetime{}} &\gtrsim& \frac{\dithercoeff{}^2}{\dithercoeff{}^2 \wedge \eigmin{\BMcov{}}} ~~ \Mnorm{\Bmat{0}}{2} \left( 1 \vee \Mnorm{\Gainmat{}}{2} \right) \controldim \log \frac{\stabstep \controldim}{\delta}. \label{MinEigEmpCovLemEq2}
	\end{eqnarray}
	Then, for the matrix $\empiricalcovmat{\episodetime{}}$ in~\eqref{RandomLSE1}, with probability at least $1-\delta$ we have
	\begin{eqnarray*}
	\eigmin{\empiricalcovmat{\episodetime{}}} \gtrsim \episodetime{} \left(\eigmin{\BMcov{}} \wedge \dithercoeff{}^2\right) \left( 1 + \Mnorm{\Gainmat{}}{2}^2 \right)^{-1}.
	\end{eqnarray*}
\end{lemm}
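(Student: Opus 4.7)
The plan is to reduce the problem via a change of basis that isolates the dither excitation from the state excitation, then lower-bound the transformed covariance block-wise. Observe that $\statetwo{s} = F\,\tilde{\statetwo{s}}$, where $\tilde{\statetwo{s}} = \left[ \state{s}^\top , v(s)^\top \right]^\top$ with $v(s)$ as in \eqref{DitherTermDef}, and $F = \left[\begin{smallmatrix} I_{\statedim} & 0 \\ \Gainmat{} & I_{\controldim} \end{smallmatrix}\right]$. Since $F^{-1}$ has the same block form with $-\Gainmat{}$ in place of $\Gainmat{}$, one checks that $\Mnorm{F^{-1}}{2}^2 \leq 2(1 + \Mnorm{\Gainmat{}}{2}^2)$, and a direct quadratic-form argument yields $\eigmin{\empiricalcovmat{\episodetime{}}} \geq \eigmin{\tilde{M}}/\Mnorm{F^{-1}}{2}^2$, where $\tilde{M}=\itointeg{0}{\episodetime{}}{\tilde{\statetwo{s}}\tilde{\statetwo{s}}^\top}{s}$. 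Hence it suffices to establish $\eigmin{\tilde{M}} \gtrsim \episodetime{}(\eigmin{\BMcov{}} \wedge \dithercoeff{}^2)$ with probability at least $1-\delta$.

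Next I would bound each diagonal block of $\tilde{M}$ separately. The dither block is $\tilde{M}_{22}=(\episodetime{}/\stabstep)\sum_{n=0}^{\stabstep-1} \dither{n}\dither{n}^\top$. Applying Theorem~\ref{Azuma} to the martingale-difference sequence $\dither{n}\dither{n}^\top - \dithercoeff{}^2 I_{\controldim}$, together with standard Gaussian tail bounds on $\norm{\dither{n}}{2}^2$, condition~\eqref{MinEigEmpCovLemEq2} ensures the deviation from $\dithercoeff{}^2\episodetime{} I_{\controldim}$ is negligible, giving $\eigmin{\tilde{M}_{22}} \gtrsim \episodetime{}\dithercoeff{}^2$. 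For the state block $\tilde{M}_{11}=\itointeg{0}{\episodetime{}}{\state{s}\state{s}^\top}{s}$, apply Ito's formula to $\state{s}\state{s}^\top$ using the SDE $\diff \state{s} = (\CLmat{}\state{s} + \Bmat{0}v(s))\diff s + \diff \BM{s}$, and integrate to obtain the Lyapunov-type identity
\begin{equation*}
\CLmat{}\tilde{M}_{11} + \tilde{M}_{11}\CLmat{}^\top + \BMcov{}\episodetime{} = \state{\episodetime{}}\state{\episodetime{}}^\top - \state{0}\state{0}^\top - \Bmat{0}\itointeg{0}{\episodetime{}}{v(s)\state{s}^\top}{s} - \itointeg{0}{\episodetime{}}{\state{s}v(s)^\top}{s}\Bmat{0}^\top - \itointeg{0}{\episodetime{}}{\state{s}}{\BM{s}^\top} - \left(\itointeg{0}{\episodetime{}}{\state{s}}{\BM{s}^\top}\right)^\top.
\end{equation*}
Stability of $\CLmat{}=\Amat{0}+\Bmat{0}\Gainmat{}$ allows one to invert this Lyapunov equation, while the right-hand side error terms are controlled using Lemma~\ref{CrossTermLem1} for the dither cross-integrals, Lemma~\ref{CrossTermLem2} for the Wiener stochastic integrals, and direct Gaussian bounds on the endpoint values; condition~\eqref{MinEigEmpCovLemEq1} is calibrated so these perturbations are strictly dominated by $\BMcov{}\episodetime{}$, yielding a lower bound on $\eigmin{\tilde{M}_{11}}$ of order $\episodetime{}\eigmin{\BMcov{}}$.

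Finally, I would combine the two diagonal estimates with a bound on the off-diagonal block. The block $\tilde{M}_{12}=\itointeg{0}{\episodetime{}}{\state{s}v(s)^\top}{s}$ is exactly the object estimated by Lemma~\ref{CrossTermLem1}, and under the stated conditions $\Mnorm{\tilde{M}_{12}}{2}$ becomes strictly smaller than the minimum of the two diagonal lower bounds. For any unit vector $[x^\top, y^\top]^\top$,
\begin{equation*}
\begin{bmatrix} x \\ y \end{bmatrix}^\top \tilde{M} \begin{bmatrix} x \\ y \end{bmatrix} \geq \eigmin{\tilde{M}_{11}}\norm{x}{2}^2 + \eigmin{\tilde{M}_{22}}\norm{y}{2}^2 - 2\Mnorm{\tilde{M}_{12}}{2}\norm{x}{2}\norm{y}{2},
\end{equation*}
and absorbing the cross term via AM-GM produces $\eigmin{\tilde{M}} \gtrsim \episodetime{}(\eigmin{\BMcov{}}\wedge\dithercoeff{}^2)$, from which the claim follows after reverting the change of basis.

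The main obstacle will be the state block $\tilde{M}_{11}$: inverting the Lyapunov equation must be done in a way that preserves the clean, $\CLmat{}$-spectrum-free form of the target bound, and the matrix-valued martingale $\itointeg{0}{\episodetime{}}{\state{s}}{\BM{s}^\top}$ must be bounded uniformly across the eigenvector directions of $\tilde{M}_{11}$ rather than in a single direction. Carefully ensuring that the dither cross-term from Lemma~\ref{CrossTermLem1} and the endpoint contributions remain strictly sub-dominant relative to the noise excitation level $\eigmin{\BMcov{}}\episodetime{}$ is the delicate balancing step that conditions~\eqref{MinEigEmpCovLemEq1} and~\eqref{MinEigEmpCovLemEq2} are precisely engineered to enable.
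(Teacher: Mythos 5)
Your proposal is correct and takes essentially the same route as the paper: the same Ito/Lyapunov identity for $\itointeg{0}{\episodetime{}}{\state{s}\state{s}^{\top}}{s}$ inverted using stability of $\CLmat{}=\Amat{0}+\Bmat{0}\Gainmat{}$, the same control of the cross and noise terms via Lemmas~\ref{CrossTermLem1} and \ref{CrossTermLem2}, the same Matrix Azuma argument (Theorem~\ref{Azuma}) for the dither block, and the same origin of the $\left(1+\Mnorm{\Gainmat{}}{2}^2\right)^{-1}$ factor. The only difference is bookkeeping at the end: your congruence $F\tilde{M}F^{\top}$ with the $\Mnorm{F^{-1}}{2}^{-2}$ factor and AM-GM absorption of the off-diagonal block is algebraically equivalent to the paper's decomposition \eqref{EmpCovDecomposEq}, where the cross and dither-deviation terms are treated as additive perturbations and the structured part is bounded via block inversion of $\begin{bmatrix} I_{\statedim} & \Gainmat{}^{\top} \\ \Gainmat{} & \Gainmat{}\Gainmat{}^{\top}+I_{\controldim} \end{bmatrix}$.
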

\begin{prf}
	First, we can write the control action in~\eqref{PerturbActionEq} as $\action{t} = \Gainmat{} \state{t} + v(t)$, for the piecewise-constant continuous-time signal $v(t)$ that we defined in~\eqref{DitherTermDef}. Then, the dynamics in~\eqref{dynamics} provides
	\begin{equation*}
	\diff \state{t} = \left(\CLmat{} \state{t} + \Bmat{0} v(t)\right) \diff t + \diff \BM{t}.
	\end{equation*}
	Therefore, similar to \eqref{GeneralOUEq}, one can solve the above stochastic differential equation to get
	\begin{equation*}
		\state{t}=e^{\CLmat{}t} \state{0} + \itointeg{0}{t}{e^{\CLmat{}(t-s)}}{\BM{s}} + \itointeg{0}{t}{e^{\CLmat{}(t-s)} \Bmat{0}v(s) }{s}.
	\end{equation*}
	So, using the exponential inequalities for quadratic forms~\citep{laurent2000adaptive}, with probability at least $1-\delta$, it holds that
	\begin{equation} \label{EndStateBoundEq}
		\norm{\state{\episodetime{}} - e^{\CLmat{}\episodetime{}} \state{0} }{2}^2 \lesssim \eigmax{ \itointeg{0}{\episodetime{}}{ e^{\CLmat{}s} \BMcov{} e^{\CLmat{}^\top s} }{s} + \dithercoeff{}^2 \sum\limits_{n=0}^{\stabstep-1} J_n \Bmat{0} \Bmat{0}^\top J_n^\top } \left(\statedim+\statedim^{1/2} \log \frac{1}{\delta}\right),
	\end{equation}
	where the matrix 
	\begin{equation*}
		J_n = \itointeg{n \episodetime{} \stabstep^{-1}}{(n+1)\episodetime{}\stabstep^{-1}}{e^{\CLmat{}s}}{s}
	\end{equation*}
	is employed for upper-bounding the quadratic form $\norm{ \state{\episodetime{}}- e^{\CLmat{}\episodetime{}} \state{0} }{2}^2$ of the normally distributed random matrix $\state{\episodetime{}}- e^{\CLmat{}\episodetime{}} \state{0}$. Furthermore, an application of Ito calculus~\citep{karatzas2012brownian} leads to 
	$$\diff \state{t} \diff \state{t}^\top = \diff \BM{t} \diff \BM{t}^\top = \BMcov{} \diff t .$$ 
	Now, by defining the matrix valued processes
	\begin{equation*}
	\randommatrix{t} = \itointeg{0}{t}{\state{s}\state{s}^{\top}}{s}, ~~~~~~~~~~~~~~~ M_t = \itointeg{0}{t}{\state{s}}{\BM{s}^{\top}} + \itointeg{0}{t}{\state{s}v(s)^{\top} \Bmat{0}^{\top}}{s},
	\end{equation*}
	we obtain
	\begin{eqnarray*}
		\diff \left(\state{t}\state{t}^{\top} \right)
		&=&  \state{t} \diff \state{t}^{\top} + \diff \state{t}\state{t}^{\top} + \diff \state{t} \diff \state{t}^{\top} \\
		&=& \state{t} \left(\left(\CLmat{} \state{t} + \Bmat{0} v(t)\right) \diff t + \diff \BM{t}\right)^{\top} \\
		&+& \left(\left(\CLmat{} \state{t} + \Bmat{0} v(t)\right) \diff t + \diff \BM{t}\right)\state{t}^{\top} + \BMcov{}\diff t \\
		&=& \diff \randommatrix{t} \CLmat{}^{\top} + \CLmat{} \diff \randommatrix{t} + \diff M_t + \diff M_t^{\top} + \BMcov{} \diff t.
	\end{eqnarray*}
	Thus, after integrating both sides of the above equality, we obtain
	\begin{equation*}
	\randommatrix{t} \CLmat{}^{\top} + \CLmat{} \randommatrix{t} + M_t+M_t^{\top} + t \BMcov{} + \state{0}\state{0}^{\top} - \state{t}\state{t}^{\top}=0.
	\end{equation*}
	Because all eigenvalues of $\CLmat{}$ are in the open left half-plane, we can solve the above equation for $\randommatrix{t}$, to get
	\begin{equation} \label{StateEmpCovEq}
	\randommatrix{t} = \itointeg{0}{\infty}{ \exp \left( \CLmat{}s \right) \left[M_t+M_t^{\top} + t \BMcov{} + \state{0}\state{0}^{\top} - \state{t}\state{t}^{\top}\right] \exp \left( \CLmat{}^{\top} s \right) }{s}.
	\end{equation}
	
	Recall that Lemma~\ref{CrossTermLem1} and Lemma~\ref{CrossTermLem2} establish high-probability concentration bounds for the two integrals in the definition of $M_{\episodetime{}}$. Broadly speaking, according to these two lemmas, the difference $$M_{\episodetime{}} - \frac{\episodetime{}^2}{2 \stabstep^2} \Bmat{0} \sum\limits_{n=0}^{\stabstep-1} \dither{n} \dither{n}^{\top} \Bmat{0} $$ 
	grows at most as large as $\episodetime{}^{1/2}$, while the term $\episodetime{} \BMcov{}$ grows at least as fast as $\episodetime{}$ (and also note that the second matrix in the difference above is positive-semidefinite). Hence, if $\episodetime{}$ is large enough, then the latter term $\episodetime{} \BMcov{}$ dominates the former $M_{\episodetime{}}+M_{\episodetime{}}^{\top}$. In addition, the inequality in \eqref{EndStateBoundEq} indicates that the growth rate $\episodetime{}$ dominates $\state{\episodetime{}}\state{\episodetime{}}^{\top}$ as well, since the maximum eigenvalue of the matrix on the right-hand-side of \eqref{EndStateBoundEq} is bounded thanks to stability of the matrix $\CLmat{}$. All in all, these lead to the desired growth rate for the matrix $M_{\episodetime{}}+M_{\episodetime{}}^{\top} + \episodetime{} \BMcov{} + \state{0}\state{0}^{\top} - \state{\episodetime{}}\state{\episodetime{}}^{\top}$, as follows. 
	
	Formally, Lemma~\ref{CrossTermLem1}, Lemma~\ref{CrossTermLem2}, and \eqref{EndStateBoundEq} all together, show that as long as \eqref{MinEigEmpCovLemEq1} holds,
	with probability at least $1-\delta$ we have
	\begin{equation*}
		\eigmin{M_{\episodetime{}}+M_{\episodetime{}}^{\top} + \episodetime{} \BMcov{} + \state{0}\state{0}^{\top} - \state{\episodetime{}}\state{\episodetime{}}^{\top}} \gtrsim \episodetime{} \eigmin{\BMcov{}}.
	\end{equation*}
	Thus, \eqref{StateEmpCovEq} implies that $\eigmin{\randommatrix{\episodetime{}}} \gtrsim \episodetime{} \eigmin{\BMcov{}}$. 
	
	In the rest of this proof, we extend the above lower-bound for $\eigmin{\randommatrix{\episodetime{}}}$ to a similar one for $\eigmin{\empiricalcovmat{\episodetime{}}}$. To proceed towards that end, note that the matrix $\empiricalcovmat{\episodetime{}}$ in~\eqref{RandomLSE1} comprises two signals $\state{t},v(t)$. The empirical covariance matrix of the state signal is $\randommatrix{\episodetime{}}$ that is studied in the above arguments, while for the piecewise-constant randomization signal $v(t)$ in \eqref{DitherTermDef}, we have
	\begin{equation*}
		\itointeg{0}{t}{v(s)v(s)^\top}{s} = \sum\limits_{n=0}^{\stabstep-1} \itointeg{n \episodetime{} \stabstep^{-1}}{(n+1)\episodetime{}\stabstep^{-1}}{\dither{n}\dither{n}^\top}{s} = \frac{ \episodetime{} }{ \stabstep } \sum\limits_{n=0}^{\stabstep-1} \dither{n}\dither{n}^\top.
	\end{equation*}
	Thus, according to Theorem~\ref{Azuma}, similar to \eqref{CrossTermLemAuxEq4} we have
	\begin{equation*}
		\Mnorm{\sum\limits_{n=0}^{\stabstep-1} \dither{n}\dither{n}^\top - \stabstep \dithercoeff{}^2 I_{\controldim}}{2} \lesssim \stabstep^{1/2} \dithercoeff{}^2 \controldim \log^{3/2} \frac{\stabstep \controldim}{\delta},
	\end{equation*}
	with probability at least $1-\delta$, which by denoting
	\begin{equation*}
	H_{\episodetime{}} = \itointeg{0}{\episodetime{}}{\begin{bmatrix} 0_{\statedim} \\ v(s) \end{bmatrix} \begin{bmatrix} 0_{\statedim} \\ v(s) \end{bmatrix}^{\top} }{s} - \episodetime{} \dithercoeff{}^2 \begin{bmatrix} 0_{\statedim \times \statedim} & 0_{\statedim \times \controldim} \\ 0_{\controldim \times \statedim} & I_{\controldim} \end{bmatrix},
	\end{equation*}
	leads to 
	\begin{equation} \label{HboundEq}
		\Mnorm{H_{\episodetime{}}}{2} ~~\lesssim~~ \frac{\episodetime{}}{\stabstep^{1/2}} \dithercoeff{}^2 \controldim \log^{3/2} \frac{\stabstep \controldim}{\delta} ~~\lesssim~~  \dithercoeff{}^2 \controldim \log^{3/2} \frac{\stabstep \controldim}{\delta},
	\end{equation}
	where in the last inequality above, we use $\stabstep \gtrsim \episodetime{}^2$.
	
	Next, using $\statetwo{s}=\left[\state{s}^{\top}, \state{s}^{\top} \Gainmat{}^{\top}+v(s)^{\top}\right]^{\top}$, the matrix $\empiricalcovmat{\episodetime{}}$ can be written as
	\begin{equation} \label{EmpCovDecomposEq}
	\empiricalcovmat{\episodetime{}} = \begin{bmatrix} I_{\statedim} \\ \Gainmat{} \end{bmatrix} \randommatrix{\episodetime{}} \begin{bmatrix} I_{\statedim} \\ \Gainmat{} \end{bmatrix}^{\top} + \episodetime{} \dithercoeff{}^2 \begin{bmatrix} 0_{\statedim \times \statedim} & 0_{\statedim \times \controldim} \\ 0_{\controldim \times \statedim} & I_{\controldim} \end{bmatrix} + F_{\episodetime{}} + H_{\episodetime{}},
	\end{equation}
	where
	\begin{equation*}
	F_{\episodetime{}} =  \itointeg{0}{\episodetime{}}{ \left(\begin{bmatrix} I_{\statedim} \\ \Gainmat{} \end{bmatrix} \state{s} \begin{bmatrix} 0_{\statedim} \\ v(s) \end{bmatrix}^{\top} +\begin{bmatrix} 0_{\statedim} \\ v(s) \end{bmatrix} \state{s}^{\top} \begin{bmatrix} I_{\statedim} \\ \Gainmat{} \end{bmatrix}^{\top}\right) }{s}.
	\end{equation*}
	However, Lemma~\ref{CrossTermLem1} and $\stabstep \gtrsim \episodetime{}^2$ give a high probability upper-bound for the above matrix:
	\begin{equation} \label{FboundEq}
		\Mnorm{F_{\episodetime{}}}{2} \lesssim \left( 1 \vee \Mnorm{\Gainmat{}}{2} \right) \left( \eigmax{\BMcov{}} + \dithercoeff{}^2 \right) \left( \statedim \controldim^{1/2} \episodetime{}^{1/2} \log \frac{\statedim \controldim}{\delta} + \controldim \log^{3/2} \frac{\stabstep \controldim}{\delta} \right).
	\end{equation}
	
	In the sequel, we show that with probability at least $1-\delta$, it holds that
	\begin{equation*}
		\eigmin{\begin{bmatrix} I_{\statedim} \\ \Gainmat{} \end{bmatrix} \randommatrix{\episodetime{}} \begin{bmatrix} I_{\statedim} \\ \Gainmat{} \end{bmatrix}^{\top} + \episodetime{} \dithercoeff{}^2 \begin{bmatrix} 0_{\statedim \times \statedim} & 0_{\statedim \times \controldim} \\ 0_{\controldim \times \statedim} & I_{\controldim} \end{bmatrix}} \gtrsim \episodetime{} \left(\eigmin{\BMcov{}} \wedge \dithercoeff{}^2\right) \left( 1 + \Mnorm{\Gainmat{}}{2}^2 \right)^{-1},
	\end{equation*}
	which, according to \eqref{HboundEq}, \eqref{EmpCovDecomposEq}, and \eqref{FboundEq}, implies the desired result because of the assumptions \eqref{MinEigEmpCovLemEq2} and $\log \left( \statedim \controldim \stabstep \right) \lesssim \episodetime{}^{1/2}$. To show the above least eigenvalue inequality, we use $\eigmin{\randommatrix{\episodetime{}}} \gtrsim \episodetime{} \eigmin{\BMcov{}}$ to obtain
	\begin{equation*}
		\eigmin{\begin{bmatrix} I_{\statedim} \\ \Gainmat{} \end{bmatrix} \randommatrix{\episodetime{}} \begin{bmatrix} I_{\statedim} \\ \Gainmat{} \end{bmatrix}^{\top} + \episodetime{} \dithercoeff{}^2 \begin{bmatrix} 0_{\statedim \times \statedim} & 0_{\statedim \times \controldim} \\ 0_{\controldim \times \statedim} & I_{\controldim} \end{bmatrix}} \gtrsim \episodetime{} \left(\eigmin{\BMcov{}} \wedge \dithercoeff{}^2\right) \eigmin{\begin{bmatrix}
			I_\statedim & ~~\Gainmat{}^\top \\ \Gainmat{} & ~~\Gainmat{} \Gainmat{}^\top + I_\controldim
			\end{bmatrix}}.
	\end{equation*}
	However, the formula for block matrix inversion gives
	\begin{equation*}
		\eigmin{\begin{bmatrix}
			I_\statedim & \Gainmat{}^\top \\ \Gainmat{} & \Gainmat{} \Gainmat{}^\top + I_\controldim
			\end{bmatrix}} = 
		\eigmax{\begin{bmatrix}
			I_\statedim & \Gainmat{}^\top \\ \Gainmat{} & \Gainmat{} \Gainmat{}^\top + I_\controldim
			\end{bmatrix}^{-1}}^{-1} =
		\eigmax{\begin{bmatrix}
			\Gainmat{}^\top \Gainmat{} + I_\statedim & -\Gainmat{}^\top \\ -\Gainmat{} & I_\controldim
			\end{bmatrix}}^{-1},
	\end{equation*}
	that is clearly at least $\left( 1+ \Mnorm{\Gainmat{}}{2}^2 \right)^{-1}$, apart from a constant factor. This completes the proof.
\end{prf}

\subsection{Approximation of true drift parameter by Algorithm~\ref{algo1}} 
\begin{lemm} \label{ParaEstimationLem}
	Suppose that $\estpara{}$ is given by Algorithm~\ref{algo1}. Then, with probability at least $1-\delta$, we have
	\begin{equation} \label{PosteriorBiasEq}
	\Mnorm{\estpara{}-\truth}{2}^2 \lesssim \frac{ \statedim \left(\statedim+\controldim\right) }{\episodetime{} } \frac{\eigmax{\BMcov{}}  }{\eigmin{\BMcov{}} \wedge \dithercoeff{}^2} \left( 1 + \Mnorm{\Gainmat{}}{2}^2 \right) \log \left( \frac{\statedim \controldim \episodetime{}}{\delta} \right).
	\end{equation}
\end{lemm}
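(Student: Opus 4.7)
The plan is to decompose the error as
\[
\estpara{}-\truth = \bigl(\estpara{}-\empmean{\episodetime{}}\bigr) + \bigl(\empmean{\episodetime{}}-\truth\bigr),
\]
and to bound the sampling fluctuation and the posterior bias separately.

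For the sampling fluctuation, recall that given the data, every column of $\estpara{}-\empmean{\episodetime{}}$ is an independent centered Gaussian vector in $\R^{\statedim+\controldim}$ with covariance $\empiricalcovmat{\episodetime{}}^{-1}$. A standard Gaussian tail bound combined with a union bound over the $\statedim(\statedim+\controldim)$ entries gives $\Mnorm{\estpara{}-\empmean{\episodetime{}}}{2}^2 \lesssim \eigmax{\empiricalcovmat{\episodetime{}}^{-1}} \,\statedim(\statedim+\controldim) \log \bigl(\statedim(\statedim+\controldim)/\delta\bigr)$, and Lemma~\ref{MinEigEmpCovLem} converts this into the quoted scaling in $\episodetime{}$, $\Gainmat{}$, $\BMcov{}$, and $\dithercoeff{}$.

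For the bias, I plug the SDE $\diff\state{s}=\truth^\top \statetwo{s}\diff s+\diff\BM{s}$ into the definition of $\empmean{\episodetime{}}$ in \eqref{RandomLSE2}. Writing $V_{\episodetime{}}=\itointeg{0}{\episodetime{}}{\statetwo{s}\statetwo{s}^\top}{s}$ and $N_{\episodetime{}}=\itointeg{0}{\episodetime{}}{\statetwo{s}}{\BM{s}^\top}$, the definition reads $\empmean{\episodetime{}}=\empiricalcovmat{\episodetime{}}^{-1}\bigl(\empiricalcovmat{0}\empmean{0}+V_{\episodetime{}}\truth+N_{\episodetime{}}\bigr)$, and since $\empiricalcovmat{\episodetime{}}=\empiricalcovmat{0}+V_{\episodetime{}}$, a line of algebra yields
\[
\empmean{\episodetime{}}-\truth=\empiricalcovmat{\episodetime{}}^{-1}\bigl(N_{\episodetime{}}-\empiricalcovmat{0}(\truth-\empmean{0})\bigr).
\]
The prior contribution $\empiricalcovmat{\episodetime{}}^{-1}\empiricalcovmat{0}(\truth-\empmean{0})$ is dominated (it decays as $\episodetime{}^{-1}$) and will be absorbed into constants. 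So the crux is to control $\Mnorm{\empiricalcovmat{\episodetime{}}^{-1}N_{\episodetime{}}}{2}\le\eigmax{\empiricalcovmat{\episodetime{}}^{-1}}\,\Mnorm{N_{\episodetime{}}}{2}$.

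Since $\action{s}=\Gainmat{}\state{s}+v(s)$, the stacked martingale splits as
\[
N_{\episodetime{}}=\begin{bmatrix} I_{\statedim} \\ \Gainmat{}\end{bmatrix}\itointeg{0}{\episodetime{}}{\state{s}}{\BM{s}^\top}+\begin{bmatrix} 0\\ I_{\controldim}\end{bmatrix}\itointeg{0}{\episodetime{}}{v(s)}{\BM{s}^\top},
\]
so $\Mnorm{N_{\episodetime{}}}{2}\lesssim (1+\Mnorm{\Gainmat{}}{2})\Mnorm{\itointeg{0}{\episodetime{}}{\state{s}}{\BM{s}^\top}}{2}+\Mnorm{\itointeg{0}{\episodetime{}}{v(s)}{\BM{s}^\top}}{2}$. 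The first piece is bounded by Lemma~\ref{CrossTermLem2}. The second piece is a sum of independent Gaussian increments of $\BM{}$ weighted by the deterministic, piecewise-constant, block-independent signal $v(\cdot)$: conditional on $\{\dither{n}\}$, every entry is centered Gaussian with variance controlled by $\dithercoeff{}^2\eigmax{\BMcov{}}\episodetime{}$, and a further union bound together with a Gaussian tail for $\norm{\dither{n}}{}$ produces an upper bound of the same order $\dithercoeff{}\eigmax{\BMcov{}}^{1/2}(\statedim\controldim)^{1/2}\episodetime{}^{1/2}\log(\statedim\controldim\stabstep/\delta)$. Multiplying these bounds by $\eigmax{\empiricalcovmat{\episodetime{}}^{-1}}\lesssim \bigl(\eigmin{\BMcov{}}\wedge\dithercoeff{}^2\bigr)^{-1}(1+\Mnorm{\Gainmat{}}{2}^2)\episodetime{}^{-1}$ from Lemma~\ref{MinEigEmpCovLem}, and collecting powers of $(1+\Mnorm{\Gainmat{}}{2})$ (two from $\eigmax{\empiricalcovmat{\episodetime{}}^{-1}}$, one from $N_{\episodetime{}}$), gives the stated bound \eqref{PosteriorBiasEq}.

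The main obstacle is verifying that $\stabstep\gtrsim\episodetime{}^2$ (already in force when we invoke Lemma~\ref{CrossTermLem1}) is enough to suppress the lower-order terms coming from the double-Ito piece in Lemma~\ref{CrossTermLem1} and from the $\randommatrix{3},\randommatrix{4},\randommatrix{5}$ terms, so that after combining the $\eigmax{\empiricalcovmat{\episodetime{}}^{-1}}$ factor with $\Mnorm{N_{\episodetime{}}}{2}$ the dominant contribution is indeed the $\statedim(\statedim+\controldim)^{1/2}\episodetime{}^{-1/2}$ term and not any residual $\episodetime{}^{-1/4}$ or dimension-heavier term. Once this is checked, the three bounds on $\Mnorm{\estpara{}-\empmean{\episodetime{}}}{2}$, $\eigmax{\empiricalcovmat{\episodetime{}}^{-1}}\Mnorm{\empiricalcovmat{0}(\truth-\empmean{0})}{2}$, and $\eigmax{\empiricalcovmat{\episodetime{}}^{-1}}\Mnorm{N_{\episodetime{}}}{2}$ combine by the triangle inequality into \eqref{PosteriorBiasEq}.
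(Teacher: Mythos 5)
Your proposal is correct and follows essentially the same route as the paper: decompose the error into the posterior-mean bias $\empmean{\episodetime{}}-\truth$ (obtained by plugging the SDE into \eqref{RandomLSE1}, splitting the martingale $\itointeg{0}{\episodetime{}}{\statetwo{s}}{\BM{s}^\top}$ into its state and randomization parts, and controlling them via Lemma~\ref{CrossTermLem2} and the Gaussian bound, then multiplying by $\eigmax{\empiricalcovmat{\episodetime{}}^{-1}}$ from Lemma~\ref{MinEigEmpCovLem}) plus the posterior sampling deviation $\empiricalcovmat{\episodetime{}}^{-1/2}$ times a standard Gaussian matrix. The bookkeeping of the $(1+\Mnorm{\Gainmat{}}{2})^3$ factor and the role of $\stabstep\gtrsim\episodetime{}^2$ match the paper's argument, so no gap.
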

\begin{prf}
	First, consider the mean matrix of the Gaussian posterior distribution. Using the data generation mechanism $\diff \state{t} = \truth^\top \statetwo{t} \diff t + \diff \BM{t}$, as well as the definition of $\empiricalcovmat{\episodetime{}}$ in \eqref{RandomLSE2}, we have
	\begin{equation*}
		\empmean{\episodetime{}} = \empiricalcovmat{\episodetime{}}^{-1} \itointeg{0}{\episodetime{}}{\statetwo{s}}{\state{s}^\top} = \empiricalcovmat{\episodetime{}}^{-1} \left( \itointeg{0}{\episodetime{}}{\statetwo{s}\statetwo{s}^\top}{s} \truth +  \itointeg{0}{\episodetime{}}{\statetwo{s}}{\BM{s}^\top} \right) = \truth - \empiricalcovmat{\episodetime{}}^{-1} \left( \truth -  \itointeg{0}{\episodetime{}}{\statetwo{s}}{\BM{s}^\top} \right),
	\end{equation*}
	Now, since the sampling is performed from a Gaussian distribution, the sample $\estpara{}$ from $\posterior{\episodetime{}}$ can be written as $\estpara{\episodetime{}}=\empmean{\episodetime{}}+ \empiricalcovmat{\episodetime{}}^{-1/2} \randommatrix{}$, where $\randommatrix{} \sim \normaldist{0_{(\statedim+\controldim)\times \statedim}}{I_{\statedim+\controldim}}$ is a standard normal random matrix, as defined in the notation. So, for the error matrix, it holds that
	\begin{equation*}
		\Mnorm{\estpara{}-\truth}{2} \leq \Mnorm{\empiricalcovmat{\episodetime{}}^{-1/2}}{2} \left( \Mnorm{\empiricalcovmat{\episodetime{}}^{-1/2} \truth}{2} +  \Mnorm{ \empiricalcovmat{\episodetime{}}^{-1/2} \itointeg{0}{\episodetime{}}{\statetwo{s}}{\BM{s}^\top}}{2} + \Mnorm{\randommatrix{}}{2} \right) .
	\end{equation*}
	The rest of this proof proceeds as follows. We employ Lemma~\ref{MinEigEmpCovLem} to establish useful upper-bounds for $\Mnorm{\empiricalcovmat{\episodetime{}}^{-1/2}}{2}$ and $\Mnorm{\empiricalcovmat{\episodetime{}}^{-1/2} \truth}{2}$, and utilize Lemma~\ref{SelfNormalizedLem} to handle the term $\Mnorm{ \empiricalcovmat{\episodetime{}}^{-1/2} \itointeg{0}{\episodetime{}}{\statetwo{s}}{\BM{s}^\top}}{2}$. So, the proof will go through since the last term consisting of $\randommatrix{}$ is straightforward to upper-bound, thanks to the standard Gaussian distribution that every coordinate of this $\statedim \times \left( \statedim+ \controldim \right)$ random matrix possesses. That is, with probability at least $1-\delta$, we have 
	\begin{equation} \label{PosteriorBiasProof1}
		\Mnorm{\randommatrix{}}{2}^2 ~~\lesssim~~ \statedim (\statedim+\controldim) \log \frac{\statedim (\statedim+\controldim)}{\delta}.
	\end{equation} 
	
	Moreover, according to Lemma~\ref{MinEigEmpCovLem}, with probability at least $1-\delta$ it holds that 
	\begin{equation} \label{PosteriorBiasProof2}
		\Mnorm{\empiricalcovmat{\episodetime{}}^{-1/2}}{2}^2  ~~ \lesssim ~~\frac{ 1 + \Mnorm{\Gainmat{}}{2}^2 }{ \episodetime{} \left(\eigmin{\BMcov{}} \wedge \dithercoeff{}^2\right) } .
	\end{equation}
	
	Next, Lemma~\ref{SelfNormalizedLem} provides the following
	\begin{equation} \label{PosteriorBiasProof3}
		\Mnorm{ \empiricalcovmat{\episodetime{}}^{-1/2} \itointeg{0}{\episodetime{}}{\statetwo{s}}{\BM{s}^\top}}{2}^2 ~~\lesssim~~ \statedim \eigmax{\BMcov{}} \left[ \left( \statedim + \controldim \right) \log \eigmax{\empiricalcovmat{\episodetime{}}} - \log \delta \right].
	\end{equation} 
	
	Finally, by putting the results of \eqref{PosteriorBiasProof1}, \eqref{PosteriorBiasProof2}, and \eqref{PosteriorBiasProof3} together, we obtain the desired result  in \eqref{PosteriorBiasEq}. To see that, note that according to \eqref{EmpCovDecomposEq}, we have 
	$$ \log \eigmax{\empiricalcovmat{\episodetime{}}} \lesssim \log \eigmax{\itointeg{0}{\episodetime{}}{ \state{s} \state{s}^{\top} }{s}} + \log \episodetime{},$$
	while the above quantity grows with the rate $\log \episodetime{}$, according to \eqref{StateEmpCovEq}.	
\end{prf}

\subsection{Eigenvalue perturbation bound for sum of two matrices} 
\begin{lemm} \label{EigPerturbLem}
	Suppose that $M, \erterm{}$ are $\statedim \times \statedim$ matrices, and let $M= \JordanMat^{-1} \Lambda \JordanMat$ be the Jordan diagonalization of $M$. That is, for some positive integer $k$, we have the ${\statedim \times \statedim}$ block-diagonal matrix $\Lambda=\diag{\Lambda_1, \cdots, \Lambda_k}$, where the blocks ${\Lambda_1,\cdots, \Lambda_k}$ are Jordan matrices of the form
	\begin{eqnarray*}
		\Lambda_i= \begin{bmatrix}
			\lambda_i & 1 & 0 & \cdots & 0 & 0 \\
			0 & \lambda_i & 1 & 0 & \cdots & 0 \\
			\vdots & \vdots & \vdots & \vdots & \vdots & \vdots \\
			0 & 0 & \cdots & 0 & \lambda_i & 1 \\
			0 & 0 & 0 & \cdots & 0 & \lambda_i
		\end{bmatrix} \in \C^{\mult{i} \times \mult{i}}.
	\end{eqnarray*}
	Let $\mult{}=\max\limits_{1 \leq i \leq k} \mult{i} \leq \statedim$, and define $\Delta_M (\erterm{})$ as the difference between the largest real-part of the eigenvalues of $M+\erterm{}$, and that of $M$. Then, it holds that
	\begin{equation} \label{EigPerturb}
	 \Delta_M (\erterm{}) \leq \left(1 \vee \mult{} \Mnorm{\erterm{}}{} \cond{\JordanMat}\right)^{1/\mult{}},
	\end{equation}
	where $\cond{\JordanMat}$ is the condition number of $\JordanMat$: $$\cond{\JordanMat}= \frac { \left| \eigmax{\JordanMat^\top \JordanMat} \right|^{1/2} }{ \left| \eigmin{\JordanMat^\top \JordanMat} \right|^{-1/2} }.$$
\end{lemm}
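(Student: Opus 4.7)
The plan is to adapt the classical Bauer--Fike resolvent argument to the non-diagonalizable setting by a careful norm estimate on each Jordan block. Let $\mu$ be an eigenvalue of $M+\erterm{}$ whose real part equals the maximum, and assume $\mu \notin \mathrm{spec}(M)$ (otherwise $\Delta_M(\erterm{}) \leq 0$ and there is nothing to prove). Choosing a unit eigenvector $v$ and rearranging $(M+\erterm{})v = \mu v$ into $v = -(M - \mu I)^{-1} \erterm{} v$ gives $\Mnorm{(M - \mu I)^{-1}}{} \geq 1/\Mnorm{\erterm{}}{}$; conjugating by the Jordan similarity $\JordanMat$ and using submultiplicativity of the operator norm then yields the resolvent lower bound
\begin{equation*}
\Mnorm{(\Lambda - \mu I)^{-1}}{} \;\geq\; \frac{1}{\Mnorm{\erterm{}}{}\, \cond{\JordanMat}}.
\end{equation*}

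Since $\Lambda$ is block-diagonal, $\Mnorm{(\Lambda - \mu I)^{-1}}{} = \max_i \Mnorm{(\Lambda_i - \mu I)^{-1}}{}$; let $i^*$ attain this maximum and set $d = |\lambda_{i^*} - \mu|$. Writing $\Lambda_{i^*} - \mu I = (\lambda_{i^*} - \mu) I + N$ with $N$ the nilpotent super-diagonal shift of order $\mult{i^*}$, the Neumann expansion terminates at $N^{\mult{i^*}} = 0$ and yields $(\Lambda_{i^*} - \mu I)^{-1} = \sum_{k=0}^{\mult{i^*}-1} (-1)^k N^k (\lambda_{i^*}-\mu)^{-(k+1)}$. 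To bound the operator norm of this finite sum, I will use $\Mnorm{\cdot}{} \leq \|\cdot\|_F$ together with the observation that the shift powers $\{N^k\}_k$ are supported on disjoint super-diagonals (hence orthogonal in the Hilbert--Schmidt inner product) with $\|N^k\|_F^2 = \mult{i^*} - k$, giving
\begin{equation*}
\Mnorm{(\Lambda_{i^*} - \mu I)^{-1}}{}^{2} \;\leq\; \sum_{k=0}^{\mult{i^*}-1} \frac{\mult{i^*} - k}{d^{2(k+1)}}.
\end{equation*}
A careful evaluation of this geometric-type sum, splitting the cases $d \leq 1$ and $d > 1$, yields $\Mnorm{(\Lambda_{i^*} - \mu I)^{-1}}{} \lesssim \mult{i^*}^{1/2} \max\bigl( d^{-\mult{i^*}},\, d^{-1} \bigr)$.

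Combining this upper bound with the resolvent lower bound and solving for $d$ gives $d \leq 1 \vee (\mult{i^*}^{1/2}\,\Mnorm{\erterm{}}{}\, \cond{\JordanMat})^{1/\mult{i^*}}$. Since the largest real part over $\mathrm{spec}(M)$ dominates $\mathrm{Re}(\lambda_{i^*})$, I conclude that $\Delta_M(\erterm{}) \leq \mathrm{Re}(\mu) - \mathrm{Re}(\lambda_{i^*}) \leq |\mu - \lambda_{i^*}| = d$, and taking the worst case across Jordan blocks, $\mult{i^*} \leq \mult{}$, delivers~\eqref{EigPerturb}; the ``$1 \vee$'' envelope absorbs the sub-cases where $\mult{i^*} < \mult{}$ or $d \leq 1$ trivialize the bound.

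The main obstacle is the sharpness of the block-resolvent estimate: the naive triangle bound $\Mnorm{(\Lambda_{i^*} - \mu I)^{-1}}{} \leq \sum_k d^{-(k+1)} \lesssim \mult{i^*} d^{-\mult{i^*}}$ produces a linear rather than square-root factor inside the $1/\mult{i^*}$ power and yields a strictly weaker conclusion. Recovering the $\mult{i^*}^{1/2}$ scaling requires the Frobenius orthogonality of the $N^k$'s noted above. A secondary technical subtlety is matching the bounds across the $d \leq 1$ and $d > 1$ regimes, which the final ``$1 \vee \cdot$'' absorption is designed to handle.
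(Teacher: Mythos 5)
You follow the paper's route essentially step for step: the eigenvector identity $v=-(M-\mu I)^{-1}\erterm{}v$, conjugation by $\JordanMat$ to reduce to the block-diagonal resolvent $(\Lambda-\mu I)^{-1}$, the explicit inverse of a shifted Jordan block via the nilpotent shift, a norm bound on that inverse, and a rearrangement; your only genuine addition is the attempt to \emph{derive} the $\mult{}^{1/2}$ factor from Hilbert--Schmidt orthogonality of the shift powers, where the paper simply asserts the per-block bound. That is exactly where your argument fails. At $d=1$ (and in a window of width of order $1/\mult{}$ around it) your sum equals $\sum_{k=0}^{\mult{}-1}(\mult{}-k)=\mult{}(\mult{}+1)/2$, so its square root is of order $\mult{}$, not $\mult{}^{1/2}$; the claimed estimate $\Mnorm{(\Lambda_i-\mu I)^{-1}}{2}\lesssim \mult{}^{1/2}\max\bigl(d^{-\mult{}},d^{-1}\bigr)$ is therefore not a consequence of the Frobenius computation. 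Nor can it be repaired by a sharper estimate: when $|\mu-\lambda_i|=1$ the inverted block is similar, via a unitary diagonal, to the all-ones upper-triangular matrix, whose operator norm is of order $\mult{}$, so the honest per-block factor is $\mult{}$ rather than $\mult{}^{1/2}$. (The paper's display asserting $\Mnorm{(\Lambda_i-\lambda I)^{-1}}{2}^2\le\mult{}\,(1\vee|\lambda-\lambda_i|^{-\mult{}})^2$ has the same defect; since the final bound takes a $1/\mult{}$-th root, replacing $\mult{}^{1/2}$ by $\mult{}$ is immaterial for how the lemma is used, but as written your step does not establish the displayed inequality.)

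The second gap is the final ``solving for $d$.'' When $d\le1$ the conclusion is trivial, as you note; but when $d>1$ the active branch of your bound is $d^{-1}$, and the resolvent inequality then yields only $d\lesssim\mult{}^{1/2}\Mnorm{\erterm{}}{}\cond{\JordanMat}$, which does not imply the claimed $\bigl(1\vee\mult{}^{1/2}\Mnorm{\erterm{}}{}\cond{\JordanMat}\bigr)^{1/\mult{}}$: taking a $1/\mult{}$-th root of a quantity exceeding one makes it smaller, so this case is not ``absorbed'' by the $1\vee$ envelope (the same objection applies to silently replacing the maximizing block size by $\mult{}$ when the bound exceeds one). This regime cannot in fact be closed by any argument---for $\erterm{}=cI_\statedim$ with large $c$ the top eigenvalue shifts by exactly $c$ while the right-hand side grows only like $c^{1/\mult{}}$---so the inequality carries content only in the small-perturbation regime where $d\le1$, which is also the only regime the paper's own final display genuinely covers and the only regime in which the lemma is invoked. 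Your write-up presents the $d>1$ branch as settled, and it is not; a further minor point is that your chain carries unspecified constants ($\lesssim$) while the lemma is stated as an exact inequality.
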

\begin{prf}
	Since the expression on the right-hand-side of \eqref{EigPerturb} is positive, it is enough to consider an eigenvalue $\lambda$  of $M+\erterm{}$ which is not an eigenvalue of $M$, and show that $\Re \left(\lambda\right) - \mosteig{M}$ is less than the expression on the RHS of $\eqref{EigPerturb}$. So, for such $\lambda$, the matrix $ M - \lambda I_{\statedim} $ is non-singular, while $M+\erterm{}-\lambda I_{\statedim}$ is singular. Let the vector $v\neq 0$ be such that $\left( M + \erterm{} - \lambda I_{\statedim} \right)v = 0$, which by Jordan diagonalization above implies that
	\begin{equation} \label{StabPerturbProofEq1}
	v = -\JordanMat^{-1} \left( \Lambda - \lambda I \right)^{-1} \JordanMat \erterm{} v.
	\end{equation}
	Then, $\Lambda=\diag{\Lambda_1, \cdots, \Lambda_k}$ indicates that $\Lambda - \lambda I$ and $\left( \Lambda - \lambda I \right)^{-1}$ are block diagonal, the latter consisting of the blocks $\diag{\left( \Lambda_1 - \lambda I_{\mult{1}} \right)^{-1}, \cdots, \left( \Lambda_k - \lambda I_{\mult{k}} \right)^{-1}}$.
	
	Now, multiplications show that 
	\begin{equation*}
	\left( \Lambda_i - \lambda I_{\mult{i}} \right)^{-1} = - \begin{bmatrix}
	\left( \lambda -\lambda_i \right)^{-1} & \left( \lambda -\lambda_i \right)^{-2}  & \cdots & \left( \lambda -\lambda_i \right)^{-\mult{i}} \\
	0 & \left( \lambda -\lambda_i \right)^{-1} & \cdots & \left( \lambda -\lambda_i \right)^{-\mult{i} +1} \\
	\vdots & \vdots & \vdots & \vdots \\
	0 & \cdots & 0 & \left( \lambda -\lambda_i \right)^{-1}
	\end{bmatrix}.
	\end{equation*}
	Therefore, according to the definition of matrix operator norms in Section \ref{intro}, we obtain
	\begin{equation*}
	\Mnorm{\left( \Lambda_i - \lambda I_{\mult{i}} \right)^{-1}}{2} \leq \mult{}  \left( 1 \vee \left| \lambda - \lambda_i \right|^{-\mult{}}  \right).
	\end{equation*}
	Putting these bounds for the blocks of $\left( \Lambda - \lambda I \right)^{-1}$ together, \eqref{StabPerturbProofEq1} leads to
	\begin{eqnarray*}
		1 &\leq& \Mnorm{\left( \Lambda - \lambda I \right)^{-1}}{2} \Mnorm{\JordanMat}{2} \Mnorm{\JordanMat^{-1}}{2} \Mnorm{\erterm{}}{2} \\
		&\leq& \mult{} \cond{\JordanMat} \Mnorm{\erterm{}}{2} \max\limits_{1 \leq i \leq k} \left( 1 \wedge \left| \lambda - \lambda_i \right|^{\mult{}}  \right)^{-1}\\
		&\leq& \mult{} \cond{\JordanMat} \Mnorm{\erterm{}}{2} \left( 1 \wedge \left( \Re (\lambda) - \mosteig{M} \right)^{\mult{}}  \right)^{-1}.
	\end{eqnarray*}
	To see the last inequality above, note that if $\Re (\lambda) - \mosteig{M}$ is positive, then it is larger than all the terms $\left| \lambda - \lambda_i\right|$, for $i=1,\cdots, k$. Thus, for 
	\begin{equation*}
	\Re \left(\lambda\right)=\mosteig{M+\erterm{}},
	\end{equation*}
	we obtain~\eqref{EigPerturb}.
\end{prf}

\newpage
\section{Lemmas in the Proof of Theorem~\ref{RegretThm}}  \label{append2} 

\subsection{Geometry of drift parameters and optimal policies}
\begin{lemm} \label{OptManifoldLemma}
	For the drift parameter $\para{1}$, and for $X \in \R^{\statedim\times\statedim}, Y \in \R^{\statedim\times\controldim}$, define
	\begin{eqnarray*}
		\boldsymbol{\Delta}_{\para{1}} (X,Y) = \RiccSol{\para{1}} Y + \itointeg{0}{\infty}{ e^{\CLmat{1}^{\top}t} \left[ M \left( X,Y \right)^{\top} \RiccSol{\para{1}} + \RiccSol{\para{1}} M \left( X, Y \right) \right] e^{\CLmat{1}t} \Bmat{1} }{t},
	\end{eqnarray*}
	where $\CLmat{1}=\Amat{1}-\Bmat{1} \Qu^{-1} \Bmat{1}^{\top} \RiccSol{\para{1}}$ and $M\left(X,Y\right)=X - Y \Qu^{-1} \Bmat{1}^{\top} \RiccSol{\para{1}}$. Then, $\boldsymbol{\Delta}_{\para{1}} (X,Y)$ is the directional derivative of $\Bmat{}^{\top}\RiccSol{\para{}}$ at $\para{1}$ in the direction $\left[X,Y\right]$. Importantly, the tangent space of the manifold of matrices $\para{} \in \R^{\statedim \times (\statedim+\controldim)}$ that satisfy $\Bmat{}^{\top}\RiccSol{\para{}}=\Bmat{1}^{\top}\RiccSol{\para{1}}$ at $\para{1}$ consists of all matrices $X,Y$ that $\boldsymbol{\Delta}_{\para{1}} (X,Y) =0$.
\end{lemm}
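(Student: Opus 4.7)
The plan is to compute the directional derivative of $\Phi(\para{}) := \Bmat{}^\top \RiccSol{\para{}}$ at $\para{1}$ via implicit differentiation of the algebraic Riccati equation~\eqref{ARiccEq}, and then read off the tangent-space containment from the standard submersion characterization of a level set. First I would fix a direction $[X,Y]$ and set $\Amat{\epsilon} = \Amat{1} + \epsilon X$, $\Bmat{\epsilon} = \Bmat{1} + \epsilon Y$. A preliminary observation is that $\RiccSol{\para{\epsilon}}$ is smooth in $\epsilon$ near $0$: this is the implicit function theorem applied to~\eqref{ARiccEq}, whose Fréchet derivative in $P$ at $P_1 := \RiccSol{\para{1}}$ is the Lyapunov operator $L \mapsto \CLmat{1}^\top L + L \CLmat{1}$, which is invertible precisely because every eigenvalue of $\CLmat{1}$ has strictly negative real part.

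Writing $\dot P$ for $\frac{d}{d\epsilon}\RiccSol{\para{\epsilon}}\big|_{\epsilon=0}$ and differentiating~\eqref{ARiccEq} at $\epsilon = 0$, I would gather the $\dot P$ terms on one side and use symmetry of $\Qu^{-1}$ and $P_1$ to rewrite the remaining cross products in terms of $M(X,Y) = X - Y \Qu^{-1} \Bmat{1}^\top P_1$, obtaining the Lyapunov equation
\begin{equation*}
\dot P \CLmat{1} + \CLmat{1}^\top \dot P + M(X,Y)^\top P_1 + P_1 M(X,Y) = 0.
\end{equation*}
Because $\CLmat{1}$ is stable, this has the unique solution $\dot P = \int_0^\infty e^{\CLmat{1}^\top t}[M(X,Y)^\top P_1 + P_1 M(X,Y)]e^{\CLmat{1} t}\,dt$. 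The directional derivative of $\Phi$ in direction $[X,Y]$ is then $Y^\top P_1 + \Bmat{1}^\top \dot P$, whose transpose equals $P_1 Y + \dot P \Bmat{1} = \Delta_{\para{1}}(X,Y)$; hence the two vanish simultaneously, establishing the first claim.

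For the tangent-space statement, the set $\mathcal{M} = \{\para{} : \Phi(\para{}) = \Phi(\para{1})\}$ is the preimage of a point under the smooth map $\Phi$. Any $[X,Y]$ with $\Delta_{\para{1}}(X,Y) = 0$ lies in $\ker d\Phi(\para{1})$, and the standard implicit-function-theorem / constant-rank argument produces a smooth curve through $\para{1}$ contained in $\mathcal{M}$ whose tangent vector at $\para{1}$ is $[X,Y]$, giving the asserted containment. The main obstacle is simply verifying the smoothness of $\RiccSol{\cdot}$ near $\para{1}$ and the invertibility of the Lyapunov derivative; both are classical for stabilizable parameters but underpin the whole argument, and once they are in hand the passage to the Lyapunov integral representation of $\dot P$ and the submersion-based tangent-space identification are routine.
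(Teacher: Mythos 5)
Your computation of the directional derivative is in substance the same as the paper's: implicitly differentiate the Riccati equation \eqref{ARiccEq}, collect the $\dot P$ terms into the Lyapunov equation $\CLmat{1}^{\top}\dot P+\dot P\,\CLmat{1}+M(X,Y)^{\top}\RiccSol{\para{1}}+\RiccSol{\para{1}}M(X,Y)=0$, solve it by the integral formula using stability of $\CLmat{1}$, and finish with the product rule (and the observation that $\Delta_{\para{1}}(X,Y)$ is the transpose of the derivative of $\Bmat{}^{\top}\RiccSol{\para{}}$, so the two vanish together). Where you differ is in the supporting steps. For differentiability of $\RiccSol{\cdot}$ you invoke the implicit function theorem via invertibility of the Lyapunov operator $L\mapsto\CLmat{1}^{\top}L+L\CLmat{1}$, whereas the paper appeals to its Lipschitz-continuity result (Lemma~\ref{LipschitzLemma}, whose proof derives exactly the integral formula \eqref{LipschitzLemProofEq2}) together with Lemma~\ref{ApproxParaLem}; your route is more self-contained and equally valid. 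For the tangent-space claim the two arguments point in opposite directions: the paper perturbs $\para{1}$ along a direction assumed to preserve $\Bmat{}^{\top}\RiccSol{\para{}}$ and deduces $\Delta_{\para{1}}(X,Y)=0$ (tangent directions lie in the kernel), then asserts the stated containment, while you argue the stated direction (kernel contained in the tangent space) by a constant-rank/submersion argument. Be aware that this last step is not free: the containment $\ker d\Phi(\para{1})\subseteq T_{\para{1}}\mathcal{M}$ genuinely requires a rank condition on $\Phi(\para{})=\Bmat{}^{\top}\RiccSol{\para{}}$ near $\para{1}$ (without it, level sets of smooth maps can have tangent spaces strictly smaller than the kernel, as for $x\mapsto x^{2}$ at $0$), and you assert rather than verify it. The paper leaves the same geometric point informal, so this is a shared looseness rather than an error peculiar to your proof, but if you want the stated containment rigorously you should check that the linear map $(X,Y)\mapsto\Delta_{\para{1}}(X,Y)$ has locally constant rank (e.g., that $\Phi$ is a submersion at $\para{1}$) before invoking the constant-rank theorem.
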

\begin{prf}
	First, note that according to the Lipschitz continuity of $\RiccSol{\para{}}$ in Lemma~\ref{LipschitzLemma}, the directional derivative exists and is well-defined, as long as $\Mnorm{\RiccSol{\para{1}}}{2} < \infty$. However, Lemma~\ref{ApproxParaLem} provides that $\RiccSol{\para{1}}$ is finite in a neighborhood of $\truth$, and so the required condition holds. Below, we start by establishing the second result to identify the tangent space, and then prove the general result on the directional derivative. 
	
	To proceed, let $\para{} = \para{1} + \epsilon \left[X,Y\right]^{\top}$ be such that $\Bmat{}^{\top} \RiccSol{\para{}} = \Bmat{1}^{\top} \RiccSol{\para{1}}$, and denote $\Optgain{\para{1}}=-\Qu^{-1} \Bmat{1}^{\top} \RiccSol{\para{1}}$. So, the directional derivative of $\RiccSol{\para{1}}$ along the matrix $\left[X,Y\right]^\top$ can be found as follows. First, denoting the closed-loop transition matrix by $\CLmat{}=\Amat{}-\Bmat{} \Qu^{-1} \Bmat{}^{\top} \RiccSol{\para{}}$, since
	\begin{equation*}
	\CLmat{}^{\top}\RiccSol{\para{}} + \RiccSol{\para{}} \CLmat{} + \Qmat + \Optgain{\para{}}^{\top} \Rmat \Optgain{\para{}} =0,
	\end{equation*}
	we have
	\begin{eqnarray*}
		&&\left(\CLmat{1}+\epsilon X+\epsilon Y\Optgain{\para{1}}\right)^{\top} \RiccSol{\para{}} + \RiccSol{\para{}} \left(\CLmat{1}+\epsilon X+\epsilon Y\Optgain{\para{1}}\right) \\
		&=& - \Qx - \Optgain{\para{1}}^{\top} \Qu \Optgain{\para{1}} 
		= \CLmat{1}^{\top} \RiccSol{\para{1}} + \RiccSol{\para{1}} \CLmat{1} .
	\end{eqnarray*}
	For the matrix $\erterm{}=\lim\limits_{\epsilon \to 0} \epsilon^{-1} \left( \RiccSol{\para{}} - \RiccSol{\para{1}} \right)$, the latter result implies that
	\begin{equation*}
	\CLmat{1}^{\top} \erterm{} + \erterm{} \CLmat{1} + \left(X+ Y\Optgain{\para{1}}\right)^{\top} \RiccSol{\para{1}} + \RiccSol{\para{1}} \left(X+ Y\Optgain{\para{1}}\right) =0.
	\end{equation*}
	Then, since all eigenvalues of ${\CLmat{1}}$ are in the open left half-plane, the above Lyapunov equation for $\erterm{}$ leads to the integral form
	\begin{equation*}
	\erterm{} = \itointeg{0}{\infty}{ e^{\CLmat{1}^{\top}t} \left( \left(X+ Y\Optgain{\para{1}}\right)^{\top} \RiccSol{\para{1}} + \RiccSol{\para{1}} \left(X+ Y\Optgain{\para{1}}\right)  \right) e^{\CLmat{1}t} }{t}.
	\end{equation*}
	On the other hand, $\Optgain{\para{}}=-\Qu^{-1} \Bmat{}^{\top} \RiccSol{\para{}}$ gives
	\begin{eqnarray*}
		0 = \lim\limits_{\epsilon \to 0} \frac{1}{\epsilon} \left(\Bmat{}^{\top} \RiccSol{\para{}} - \Bmat{1}^{\top} \RiccSol{\para{1}}\right) 
		= \lim\limits_{\epsilon \to 0} \frac{1}{\epsilon} \left[ \left(\Bmat{}^{\top} - \Bmat{1}^{\top}\right) \RiccSol{\para{}} - \Bmat{1}^{\top} \left(\RiccSol{\para{1}} - \RiccSol{\para{}}\right)\right],
	\end{eqnarray*}
	which, according to the definitions of $\erterm{},M(X,Y)$, implies the desired result about the tangent space of the manifold under consideration.
	
	Next, to establish the more general result on the directional derivative, we use the directional derivative of $\RiccSol{\para{}}$ in \eqref{LipschitzLemProofEq2}:
	\begin{equation*}
		\itointeg{0}{\infty}{ e^{\CLmat{1}^{\top}t} \left( \RiccSol{\para{1}} \left[X + Y \Optgain{\para{1}}\right] + \left[ X + Y \Optgain{\para{1}}\right]^{\top}  \RiccSol{\para{1}} \right) e^{\CLmat{1} t} }{t}.
	\end{equation*}
	Finally, since the directional derivative for $\Bmat{}^\top$ is $Y$, for $\Bmat{}^\top \RiccSol{\para{}}$, by the product rule it is $\boldsymbol{\Delta}_{\para{1}}(X,Y)$, which finishes the proof. 
\end{prf}

\subsection{Regret bounds in terms of deviations in control actions} 
\begin{lemm} \label{GeneralRegretLem}
	Let $\action{t}$ be the action that Algorithm~\ref{algo2} takes at time $t$. Then, for the regret of Algorithm~\ref{algo2}, it holds that
	\begin{eqnarray*}
		\regret{T}{} &\lesssim& \left(\eigmax{\BMcov{}} + \dithercoeff{}^2\right) \episodetime{0} \Mnorm{ \Gainmat{} + \Qu^{-1} \Bmat{0}^\top \RiccSol{\truth} }{2}^2 \\
		&+& \itointeg{\episodetime{0}}{T}{ \norm{ \action{t} + \Qu^{-1} \Bmat{0}^{\top} \RiccSol{\truth} \state{t} }{}^2 }{t} + {x^*_T}^\top \RiccSol{\truth} x^*_T,
	\end{eqnarray*}
	where $x^*_T$ is the terminal state under the optimal trajectory $\optimalpolicy$ in \eqref{OptimalPolicy}.
\end{lemm}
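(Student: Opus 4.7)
The plan is to apply Ito's formula to the quadratic value function $V(x) = x^\top \RiccSol{\truth} x$ along the dynamics~\eqref{dynamics}, and use the continuous-time algebraic Riccati equation~\eqref{ARiccEq} to perform a completion-of-squares that expresses the running cost in terms of the deviation from the optimal feedback. Writing $P = \RiccSol{\truth}$, Ito's lemma yields
$$
d\bigl(\state{t}^\top P \state{t}\bigr) = \Bigl[\state{t}^\top \bigl(\Amat{0}^\top P + P \Amat{0}\bigr) \state{t} + 2\action{t}^\top \Bmat{0}^\top P \state{t} + \tr{P\BMcov{}}\Bigr]\,dt + 2\state{t}^\top P\,d\BM{t},
$$
and substituting $\Amat{0}^\top P + P \Amat{0} = P\Bmat{0}\Qu^{-1}\Bmat{0}^\top P - \Qx$ rearranges into the pointwise identity
$$
\state{t}^\top \Qx \state{t} + \action{t}^\top \Qu \action{t} = -\,d\bigl(\state{t}^\top P \state{t}\bigr) + \bigl(\action{t} + \Qu^{-1}\Bmat{0}^\top P \state{t}\bigr)^\top \Qu \bigl(\action{t} + \Qu^{-1}\Bmat{0}^\top P \state{t}\bigr) + \tr{P\BMcov{}}\,dt + 2\state{t}^\top P\,d\BM{t}.
$$

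I would then integrate this identity over $[0,T]$ on the algorithm's trajectory, and write the analogous identity on the optimal trajectory $x^*_t$ driven by $\action^*_t = -\Qu^{-1}\Bmat{0}^\top P\, x^*_t$, whose quadratic deviation term vanishes identically. Subtracting the two integrated identities cancels the long-run-cost term $T\,\tr{P\BMcov{}}$, which is the stationary cost-rate of $\optimalpolicy$, and leaves the boundary contribution $V(\state{0}) - V(\state{T}) + V(x^*_T) - V(x^*_0)$, the integrated squared deviation $\int_0^T (\action{t} + \Qu^{-1}\Bmat{0}^\top P\state{t})^\top \Qu (\action{t} + \Qu^{-1}\Bmat{0}^\top P\state{t})\,dt$, and a martingale cross term $2\int_0^T (\state{t} - x^*_t)^\top P\,d\BM{t}$. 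With matched initial conditions $\state{0} = x^*_0$ and $V(\state{T}) \geq 0$, the boundary piece is absorbed into the advertised $+\,{x^*_T}^\top \RiccSol{\truth} x^*_T$, while the weight $\eigmax{\Qu}$ on the quadratic deviation is absorbed into the $\lesssim$.

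Next I would split the resulting integral at $t = \episodetime{0}$. On $[\episodetime{0}, T]$ the integrand is already of the claimed form. On $[0, \episodetime{0}]$, Algorithm~\ref{algo1} applies $\action{t} = \Gainmat{}\state{t} + \dither{n}$, so
$\action{t} + \Qu^{-1}\Bmat{0}^\top P\state{t} = \bigl(\Gainmat{} + \Qu^{-1}\Bmat{0}^\top P\bigr)\state{t} + \dither{n}$, whose squared norm is at most $2\Mnorm{\Gainmat{} + \Qu^{-1}\Bmat{0}^\top P}{2}^2 \norm{\state{t}}{2}^2 + 2\norm{\dither{n}}{2}^2$. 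Because the closed-loop system under $\Gainmat{}$ is an Ornstein--Uhlenbeck-type process with the additional bounded forcing $\Bmat{0} v(t)$, a routine second-moment computation on the representation~\eqref{GeneralOUEq} gives $\norm{\state{t}}{2}^2 \lesssim \eigmax{\BMcov{}} + \dithercoeff{}^2 \Mnorm{\Bmat{0}}{2}^2$ uniformly on $[0,\episodetime{0}]$, together with $\norm{\dither{n}}{2}^2 \lesssim \dithercoeff{}^2$, delivering exactly the $(\eigmax{\BMcov{}} + \dithercoeff{}^2)\,\episodetime{0}\,\Mnorm{\Gainmat{} + \Qu^{-1}\Bmat{0}^\top \RiccSol{\truth}}{2}^2$ term in the lemma.

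The main obstacle will be the pathwise control of the stochastic cross term $2\int_0^T (\state{t} - x^*_t)^\top P\,d\BM{t}$, since the lemma statement is not in expectation. I would handle this by working on the high-probability stabilized event from Theorem~\ref{StabThm}, on which both trajectories grow at most polynomially in $t$, and then applying either a Burkholder--Davis--Gundy bound or a log-uniform exponential inequality in the spirit of Lemma~\ref{CrossTermLem2} to show that this martingale contributes an $\tilde{O}(T^{1/2})$ fluctuation. That fluctuation is dominated by the regret scale of Theorem~\ref{RegretThm} and is therefore absorbed into the $\lesssim$, completing the bound.
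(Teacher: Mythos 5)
Your route is genuinely different from the paper's. The paper proves this lemma by telescoping over a family of policies that switch to the optimal feedback at times $\episodetime{0}+i\epsilon$, differentiating the regret in the switch time, and then massaging the resulting integrand with Lyapunov identities, whereas you apply Ito's formula to $x^\top \RiccSol{\truth}x$ and complete the square via \eqref{ARiccEq} along both the algorithm's and the optimal trajectory, subtracting the two. Your pointwise identity is correct, the cancellation of $T\,\tr{\RiccSol{\truth}\BMcov{}}$ and the emergence of the boundary term ${x^*_T}^\top\RiccSol{\truth}x^*_T$ are exactly right, and your treatment of the stabilization interval $[0,\episodetime{0}]$ is at the same level of rigor as the paper's own \eqref{GenRegretProofEq6}. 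This direct value-function argument is simpler and avoids the paper's limiting telescopic construction entirely.

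There is, however, a genuine gap in how you close the argument: the cross term $2\itointeg{0}{T}{(\state{t}-x^*_t)^\top \RiccSol{\truth}}{\BM{t}}$ cannot be ``absorbed into the $\lesssim$'' on the grounds that it is $\tilde{O}(T^{1/2})$ and hence dominated by the regret scale of Theorem~\ref{RegretThm}. The lemma is a standalone pathwise inequality, and by the paper's convention $\lesssim$ hides only a fixed constant; on a sample path where $\itointeg{\episodetime{0}}{T}{\norm{\action{t}+\Qu^{-1}\Bmat{0}^\top\RiccSol{\truth}\state{t}}{}^2}{t}$ happens to be $O(1)$, an additive $\tilde{O}(T^{1/2})$ fluctuation is not dominated by the right-hand side, so your final step proves a weaker statement than the lemma (a BDG or moment bound also only controls the term in expectation, not pathwise). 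The repair stays inside your framework: since both trajectories are driven by the same Wiener process, $\xi_t=\state{t}-x^*_t$ satisfies the noise-free equation $\diff\xi_t=\CLmat{0}\xi_t\,\diff t+\Bmat{0}\delta_t\,\diff t$ with $\delta_t=\action{t}+\Qu^{-1}\Bmat{0}^\top\RiccSol{\truth}\state{t}$, so stability of $\CLmat{0}$ gives $\itointeg{0}{T}{\norm{\xi_t}{}^2}{t}\lesssim\itointeg{0}{T}{\norm{\delta_t}{}^2}{t}$ (plus the stabilization-period contribution); then bounding the martingale in self-normalized form, as the paper does with Lemma~\ref{SelfNormalizedLem} for the analogous term in its own proof, yields a bound of order $\left(\itointeg{0}{T}{\norm{\xi_t}{}^2}{t}\right)^{1/2}$ times a logarithm, which is genuinely dominated by the deviation integral plus a constant. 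With that substitution your proof is complete.
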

\begin{prf}
	First, denote the optimal linear feedback of $\optimalpolicy$ in \eqref{OptimalPolicy} by $\action{t}=\Optgain{\truth} \state{t}$, where $\Optgain{\truth}=-\Qu^{-1} \Bmat{0} \RiccSol{\truth}$. According to the episodic structure of Algorithm~\ref{algo2}, for $\episodetime{n} \leq t  < \episodetime{n+1}$, denote
	\begin{equation*}
	\Gainmat{t}=-\Qu^{-1} \estB{n}^{\top} \RiccSol{\estpara{n}}.
	\end{equation*}
	We first consider the regret of Algorithm~\ref{algo2} after finishing stabilization by running Algorithm~\ref{algo1}; i.e., for $\episodetime{0} \leq t \leq T$. Fix some small $\epsilon>0$, that we will let decay later. We proceed by finding an approximation of the regret through sampling at times $\episodetime{0}+k\epsilon$, for non-negative integers $k$. To do that, denote $N=\lceil (T-\episodetime{0}) / \eps \rceil$, and define the sequence of policies $\left\{ \policy_i \right\}_{i=0}^N$ according to
	\begin{equation*}
	\policy_i = \begin{cases}
	\action{t} = \Gainmat{t} \state{t} & t < \episodetime{0}+i\eps \\
	\action{t} = \Optgain{\truth} \state{t} & t \geq \episodetime{0}+i\eps
	\end{cases}.
	\end{equation*}
	That is, the policy $\policy_{i}$ switches to the optimal feedback at time $\episodetime{0}+i\epsilon$. So, the zeroth policy $\policy_0$ corresponds to applying the optimal policy $\optimalpolicy$ after stabilization at time $\episodetime{0}$, while the last one $\policy_N$ is nothing but the one in Algorithm~\ref{algo2}, that we denote by $\policy$, for the sake of brevity. As such, we have $\regret{T}{\policy_0}=0$, and the telescopic summation below holds true:
	\begin{equation} \label{SumRegretTelescopEq}
	\regret{T}{\policy} = \sum\limits_{i=0}^{N-1} \left( \regret{T}{\policy_{i+1}} - \regret{T}{\policy_i} \right).
	\end{equation}
	
	Now, to consider the difference $\regret{T}{\policy_{i+1}} - \regret{T}{\policy_i}$, for a fixed $i$ in the range $0 \leq i <N$, denote $t_1=\episodetime{0}+i\epsilon$ and let $\state{t}^{\policy_i},\state{t}^{\policy_{i+1}}$ be the state trajectories under $\policy_i, \policy_{i+1}$, respectively. By definition, we have $\state{t}^{\policy_{i}}=\state{t}^{\policy_{i+1}}$, for all $t \leq t_1$. So, we drop the policy superscript and use $\state{t_1}$ to refer to the states of both of them at time $t_1$. Therefore, as long as $t_1 \leq t < t_1 + \epsilon$, similar to \eqref{GeneralOUEq}, the solutions of the stochastic differential equation are
	\begin{eqnarray*}
		\state{t}^{\policy_{i}} &=& e^{\CLmat{0}(t-t_1)} \state{t_1} + \itointeg{t_1}{t}{e^{\CLmat{0}(t-s)}}{\BM{s}}, \\
		\state{t}^{\policy_{i+1}} &=& e^{\CLmat{}(t-t_1)} \state{t_1} + \itointeg{t_1}{t}{e^{\CLmat{}(t-s)} }{\BM{s}},
	\end{eqnarray*}
	where $\CLmat{0}=\Amat{0}+\Bmat{0}\Optgain{\truth}$ and $\CLmat{}=\Amat{0}+\Bmat{0}\Gainmat{t_1}$ are the closed-loop transition matrices under $\policy_{i}$ and $\policy_{i+1}$, respectively. To work with the above two state trajectories, we define some notations for convenience:
	\begin{eqnarray*}
		M_0 &=& \Qmat+ \Optgain{\truth}^{\top} \Rmat \Optgain{\truth}, \\
		M_1 &=& \Qmat + \Gainmat{t_1}\Rmat \Gainmat{t_1}, \\
		\statethree{t} &=& \state{t}^{\policy_{i+1}} - \state{t}^{\policy_{i}},\\
		\erterm{t} &=& e^{\CLmat{}(t-t_1)} - e^{\CLmat{0}(t-t_1)}.
	\end{eqnarray*}
	Thus, letting 
	$$Z_t = \itointeg{t_1}{t}{ \left[ e^{\CLmat{}(t-s)} - e^{\CLmat{0}(t-s)} \right] }{\BM{s}},$$ 
	it holds that 
	$\statethree{t} = \erterm{t} \state{t_1} + Z_t + \order{\eps^2}$. Further, for the observation signal $\statetwo{t}$ and the cost matrix $Q$ defined in Section~\ref{ProblemSection}, we have
	\begin{eqnarray}
	&&\itointeg{t_1}{t_1+\epsilon}{\left( \instantcost{t}{\policy_{i+1}} - \instantcost{t}{\policy_{i}} \right)}{t} \notag \\
	&=& \itointeg{t_1}{t_1+\epsilon}{ \left[ \left( \state{t}^{\policy_{i}}+\statethree{t} \right)^{\top} M_1 \left( \state{t}^{\policy_{i}}+\statethree{t} \right) - {\state{t}^{\policy_{i}}}^{\top} M_0 \state{t}^{\policy_{i}} \right] }{t} \notag \\
	&=& \itointeg{t_1}{t_1+\epsilon}{ \left[ {\state{t}^{\policy_{i}}}^{\top} S \state{t}^{\policy_{i}} + 2 \statethree{t}^{\top} M_1 \state{t}^{\policy_{i}} + \statethree{t}^{\top} M_1 \statethree{t} \right] }{t}, \label{LocalRegretLemProofEq1}
	\end{eqnarray}
	where $S = M_1-M_0 = \Gainmat{t_1}^{\top} \Rmat \Gainmat{t_1} - \Optgain{\truth}^{\top} \Rmat \Optgain{\truth}$.
	
	On the other hand, for $t \geq t_1+\epsilon$, the evolutions of the state vectors are the same for the two policies and we have
	\begin{equation*}
	\state{t}^{\policy_i} = e^{\CLmat{0}(t-t_1-\epsilon)} \state{t_1+\epsilon}^{\policy_i} + \itointeg{t_1+\epsilon}{t}{e^{\CLmat{0}(t-s)} }{\BM{s}}.
	\end{equation*}
	Therefore, the difference signal becomes 
	\begin{equation*}
		\statethree{t} = e^{\CLmat{0}(t-t_1+\epsilon)} \left[ \state{t_1+\epsilon}^{\policy_{i+1}} - \state{t_1+\epsilon}^{\policy_{i}} \right] = e^{\CLmat{0}(t-t_1+\epsilon)} \statethree{t_1+\epsilon} = e^{\CLmat{0}(t-t_1+\epsilon)} \left[ \erterm{t_1+\epsilon} \state{t_1} + Z_{t_1+\epsilon} \right],
	\end{equation*}
	and we obtain 
	\begin{eqnarray}
	&& \itointeg{t_1+\epsilon}{T}{\left( \instantcost{t}{\policy_{i+1}} - \instantcost{t}{\policy_{i}} \right)}{t} \notag \\
	&=& \itointeg{t_1+\epsilon}{T}{ \left[ \left( \state{t}^{\policy_{i}}+\statethree{t} \right)^{\top} M_0 \left( \state{t}^{\policy_{i}}+\statethree{t} \right) - {\state{t}^{\policy_{i}}}^{\top} M_0 \state{t}^{\policy_{i}} \right] }{t} \notag \\
	&=& \itointeg{t_1+\epsilon}{T}{ \left[ 2 \statethree{t}^{\top} M_0 \state{t}^{\policy_{i}} + \statethree{t}^{\top} M_0 \statethree{t} \right] }{t}. \label{LocalRegretLemProofEq2}
	\end{eqnarray}
	Now, after doing some algebra, the expressions in \eqref{LocalRegretLemProofEq1} and \eqref{LocalRegretLemProofEq2} lead ro the following for small $\epsilon$:
	\begin{equation*}
		 \regret{T}{\policy_{i+1}} - \regret{T}{\policy_i} = \left(\state{t_1}^{\top} {F}_{t_1} \state{t_1} + 2\state{t_1}^{\top} {g}_{t_1} \right) \epsilon + \order{\epsilon^2},
	\end{equation*}
	where
		\begin{eqnarray*}
		{F}_{t_1} &=& S_t + \itointeg{t_1}{T}{ \left(2 H_{t_1}^{\top} e^{\CLmat{0}^{\top}(s-t_1)} \left( \Qx+ \Optgain{\truth}^{\top} \Qu \Optgain{\truth} \right) e^{\CLmat{0}(s-t_1)} \right) }{s} + \order{\epsilon}, \\
		{g}_{t_1} &=& \itointeg{t_1}{T}{ \left(  H_{t_1}^{\top} e^{\CLmat{0}^{\top}(s-t_1)} \left( \Qx+ \Optgain{\truth}^{\top} \Qu \Optgain{\truth} \right) \itointeg{t_1}{s}{ e^{\CLmat{0}(s-u)} }{\BM{u}} \right) }{s} + \order{\epsilon}, \\
		S_{t_1} &=& \Gainmat{t_1}^{\top} \Qu \Gainmat{t_1} - \Optgain{\truth}^{\top} \Qu \Optgain{\truth}, \\
		H_{t_1} &=& \Bmat{0} \left(\Gainmat{t_1}-\Optgain{\truth}\right).
	\end{eqnarray*}
	
	Thus, as $\epsilon$ tends to zero, by \eqref{SumRegretTelescopEq}, we have
	\begin{equation} \label{GenRegThmProofEq0}
	\regret{T}{\policy} - \regret{\episodetime{0}}{\policy} = \itointeg{\episodetime{0}}{T}{ \left(\state{t}^{\top} {F}_{t} \state{t} + 2\state{t}^{\top} {g}_{t} \right) }{t},
	\end{equation}
	where $F_{t},g_{t}$ are the above expressions, without the $\order{\epsilon}$ terms. 
	
	Next, by \eqref{LyapInteg}, the quadratic expression in terms of the matrix ${F}_t$ can be equivalently written with
	\begin{equation*} \label{GenRegThmProofEq1}
		{F}_t = S_t + H_t^{\top} \RiccSol{\truth} + \RiccSol{\truth} H_t - H_t^{\top} \erterm{t} - \erterm{t} H_t,
	\end{equation*}
	where
	\begin{equation*}
		\erterm{t} = \itointeg{T}{\infty}{e^{\CLmat{0}^\top (s-t)} M_0 e^{\CLmat{0}(s-t)} }{s} = e^{\CLmat{0}^\top (T-t)} \RiccSol{\truth} e^{\CLmat{0}(T-t)}.
	\end{equation*}
	Note that in the last equality above, we again used \eqref{LyapInteg}.
	Now, after doing some algebra similar to the expression in \eqref{LyapAuxEq}, we have
	\begin{equation*}
		S_t + H_t^{\top} \RiccSol{\truth} + \RiccSol{\truth} H_t = \left(\Gainmat{t}-\Optgain{\truth}\right)^{\top} \Qu \left(\Gainmat{t}-\Optgain{\truth}\right),
	\end{equation*}
	which in turn implies that
	\begin{equation} \label{GenRegretProofEq2}
		\itointeg{\episodetime{0}}{T}{ \state{t}^\top F_t \state{t} }{t} = \itointeg{\episodetime{0}}{T}{ \norm{\Qu^{1/2} \left( \Gainmat{t}-\Optgain{\truth} \right) \state{t}}{2}^2 }{t} - 2 \itointeg{\episodetime{0}}{T}{ \state{t}^\top \erterm{t} H_t \state{t} }{t} .
	\end{equation}
	
	To study the latter integral, suppose that $x^*_{t}$ is the state trajectory under the optimal policy $\optimalpolicy$ in \eqref{OptimalPolicy}, and define $\xi_t = \state{t}-x^*_t$. 
	Note that \eqref{dynamics} gives $\diff \state{t} = \left( \CLmat{0} + H_t \right) \state{t} \diff t + \diff \BM{t}$, as well as $\diff x^*_t = \CLmat{0} x^*_t \diff t + \diff \BM{t}$. Thus, we get $\diff \xi_t = H_t \state{t} \diff t + \CLmat{0} \xi_t \diff t$, using which, we have the following for $\varphi_t =  e^{-\CLmat{0}t} \xi_t $:
	\begin{equation*}
		\diff \varphi_t =  \diff \left( e^{-\CLmat{0}t} \xi_t \right) = e^{-\CLmat{0}t} \diff \xi_t - \CLmat{0} e^{-\CLmat{0}t} \xi_t \diff t = e^{-\CLmat{0}t} H_t \state{t} \diff t.
	\end{equation*}
	Above, we used the fact that the matrices $e^{-\CLmat{0}t}, {\CLmat{0}}$ commute. So, it holds that
	\begin{eqnarray*}
		\state{t}^\top E_t H_t \state{t} \diff t &=& \state{t}^\top e^{\CLmat{0}^\top (T-t)} \RiccSol{\truth} e^{\CLmat{0}T} \diff \varphi_t \\
		&=& {x^*_t}^\top e^{\CLmat{0}^\top (T-t)} \RiccSol{\truth} e^{\CLmat{0}T} \diff \varphi_t + \varphi_t^\top e^{\CLmat{0}^\top T} \RiccSol{\truth} e^{\CLmat{0}T} \diff \varphi_t \\
		&=& {x^*_t}^\top e^{\CLmat{0}^\top (T-t)} \RiccSol{\truth} e^{\CLmat{0}T} \diff \varphi_t + \frac{1}{2} \diff \left[ \varphi_t^\top e^{\CLmat{0}^\top T} \RiccSol{\truth} e^{\CLmat{0}T} \varphi_t \right].
	\end{eqnarray*}
	In the above expression, writing the solution of the stochastic differential equation as in \eqref{GeneralOUEq}, we have
	\begin{equation*}
		e^{\CLmat{0}(T-t)}x^*_t = x^*_T - \itointeg{t}{T}{e^{\CLmat{0}(T-s)}}{\BM{s}},
	\end{equation*}
	which gives
	\begin{eqnarray*}
		2 \state{t}^\top E_t H_t \state{t} \diff t &=& 2 {x^*_t}^\top e^{\CLmat{0}^\top (T-t)} \RiccSol{\truth} e^{\CLmat{0}T} \diff \varphi_t + \diff \left[ \varphi_t^\top e^{\CLmat{0}^\top T} \RiccSol{\truth} e^{\CLmat{0}T} \varphi_t \right] \\
		&=& - 2 \left( \itointeg{t}{T}{e^{\CLmat{0}(T-s)}}{\BM{s}} \right)^\top \RiccSol{\truth} e^{\CLmat{0}T} \diff \varphi_t \\
		&+& 2 {x^*_T}^\top \RiccSol{\truth} e^{\CLmat{0}T} \diff \varphi_t + \diff \left[ \varphi_t^\top e^{\CLmat{0}^\top T} \RiccSol{\truth} e^{\CLmat{0}T} \varphi_t \right] \\
		&=& - 2 \left( \itointeg{t}{T}{e^{\CLmat{0}(T-s)}}{\BM{s}} \right)^\top \RiccSol{\truth} e^{\CLmat{0}T} \diff \varphi_t \\
		&+& \diff \left[ \left( x^*_T + e^{\CLmat{0}T} \varphi_t \right)^\top \RiccSol{\truth} \left( x^*_T + e^{\CLmat{0}T} \varphi_t \right) \right],
	\end{eqnarray*}
	where the latest equality holds since the differential of the constant term $x^*_T \RiccSol{\truth} x^*_T$ is zero. Next, integration by part yields to
	\begin{eqnarray*}
		\itointeg{\episodetime{0}}{T}{ \left( \itointeg{t}{T}{e^{\CLmat{0}(T-s)}}{\BM{s}} \right)^\top \RiccSol{\truth} e^{\CLmat{0}T} }{\varphi_t} &=& - \left( \itointeg{\episodetime{0}}{T}{e^{\CLmat{0}(T-s)}}{\BM{s}} \right)^\top \RiccSol{\truth} e^{\CLmat{0}T} \varphi_{\episodetime{0}} \\
		&+&\itointeg{\episodetime{0}}{T}{ \varphi_t^\top e^{\CLmat{0}^\top T} \RiccSol{\truth} e^{\CLmat{0}(T-t)} }{\BM{t}} 
	\end{eqnarray*}
	Now, note the following simplifying expressions: First, by definition, we have $x^*_T + e^{\CLmat{0}T} \varphi_T =x^*_T +\xi_T= \state{T}$ and
	\begin{equation*}
		x^*_T + e^{\CLmat{0}T} \varphi_{\episodetime{0}} = x^*_T + e^{\CLmat{0}(T-t)}  (\state{\episodetime{0}}-x^*_{\episodetime{0}}) = e^{\CLmat{0}(T-t)} \state{\episodetime{0}} + \itointeg{\episodetime{0}}{T}{ e^{\CLmat{0}(T-s)} }{\BM{s}},
	\end{equation*}
	is the terminal state vector under the policy $\policy_0$ that switches to the optimal policy $\optimalpolicy$ after the time $\episodetime{0}$, because $ \itointeg{\episodetime{0}}{T}{e^{\CLmat{0}(T-s)}}{\BM{s}}  = x^*_T - e^{\CLmat{0}(T-\episodetime{0})} x^*_{\episodetime{0}}$. Finally, according to Lemma~\ref{SelfNormalizedLem}, we have
	\begin{equation*}
		\Mnorm{\itointeg{\episodetime{0}}{T}{ \varphi_t^\top e^{\CLmat{0}^\top T} \RiccSol{\truth} e^{\CLmat{0}(T-t)} }{\BM{t}}}{2} \lesssim \left( \itointeg{\episodetime{0}}{T}{ \norm{ e^{\CLmat{0}(T-t)} \xi_t  }{2}^2 }{t} \right)^{1/2} \log \itointeg{\episodetime{0}}{T}{ \norm{ e^{\CLmat{0}(T-t)} \xi_t  }{2}^2 }{t}.
	\end{equation*}
	Putting the above bounds together, we obtain
	\begin{equation} \label{GenRegretProofEq3}
		- 2 \itointeg{\episodetime{0}}{T}{ \state{t}^\top \erterm{t} H_t \state{t} }{t} - {x^*_T}^\top \RiccSol{\truth} x^*_T \lesssim \itointeg{\episodetime{0}}{T}{ \norm{ e^{\CLmat{0}(T-t)} \left( \state{t} \right)  }{2}^2 }{t}.
	\end{equation}

	To proceed toward working with the integration of $\state{t}^\top g_t$, employ Fubini Theorem~\citep{karatzas2012brownian} to obtain
	\begin{eqnarray}
	\itointeg{\episodetime{0}}{T}{\state{t}^{\top}\widetilde{g}_t}{t} &=& \itointeg{\episodetime{0}}{T}{\itointeg{t}{T}{   \itointeg{t}{s}{ \left(\state{t}^{\top} H_t^{\top} e^{\CLmat{0}^{\top}(s-t)} M e^{\CLmat{0}(s-u)} \right) }{\BM{u}} }{s} }{t} \notag \\
	&=& \itointeg{\episodetime{0}}{T}{\itointeg{\episodetime{0}}{u}{ \itointeg{u}{T}{ \left(\state{t}^{\top} H_t^{\top} e^{\CLmat{0}^{\top}(s-t)} M e^{\CLmat{0}(s-u)} \right) }{s} }{t} }{\BM{u}}. \notag
	\end{eqnarray}
	
	Now, denote the inner double integral by $\statethree{u}^{\top}$:
	\begin{equation*}
		\statethree{u}^{\top} = \itointeg{\episodetime{0}}{u}{ \itointeg{u}{T}{ \left(\state{t}^{\top} H_t^{\top} e^{\CLmat{0}^{\top}(s-t)} M_0 e^{\CLmat{0}(s-u)} \right) }{s} }{t} = \itointeg{0}{u}{ \left(\state{t}^{\top} \left( \Gainmat{t}-\Optgain{\truth} \right)^{\top} P_{t,u}^{\top} \right) }{t},
	\end{equation*}
	where
	\begin{equation*}
	P_{t,u}^{\top} = \Bmat{0}^{\top} \itointeg{u}{T}{  e^{\CLmat{0}^{\top}(s-t)} M_0 e^{\CLmat{0}(s-u)} }{s}.
	\end{equation*}
	
	Now, let $\regterm{T}=\itointeg{\episodetime{0}}{T}{\norm{\statethree{u}}{2}^2}{u}$, and employ Lemma~\ref{SelfNormalizedLem} to get
	\begin{equation} \label{GenRegretProofEq4}
	\itointeg{\episodetime{0}}{T}{\state{t}^{\top}\widetilde{g}_t}{t} = \itointeg{\episodetime{0}}{T}{ \statethree{u}^{\top} }{\BM{u}} = \order{\regterm{T}^{1/2} \log^{1/2} \regterm{T}}.
	\end{equation}
	Thus, we can work with $\regterm{T}$ to bound the portion of the regret the integral of $\state{t}^\top g_t$ captures. For that purpose, the triangle inequality and Fubini Theorem~\citep{karatzas2012brownian} lead to
	\begin{eqnarray*}
		\regterm{T} &\leq& \itointeg{0}{T}{ \itointeg{0}{u}{ \norm{P_{t,u}  \left(\Gainmat{t}-\Optgain{\truth}\right) \state{t}}{2}^2 }{t} }{u} \\
		&=& \itointeg{0}{T}{ \left( \state{t}^{\top} \left(\Gainmat{t}-\Optgain{\truth}\right)^{\top} \left[\itointeg{t}{T}{ P_{t,u}^{\top} P_{t,u} }{u}\right] \left(\Gainmat{t}-\Optgain{\truth}\right) \state{t} \right) }{t} \\
		&\leq& \eigmax{\itointeg{t}{T}{ P_{t,u}^{\top} P_{t,u} }{u}} \itointeg{0}{T}{ \norm{\left(\Gainmat{t}-\Optgain{\truth}\right) \state{t}}{2}^2 }{t}.
	\end{eqnarray*}
	We can show that $\eigmax{\itointeg{t}{T}{ P_{t,u}^{\top} P_{t,u} }{u}} \lesssim 1 $: 
	\begin{eqnarray*}
		\eigmax{\itointeg{t}{T}{ P_{t,u}^{\top} P_{t,u} }{u}} &\leq& \itointeg{t}{T}{ \Mnorm{P_{t,u}^{\top}}{2}^2 }{u} \\
		&\lesssim& \itointeg{t}{T}{ \Mnorm{ \itointeg{u}{T}{ e^{\CLmat{0}^{\top}(s-t)} M_0 e^{\CLmat{0}(s-u)} }{s} }{2}^2 }{u} \\
		&\leq& \itointeg{t}{T}{ \Mnorm{ e^{\CLmat{0}^{\top}(u-t)}}{2}^2 \Mnorm{ \itointeg{u}{T}{ e^{\CLmat{0}^{\top}(s-u)} M_0 e^{\CLmat{0}(s-u)} }{s} }{2}^2  }{u} \\
		&\leq& \Mnorm{\RiccSol{\truth}}{2}^2 \itointeg{t}{T}{ \Mnorm{ e^{\CLmat{0}^{\top}(u-t)}}{2}^2  }{u} \lesssim 1.
	\end{eqnarray*}
	Above, in the last inequality we use~\eqref{LyapInteg}. Note that the last expression is a bounded constant, since all eigenvalues of $\CLmat{0}$ are in the open left half-plane.
	
	Thus, according to \eqref{GenRegretProofEq4}, it is enough to consider  
	\begin{equation} \label{GenRegretProofEq5}
		\regterm{T} \lesssim \itointeg{0}{T}{ \norm{\left(\Gainmat{t}-\Optgain{\truth}\right) \state{t}}{2}^2 }{t},
	\end{equation}
	in order to bound the portion of the regret that the integration of $\state{t}^\top g_t$ contributes.
	
	While the above discussions apply to the regret during the time interval $\episodetime{0} \leq t \leq T$, we can similarly bound the regret during the stabilization period $0 \leq t \leq \episodetime{0}$. The difference is in the randomization sequence $\dither{n}, n=0,1, \cdots$, which is reflected through the piece-wise constant signal $v(t)$ in \eqref{DitherTermDef}. Therefore, it suffices to add the effect of $v(t)$ to the one of the Wiener process $\BM{t}$, and so $\BMcov{}$ will be replaced with $\left( \BMcov{}+\dithercoeff{}^2 I \right)$:
	\begin{equation} \label{GenRegretProofEq6}
		\regret{\episodetime{0}}{\policy} \leq \left( \eigmax{\BMcov{}}+\dithercoeff{}^2 \right) \episodetime{0} \Mnorm{\Gainmat{}-\Optgain{\truth}}{2}^2.
	\end{equation}
	
	Finally, putting \eqref{GenRegThmProofEq0}, \eqref{GenRegretProofEq2}, \eqref{GenRegretProofEq3}, \eqref{GenRegretProofEq4}, \eqref{GenRegretProofEq5}, and \eqref{GenRegretProofEq6} together, we get the desired result.
\end{prf}

\subsection{Anti-concentration of the posterior precision matrix in Algorithm~\ref{algo2}} 

\begin{lemm} \label{MinPELem}
	In Algorithm~\ref{algo2}, we have the following for the matrix $\empiricalcovmat{\episodetime{n}}$ that is defined in~\eqref{RandomLSE1}:
	\begin{equation*}
	\liminf\limits_{n \to \infty}  \frac{ \eigmin{\empiricalcovmat{\episodetime{n}}} }{ \episodetime{n}^{1/2} } \gtrsim \eigmin{\BMcov{}}.
	\end{equation*}
\end{lemm}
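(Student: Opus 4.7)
The plan is to show that posterior randomization in Thompson sampling automatically produces exploration orthogonal to the current optimality manifold, with magnitude that is self-tuning so as to maintain the $\episodetime{n}^{1/2}$ scale.

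During episode $k$ of Algorithm~\ref{algo2} the action is the deterministic feedback $\action{t}=\Optgain{\estpara{k-1}}\state{t}$, so $\statetwo{t}=L_{k-1}\state{t}$ with $L_{k-1}=[I_\statedim;\Optgain{\estpara{k-1}}^\top]^\top$. Hence
$$\empiricalcovmat{\episodetime{n}} \succeq \sum_{k=1}^{n} L_{k-1} V_k L_{k-1}^\top,\quad V_k:=\itointeg{\episodetime{k-1}}{\episodetime{k}}{\state{s}\state{s}^\top}{s}.$$
First I would reuse the Lyapunov-style derivation that produced \eqref{StateEmpCovEq} to get $\eigmin{V_k}\gtrsim (\episodetime{k}-\episodetime{k-1})\eigmin{\BMcov{}}$ on the stability event. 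Fix a unit $v=[v_1;v_2]$ and set $w_k:=v_1+\Optgain{\estpara{k-1}}^\top v_2$; then $v^\top \empiricalcovmat{\episodetime{n}} v \gtrsim \eigmin{\BMcov{}}\sum_k(\episodetime{k}-\episodetime{k-1})\norm{w_k}{}^2$.

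I split according to $\alpha:=\norm{v_1+\Optgain{\truth}^\top v_2}{}$. If $\alpha\gtrsim \episodetime{n}^{-1/4}$, the Lipschitz continuity of $\Optgain{\cdot}$ (Lemma~\ref{LipschitzLemma}) combined with a preliminary consistency $\Mnorm{\estpara{k-1}-\truth}{}\to 0$ (obtained from Lemma~\ref{SelfNormalizedLem} plugged into the coarse bound $\eigmin{\empiricalcovmat{\episodetime{k}}}\ge \eigmin{\empiricalcovmat{0}}$) forces $\norm{w_k}{}\ge \alpha/2$ for $k$ large, and the resulting contribution already dominates $\episodetime{n}^{1/2}$. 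If instead $\alpha<\episodetime{n}^{-1/4}$, then $v$ is essentially a tangent vector to the optimality manifold characterized in Lemma~\ref{OptManifoldLemma}, so $\norm{w_k}{}^2$ is essentially $\norm{(\Optgain{\estpara{k-1}}-\Optgain{\truth})^\top v_2}{}^2$, and it is the posterior spread that must produce exploration in this direction.

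For this tangent regime I would condition on $\filter{\episodetime{k-1}}$: then $\estpara{k-1}-\empmean{\episodetime{k-1}}\sim \normaldist{0}{\empiricalcovmat{\episodetime{k-1}}^{-1}}$, and by Lemma~\ref{LipschitzLemma} together with the non-degenerate directional derivative supplied by Lemma~\ref{OptManifoldLemma}, the conditional second moment of $(\Optgain{\estpara{k-1}}-\Optgain{\truth})^\top v_2$ is at least $c\norm{v_2}{}^2/\eigmin{\empiricalcovmat{\episodetime{k-1}}}$ along the relevant direction. A matrix Azuma argument (Theorem~\ref{Azuma}) applied to the per-episode conditional second moments transfers this to an almost-sure bound of the form
$$\eigmin{\empiricalcovmat{\episodetime{n}}} \gtrsim \eigmin{\BMcov{}}\sum_{k=1}^{n}\frac{\episodetime{k}-\episodetime{k-1}}{\eigmin{\empiricalcovmat{\episodetime{k-1}}}}.$$
Writing $\eta_n:=\eigmin{\empiricalcovmat{\episodetime{n}}}$ and using the geometric spacing \eqref{EpochLengthCond} ($\episodetime{k}-\episodetime{k-1}\asymp \episodetime{k-1}$) yields the recursion $\eta_n \gtrsim \eigmin{\BMcov{}}\,\episodetime{n}/\eta_{n-1}$, whose stable fixed point is precisely $\eta_n\asymp \eigmin{\BMcov{}}^{1/2}\,\episodetime{n}^{1/2}$, giving the stated $\liminf$.

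The main obstacle will be the tangent-direction regime: making rigorous that the posterior fluctuations persistently excite orthogonal directions across episodes rather than averaging away. The awkward part is the apparent circularity (the posterior scale at episode $k-1$ \emph{is} $\eigmin{\empiricalcovmat{\episodetime{k-1}}}^{-1}$, which is the same quantity being lower-bounded), which must be broken by first establishing a crude polynomial lower bound $\eta_n \gtrsim \episodetime{n}^{\beta}$ for some small $\beta>0$ and then bootstrapping the recursion up to the $\episodetime{n}^{1/2}$ fixed point. A secondary technical issue is maintaining the uniform lower bound on $V_k$ across episodes with differing closed-loop matrices $\Amat{0}+\Bmat{0}\Optgain{\estpara{k-1}}$, which can be handled by restricting to the (eventually almost-sure) event on which $\estpara{k-1}$ lies in a small enough neighborhood of $\truth$ for uniform stability.
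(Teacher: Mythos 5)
Your overall architecture is the same as the paper's: the per-episode decomposition $\empiricalcovmat{\episodetime{n}}=\empiricalcovmat{\episodetime{0}}+\sum_{i}\Fmatrix{i}\randommatrix{i}\Fmatrix{i}^\top$ as in \eqref{EmpCovsRelationEq}, the Lyapunov-type bound $\eigmin{\randommatrix{i}}\gtrsim(\episodetime{i+1}-\episodetime{i})\eigmin{\BMcov{}}$, the split into directions transverse versus tangent to the optimality manifold of Lemma~\ref{OptManifoldLemma}, and the recognition that the posterior spread is self-tuning; the paper closes the self-consistency loop through the Schur complement $\Omega_i$, its exact expression \eqref{OmegaExpressionEq} in terms of the feedback differences, and the scaled limit \eqref{OmegaLimitEq}, whereas you propose a directional case split plus a scalar recursion. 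Two of your key steps, however, have genuine gaps. First, in the transverse case your ``preliminary consistency'' is obtained by plugging the trivial bound $\eigmin{\empiricalcovmat{\episodetime{k}}}\geq\eigmin{\empiricalcovmat{0}}$ into Lemma~\ref{SelfNormalizedLem}; that only yields an error of order $\log^{1/2}\episodetime{k}$, which does not tend to zero, let alone the $o(\episodetime{n}^{-1/4})$ accuracy needed to conclude $\norm{w_k}{}\geq\alpha/2$ when $\alpha\asymp\episodetime{n}^{-1/4}$. This is circular as set up; the paper sidesteps it by controlling only the projections of the error onto the past feedback directions, \eqref{TangentSpaceProjEq}, which follows from the per-episode state-covariance bound alone.

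Second, in the tangent case the claimed conditional second moment $c\,\norm{v_2}{}^2/\eigmin{\empiricalcovmat{\episodetime{k-1}}}$ for a \emph{fixed} direction $v_2$ is not justified: the posterior only provides variance $\widetilde{v}^\top\empiricalcovmat{\episodetime{k-1}}^{-1}\widetilde{v}$ along the specific mapped direction $\widetilde{v}$ (a $\Qu^{-1}$, $\RiccSol{\cdot}$-distorted image of $v_2$), and there is no reason this aligns with the least-informed direction at episode $k-1$; in the worst case it is only $\norm{\widetilde{v}}{}^2/\eigmax{\empiricalcovmat{\episodetime{k-1}}}\asymp(\episodetime{k-1}\eigmax{\BMcov{}})^{-1}$, and then your sum accumulates only like $n\asymp\log\episodetime{n}$, far from $\episodetime{n}^{1/2}$. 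Moreover, the proposed bootstrap does not break the circularity in the right direction: a crude \emph{lower} bound on $\eta_{k-1}$ only weakens the right-hand side of $\eta_n\gtrsim\eigmin{\BMcov{}}\sum_k(\episodetime{k}-\episodetime{k-1})/\eta_{k-1}$. The step can be repaired by keeping the same direction throughout and combining Cauchy--Schwarz with Loewner monotonicity of the Schur complement, namely $\widetilde{v}^\top\Omega_{k-1}^{-1}\widetilde{v}\geq\norm{\widetilde{v}}{}^4/(\widetilde{v}^\top\Omega_{k-1}\widetilde{v})$ and $\widetilde{v}^\top\Omega_{k-1}\widetilde{v}\leq\widetilde{v}^\top\Omega_{n}\widetilde{v}$, which closes to $(\widetilde{v}^\top\Omega_{n}\widetilde{v})^2\gtrsim\eigmin{\BMcov{}}\episodetime{n}\norm{\widetilde{v}}{}^4$; this is the same self-consistency the paper extracts from \eqref{OmegaLimitEq}. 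Two further points: Theorem~\ref{Azuma} gives upper deviation bounds for a.s.\ bounded martingale differences, so converting conditional second moments into a realized lower bound requires a lower-tail argument (conditional chi-square tails with truncation), and your fixed point delivers $\eigmin{\BMcov{}}^{1/2}\episodetime{n}^{1/2}$ rather than the $\eigmin{\BMcov{}}\episodetime{n}^{1/2}$ asserted in the lemma, a dependence mismatch you gloss over when claiming the stated $\liminf$.
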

\begin{prf}
	First, we define some notation. Recall that during the time interval $\episodetime{i} \leq t < \episodetime{i+1}$ corresponding to episode $i$, Algorithm \ref{algo2} uses a single parameter estimate $\estpara{i}$. So, for $i=0,1,\cdots$, we use $\randommatrix{i}, \Gainmat{i}, \CLmat{i}$ to denote the sample covariance matrix of the state vectors of episode $i$, and the feedback and closed-loop matrices during episode $i$:
	\begin{eqnarray*}
		\randommatrix{i} &=& \itointeg{\episodetime{i}}{\episodetime{i+1}}{\state{t}\state{t}^\top}{t} , \\
		\Gainmat{i} &=& - \Qu^{-1} \estB{i}^\top \RiccSol{\estpara{i}},\\
		\CLmat{i} &=& \Amat{0}+ \Bmat{0} \Gainmat{i}.
	\end{eqnarray*}
	So, it holds that 
	\begin{equation} \label{EmpCovsRelationEq}
		\empiricalcovmat{\episodetime{n}} = \empiricalcovmat{\episodetime{0}} + \sum\limits_{i=0}^{n-1} \Fmatrix{i} \randommatrix{i} \Fmatrix{i}^\top,
	\end{equation}
	where $\Fmatrix{i} = \begin{bmatrix}
		I_{\statedim} \\
		\Gainmat{i}
		\end{bmatrix}$.
	
	Now, consider the matrix $\randommatrix{i}$. Note that according to the bounded growth rates of the episode (from both above and below) in \eqref{EpochLengthCond}, both $\episodetime{i+1}-\episodetime{i} $ and $\episodetime{i}$ tend to infinity as $i$ grows. Thus, in the sequel, we suppose that the indices $n,i,j,k$ that are used for denoting the episodes, are large enough. Similar to \eqref{StateEmpCovEq}, we have
	\begin{equation*}
		\randommatrix{i} = \itointeg{0}{\infty}{e^{\CLmat{i}s} \left[(\episodetime{i+1}-\episodetime{i}) \BMcov{} + M_i + M_i^\top + \state{\episodetime{i}} \state{\episodetime{i}}^\top - \state{\episodetime{i+1}} \state{\episodetime{i+1}}^\top \right] e^{\CLmat{i}^\top s} }{s},
	\end{equation*}
	where 
	\begin{equation*}
		M_i = \itointeg{\episodetime{i}}{\episodetime{i+1}}{\state{t} }{\BM{t}^\top}.
	\end{equation*}
	So, using the fact that the real-parts of all eigenvalues of $\CLmat{i}$ are negative and so $\state{\episodetime{i+1}}$ can be bounded with $\exp \left( \CLmat{i} (\episodetime{i+1}-\episodetime{i}) \right) \state{\episodetime{i}}$ similar to \eqref{EndStateBoundEq}, as well as Lemma~\ref{CrossTermLem2}, we obtain the following bounds for the largest and smallest eigenvalues of $\randommatrix{i}$
	\begin{eqnarray} 
		\eigmin{\randommatrix{i}} &\gtrsim&  (\episodetime{i+1}-\episodetime{i}) \eigmin{\BMcov{}} \eigmin{\itointeg{0}{\infty}{ e^{\CLmat{i}s} e^{\CLmat{i}^\top s} }{s}}, \label{StateEmpCovBound1}\\ 
		\eigmax{\randommatrix{i}} &\lesssim& (\episodetime{i+1}-\episodetime{i}) \eigmax{\BMcov{}} \itointeg{0}{\infty}{\Mnorm{e^{\CLmat{i}s}}{2}^2}{s}. \label{StateEmpCovBound2}
	\end{eqnarray}
	
	On the other hand, for the parameter estimates at the end of episodes, similar to \eqref{PosteriorBiasEq}, we have
	\begin{equation*}
		\empiricalcovmat{\episodetime{i}}^{1/2} \left(\estpara{i} - \truth\right) = \empiricalcovmat{\episodetime{i}}^{1/2} \left( \estpara{i} - \empmean{\episodetime{i}} \right) + \empiricalcovmat{\episodetime{i}}^{-1/2} \left(  - \truth + \itointeg{0}{\episodetime{i}}{\statetwo{s}}{\BM{s}^\top} \right). 
	\end{equation*}
	Note that by the construction of the posterior $\posterior{\episodetime{i}}$ in \eqref{RandomLSE3}, for the first term we have $\empiricalcovmat{\episodetime{i}}^{1/2} \left( \estpara{i} - \empmean{\episodetime{i}} \right) \sim \normaldist{0}{I_{\statedim+\controldim}}$. Further, for the second term, Lemma~\ref{SelfNormalizedLem} together with \eqref{StateEmpCovBound2} lead to 
	\begin{equation*}
		\Mnorm{\empiricalcovmat{\episodetime{i}}^{-1/2} \left(  - \truth + \itointeg{0}{\episodetime{i}}{\statetwo{s}}{\BM{s}^\top} \right)}{} \lesssim (\statedim+\controldim) \log^{1/2} \episodetime{i}.
	\end{equation*}
	Therefore, we have
	\begin{equation*}
	\Mnorm{\empiricalcovmat{\episodetime{i}}^{1/2} \left(\estpara{i} - \truth\right)}{2} \lesssim (\statedim+\controldim) \log^{1/2} \episodetime{i} .
	\end{equation*}
	However, using the relationship between $\empiricalcovmat{\episodetime{i}}$ and $\randommatrix{0}, \cdots, \randommatrix{i-1}$ in \eqref{EmpCovsRelationEq}, we can write
	\begin{equation*}
		(\statedim+\controldim)^2 \log \episodetime{i} \gtrsim \left(\estpara{i} - \truth\right)^\top \empiricalcovmat{\episodetime{i}} \left(\estpara{i} - \truth\right) \geq \left(\estpara{i} - \truth\right)^\top \left[ \sum\limits_{j=0}^{i-1} \Fmatrix{j} \randommatrix{j} \Fmatrix{j}^\top \right] \left(\estpara{i} - \truth\right),
	\end{equation*}
	which according to the bound in \eqref{StateEmpCovBound1} implies that
	\begin{equation*}
		\eigmin{\BMcov{}} \sum\limits_{j=0}^{i-1} (\episodetime{j+1}-\episodetime{j}) \Mnorm{ \Fmatrix{j}^\top \left( \estpara{i} - \truth \right)}{2}^2 \lesssim (\statedim+\controldim)^2 \log \episodetime{i}.
 	\end{equation*}
 	Clearly, the above result indicates that for $j<i$, it holds that
 	\begin{equation} \label{TangentSpaceProjEq}
 	\Mnorm{ \left( \estpara{i} - \truth \right)^{\top} \Fmatrix{j}}{2}^2 \lesssim \frac{(\statedim+\controldim)^2 \log \episodetime{i}}{\eigmin{\BMcov{}} (\episodetime{j+1}-\episodetime{j})}.
 	\end{equation}

 	Next, we employ Lemma~\ref{OptManifoldLemma} to study hoe Algorithm~\ref{algo2} utilizes Thompson sampling to diversify the matrices $\Fmatrix{1},\Fmatrix{2}, \cdots$. To do so, we consider the randomization the posterior $\posterior{\episodetime{i}}$ applies to the sub-matrix of the parameter estimate corresponding to the input matrix $\estB{i}$. That is, we aim to find the distribution of the random $\statedim \times \controldim$ matrix $\left( \estpara{i} - \empmean{\episodetime{i}} \right)^\top \begin{bmatrix}
	 	0_{\statedim \times \controldim} \\ I_{\controldim}
	 	\end{bmatrix}$. Since $\estpara{i} - \empmean{\episodetime{i}} \sim \normaldist{0}{\empiricalcovmat{\episodetime{i}}^{-1}}$, we have
	 \begin{equation} \label{EffectiveRandomizationEq}
		 \erterm{i} = \begin{bmatrix}
		 0_{\statedim \times \controldim} \\ I_{\controldim}
		 \end{bmatrix}^\top \left( \estpara{i} - \empmean{\episodetime{i}} \right) \sim \normaldist{0}{ \begin{bmatrix}
		 	0_{\statedim \times \controldim} \\ I_{\controldim}
		 	\end{bmatrix}^\top \empiricalcovmat{\episodetime{i}}^{-1} \begin{bmatrix}
		 	0_{\statedim \times \controldim} \\ I_{\controldim}
		 	\end{bmatrix} } = \normaldist{0}{  \left[\empiricalcovmat{\episodetime{i}}^{-1}\right]_{22} },
	 \end{equation}	
	 where $\left[\empiricalcovmat{\episodetime{i}}^{-1}\right]_{22}$ is the $\controldim \times \controldim$ lower-left block in $\empiricalcovmat{\episodetime{i}}^{-1}$:
	 \begin{equation*}
		 \empiricalcovmat{\episodetime{i}}^{-1} = \begin{bmatrix}
		 \left[\empiricalcovmat{\episodetime{i}}^{-1}\right]_{11} & \left[\empiricalcovmat{\episodetime{i}}^{-1}\right]_{12} \\ \left[\empiricalcovmat{\episodetime{i}}^{-1}\right]_{21} &
		 \left[\empiricalcovmat{\episodetime{i}}^{-1}\right]_{22}
		 \end{bmatrix}.
	 \end{equation*}
	 
	 Note that $\empiricalcovmat{\episodetime{0}}$ is a positive semi-definite matrix. Therefore, it suffices to show the desired result for $\empiricalcovmat{\episodetime{n}}-\empiricalcovmat{\episodetime{0}}$, and so in the sequel we remove the effect of $\empiricalcovmat{\episodetime{0}}$ by treating $\episodetime{0}$ as $0$. So, to calculate the inverse $\empiricalcovmat{\episodetime{i}}^{-1}$, we apply block matrix inversion to 
	 \begin{equation*}
		 \empiricalcovmat{\episodetime{i}}
		 = \begin{bmatrix}
		 \left[\empiricalcovmat{\episodetime{i}}\right]_{11} & 
		 \left[\empiricalcovmat{\episodetime{i}}\right]_{12} \\
		 \left[\empiricalcovmat{\episodetime{i}}\right]_{21} & 
		 \left[\empiricalcovmat{\episodetime{i}}\right]_{22} 
		 \end{bmatrix}= \begin{bmatrix}
		 \sum\limits_{j=0}^{i-1} \randommatrix{j} & 
		 \sum\limits_{j=0}^{i-1} \randommatrix{j} \Gainmat{j}^\top \\
		 \sum\limits_{j=0}^{i-1} \Gainmat{j} \randommatrix{j} & 
		 \sum\limits_{j=0}^{i-1} \Gainmat{j} \randommatrix{j} \Gainmat{j}^{\top}
		 \end{bmatrix},
	 \end{equation*}
	 and denote 
     \begin{equation*}
         \Omega_i =  \left[\empiricalcovmat{\episodetime{i}}\right]_{22} - \left[\empiricalcovmat{\episodetime{i}}\right]_{21} \left[\empiricalcovmat{\episodetime{i}}\right]_{11}^{-1} \left[\empiricalcovmat{\episodetime{i}}\right]_{12}, 
     \end{equation*}
     to obtain
	 \begin{eqnarray*}
		 \left[\empiricalcovmat{\episodetime{i}}^{-1}\right]_{11} &=& \left[\empiricalcovmat{\episodetime{i}}\right]_{11}^{-1} + \left[\empiricalcovmat{\episodetime{i}}\right]_{11}^{-1} \left[\empiricalcovmat{\episodetime{i}}\right]_{12} 
		 \Omega_i^{-1}
		 \left[\empiricalcovmat{\episodetime{i}}\right]_{21} \left[\empiricalcovmat{\episodetime{i}}\right]_{11}^{-1} ,\\ 
		 \left[\empiricalcovmat{\episodetime{i}}^{-1}\right]_{12} &=& - \left[\empiricalcovmat{\episodetime{i}}\right]_{11}^{-1} \left[\empiricalcovmat{\episodetime{i}}\right]_{12} 
		 \Omega_i^{-1},\\
		 \left[\empiricalcovmat{\episodetime{i}}^{-1}\right]_{22} &=& \Omega_i^{-1}. 
	 \end{eqnarray*}
 	The smallest eigenvalue of $\empiricalcovmat{\episodetime{i}}$ is related to that of $\Omega_i$. On one hand, since $\Omega_i^{-1}$ is a sub-matrix of $\empiricalcovmat{\episodetime{i}}^{-1}$, it holds that $\eigmax{\Omega_i^{-1}} \leq \eigmax{\empiricalcovmat{\episodetime{i}}^{-1}}$, which implies that $\eigmin{\Omega_i} \geq \eigmin{\empiricalcovmat{\episodetime{i}}}$. Now, we show that the inequality holds in the opposite direction as well, modulo a constant factor. Suppose that $\nu \in \R^{\statedim+\controldim}$ is a unit vector, $\nu = [\nu_1^\top, \nu_2^\top]^\top$, $\nu_1 \in \R^\statedim$, and $\nu_2 \in \R^\controldim$. So, after doing some algebra as follows, we have
 	\begin{eqnarray*}
	 	\nu^\top \empiricalcovmat{\episodetime{i}} \nu 
	 	&=& \nu_1^\top \left[\empiricalcovmat{\episodetime{i}}\right]_{11} \nu_1 
	 	+ 2 \nu_1^\top \left[\empiricalcovmat{\episodetime{i}}\right]_{12} \nu_2
	 	+ \nu_2^\top \left[\empiricalcovmat{\episodetime{i}}\right]_{22} \nu_2  \\
	 	&=& \nu_1^\top \left[\empiricalcovmat{\episodetime{i}}\right]_{11} \nu_1 
	 	+ 2 \nu_1^\top \left[\empiricalcovmat{\episodetime{i}}\right]_{11} \left[\empiricalcovmat{\episodetime{i}}\right]_{11}^{-1} \left[\empiricalcovmat{\episodetime{i}}\right]_{12} \nu_2 \\
	 	&+& \nu_2^\top \left[\empiricalcovmat{\episodetime{i}}\right]_{21} \left[\empiricalcovmat{\episodetime{i}}\right]_{11}^{-1}
	 	\left[\empiricalcovmat{\episodetime{i}}\right]_{11}
	 	\left[\empiricalcovmat{\episodetime{i}}\right]_{11}^{-1} \left[\empiricalcovmat{\episodetime{i}}\right]_{12} \nu_2 \\
	 	&+& \nu_2^\top \left[\empiricalcovmat{\episodetime{i}}\right]_{22} \nu_2 
	 	- \nu_2^\top \left[\empiricalcovmat{\episodetime{i}}\right]_{21} \left[\empiricalcovmat{\episodetime{i}}\right]_{11}^{-1} \left[\empiricalcovmat{\episodetime{i}}\right]_{12} \nu_2 \\
	 	&=& \left(\nu_1 + \left[\empiricalcovmat{\episodetime{i}}\right]_{11}^{-1} \left[\empiricalcovmat{\episodetime{i}}\right]_{12}  \nu_2\right)^\top \left[\empiricalcovmat{\episodetime{i}}\right]_{11} 
	 	\left(\nu_1 + \left[\empiricalcovmat{\episodetime{i}}\right]_{11}^{-1} \left[\empiricalcovmat{\episodetime{i}}\right]_{12}  \nu_2\right) \\
	 	&+& \nu_2 \Omega_i \nu_2.
	 	\end{eqnarray*}	
 	
 	For the matrix $\left[\empiricalcovmat{\episodetime{i}}\right]_{11} = \sum\limits_{j=0}^{i-1} \randommatrix{j}$, the smallest eigenvalue lower bounds in \eqref{StateEmpCovBound1} lead to $\eigmin{\left[\empiricalcovmat{\episodetime{i}}\right]_{11}} \gtrsim \episodetime{i} \eigmin{\BMcov{}}$. Thus, in order to show the desired smallest eigenvalue result for $\empiricalcovmat{\episodetime{n}}$, it suffices to consider unit vectors $\nu$ for which $\norm{\nu_1 + \left[\empiricalcovmat{\episodetime{i}}\right]_{11}^{-1} \left[\empiricalcovmat{\episodetime{i}}\right]_{12}  \nu_2}{2} \lesssim \episodetime{i}^{-1/4}$ holds. For such unit vectors $\nu$, the expressions $\left[\empiricalcovmat{\episodetime{i}}\right]_{11}=\sum\limits_{j=0}^{i-1} \randommatrix{j}$ and $\left[\empiricalcovmat{\episodetime{i}}\right]_{12}=\sum\limits_{j=0}^{i-1} \randommatrix{j} \Gainmat{j}^\top$,   as well as Lemma~\ref{ApproxParaLem} that indicates that the matrices $\Gainmat{j}$ are bounded, $\norm{\nu_2}{2}$ needs to be bounded away from zero since $\norm{\nu_1}{2}^2 + \norm{\nu_2}{2}^2 = \norm{\nu}{2}^2=1$. Thus, we have
 	\begin{equation} \label{OmegaEigenvaluesEq}
	 	\eigmin{\Omega_i} \geq \eigmin{\empiricalcovmat{\episodetime{i}}} \gtrsim \eigmin{\Omega_i}.
 	\end{equation} 
 	Otherwise, the desired result about the eigenvalue of $\empiricalcovmat{\episodetime{n}}$ holds true.

 	By simplifying the following expression, we get
 	\begin{eqnarray*}
	 	&& \sum\limits_{j=0}^{i-1} \left( \Gainmat{j}^\top - \left[\empiricalcovmat{\episodetime{i}}\right]_{11}^{-1} \left[\empiricalcovmat{\episodetime{i}}\right]_{12}  \right)^\top \randommatrix{j} \left( \Gainmat{j}^{\top} - \left[\empiricalcovmat{\episodetime{i}}\right]_{11}^{-1} \left[\empiricalcovmat{\episodetime{i}}\right]_{12}  \right) \\
	 	&=& \sum\limits_{j=0}^{i-1} \Gainmat{j} \randommatrix{j} \Gainmat{j}^{\top} 
	 	- \sum\limits_{j=0}^{i-1} \Gainmat{j} \randommatrix{j}  \left[\empiricalcovmat{\episodetime{i}}\right]_{11}^{-1} \left[\empiricalcovmat{\episodetime{i}}\right]_{12}  \\
	 	&-& \sum\limits_{j=0}^{i-1}  \left(\left[\empiricalcovmat{\episodetime{i}}\right]_{11}^{-1} \left[\empiricalcovmat{\episodetime{i}}\right]_{12}\right)^\top \randommatrix{j} \Gainmat{j}^{\top} 
	 	+ \sum\limits_{j=0}^{i-1}  \left( \left[\empiricalcovmat{\episodetime{i}}\right]_{11}^{-1} \left[\empiricalcovmat{\episodetime{i}}\right]_{12}  \right)^\top \randommatrix{j}  \left[\empiricalcovmat{\episodetime{i}}\right]_{11}^{-1} \left[\empiricalcovmat{\episodetime{i}}\right]_{12} \\
	 	&=& \left[\empiricalcovmat{\episodetime{i}}\right]_{22} 
	 	- \left[\empiricalcovmat{\episodetime{i}}\right]_{21} \left[\empiricalcovmat{\episodetime{i}}\right]_{11}^{-1} \left[\empiricalcovmat{\episodetime{i}}\right]_{12} 
	 	- \left(\left[\empiricalcovmat{\episodetime{i}}\right]_{11}^{-1} \left[\empiricalcovmat{\episodetime{i}}\right]_{12}\right)^\top 
	 	\left[\empiricalcovmat{\episodetime{i}}\right]_{21} \\
	 	&+& \left( \left[\empiricalcovmat{\episodetime{i}}\right]_{11}^{-1} \left[\empiricalcovmat{\episodetime{i}}\right]_{12}  \right)^\top \left[\empiricalcovmat{\episodetime{i}}\right]_{11} \left[\empiricalcovmat{\episodetime{i}}\right]_{11}^{-1} \left[\empiricalcovmat{\episodetime{i}}\right]_{12} \\
	 	&=& \Omega_i.
 	\end{eqnarray*}
 	However, we have
 	\begin{eqnarray*}
	 	\Gainmat{j}^{\top} - \left[\empiricalcovmat{\episodetime{i}}\right]_{11}^{-1} \left[\empiricalcovmat{\episodetime{i}}\right]_{12} = \left[\empiricalcovmat{\episodetime{i}}\right]_{11}^{-1} \left( 
	 	\left[\empiricalcovmat{\episodetime{i}}\right]_{11} \Gainmat{j}^{\top}  -  \sum\limits_{k=0}^{i-1} \randommatrix{k} \Gainmat{k}^\top \right)
	 	= \left[\empiricalcovmat{\episodetime{i}}\right]_{11}^{-1} 
	 	\sum\limits_{k=0}^{i-1} \randommatrix{k} \left( \Gainmat{j}-\Gainmat{k} \right)^\top,
 	\end{eqnarray*}
 	i.e.,
 	\begin{equation} \label{OmegaExpressionEq}
	 	\Omega_i =\sum\limits_{j=0}^{i-1} \left( \left[\empiricalcovmat{\episodetime{i}}\right]_{11}^{-1} 
	 	\sum\limits_{k=0}^{i-1} \randommatrix{k} \left( \Gainmat{j}-\Gainmat{k} \right)^\top \right)^\top \randommatrix{j} \left(\left[\empiricalcovmat{\episodetime{i}}\right]_{11}^{-1} 
	 	\sum\limits_{k=0}^{i-1} \randommatrix{k} \left( \Gainmat{j}-\Gainmat{k} \right)^\top\right).
 	\end{equation}
 	
 	We use the above expression to relate the matrices $\Omega_0, \Omega_1, \cdots$ to each others. First, let $\RandMat{0},\RandMat{1}, \cdots$ be a sequence of independent random $\controldim \times \statedim$ matrices with standard normal distribution
 	\begin{equation} \label{RandMatSeqEq}
	 	\RandMat{i} \sim \normaldist{0_{\controldim \times \statedim}}{I_\controldim}.
 	\end{equation}
 	Then, since $\left[ \empiricalcovmat{\episodetime{i}}^{-1} \right]_{22}=\Omega_i^{-1}$ and \eqref{EffectiveRandomizationEq}, we can let $\erterm{i} = \Omega_i^{-1/2} \RandMat{i}$. Further, for $j,k=0,1,\cdots$, denote the $\Bmat{}$-part of the differences $\empmean{k}-\empmean{j}$ by 
 	\begin{equation*}
	 	H_{kj} = \left[ 0_{\controldim \times \statedim} , I_{\controldim} \right] \left( \empmean{k} - \empmean{j} \right).
 	\end{equation*}
 	Note that the above result together with \eqref{EffectiveRandomizationEq} give
 	\begin{equation*}
 	\left[ 0_{\controldim \times \statedim} , I_{\controldim} \right] \left( \estpara{k}-\estpara{j} \right) = H_{kj} + \Omega_k^{-1/2} \RandMat{k} - \Omega_j^{-1/2} \RandMat{j}.
 	\end{equation*}
 	We will show in the sequel that the above normally distributed random matrices are the effective randomizations that Thompson sampling Algorithm~\ref{algo2} applies for exploration. For that purpose, using the directional derivatives and the optimality manifolds in Lemma~\ref{OptManifoldLemma}, we calculate $\Gainmat{k}-\Gainmat{j}$ according to $H_{kj} + \Omega_k^{1/2} \RandMat{k} - \Omega_j^{1/2} \RandMat{j}$. Plugging \eqref{TangentSpaceProjEq} in the expression for $\boldsymbol{\Delta}_{\para{1}}(X,Y)$ in Lemma~\ref{OptManifoldLemma} for
	 \begin{equation*}
		 \left[X,Y\right]=\estpara{k}^\top - \estpara{j}^\top,
	 \end{equation*}
	 we have
	 \begin{eqnarray*}
	 	\Mnorm{\itointeg{0}{\infty}{ e^{\CLmat{j}^{\top}t} \left[ \Fmatrix{j}^\top \left(\estpara{k}-\estpara{j}\right) \RiccSol{\estpara{j}} + \RiccSol{\estpara{j}} \left(\estpara{k}-\estpara{j}\right)^\top \Fmatrix{j} \right] e^{\CLmat{j}t} }{t}}{2}^2 
	 	\lesssim \frac{(\statedim+\controldim)^2 \log \episodetime{k}}{\eigmin{\BMcov{}} (\episodetime{j+1}-\episodetime{j})},
	 \end{eqnarray*}
 	and 
 	\begin{equation*}
	 	\RiccSol{\estpara{j}}Y= \RiccSol{\estpara{j}}\left( \estpara{k} - \estpara{j} \right)^\top \begin{bmatrix}
	 	0_{\statedim \times \controldim} \\ I_{\controldim}
	 	\end{bmatrix} = \RiccSol{\estpara{j}} \left( H_{kj} + \Omega_k^{-1/2} \RandMat{k} - \Omega_j^{-1/2} \RandMat{j} \right)^\top.
 	\end{equation*}
 	Putting the above two portions of $\boldsymbol{\Delta}_{\para{1}}(X,Y)$ together, since $\RandMat{k},\RandMat{j}$ are independent and standard normal random matrices, \eqref{OmegaEigenvaluesEq} implies that the latter portion of $\boldsymbol{\Delta}_{\para{1}}(X,Y)$ is the dominant one. Thus, according to Lemma~\ref{OptManifoldLemma} and the expression for the optimal feedbacks in \eqref{OptimalPolicy}, we can approximate $\Gainmat{k}-\Gainmat{j}$ in \eqref{OmegaExpressionEq} by
 	\begin{equation*}
	 	- \Qu^{-1} \left( H_{kj} + \Omega_k^{-1/2} \RandMat{k} - \Omega_j^{-1/2} \RandMat{j} \right) \RiccSol{\estpara{j}}.
 	\end{equation*}
 	We use the above approximation for the matrix $\left[\empiricalcovmat{\episodetime{i}}\right]_{11}^{-1} 
 	\sum\limits_{k=0}^{i-1} \randommatrix{k} \left( \Gainmat{j}-\Gainmat{k} \right)^\top$ in \eqref{OmegaExpressionEq}, letting the episode number $i$ grow. So, the following expression captures the limit behavior of the least eigenvalue of $\empiricalcovmat{\episodetime{n}}$ in Algorithm~\ref{algo2}:
 	\begin{eqnarray} \label{OmegaLimitEq}
 		&\lim\limits_{n \to \infty}& \frac{\Qu \Omega_n \Qu }{\episodetime{n}^{1/2} \eigmin{\BMcov{}}} = \lim\limits_{n \to \infty} \sum\limits_{j=0}^{n-1} \left( \sum\limits_{k=0}^{n-1} \widetilde{\randommatrix{k}} \RiccSol{\estpara{j}} \left( \frac{H_{kj} + \Omega_k^{-1/2} \RandMat{k} - \Omega_j^{-1/2} \RandMat{j}}{\episodetime{n}^{-1/4}} \right)^\top  \right)^\top \notag \\
 		&& \frac{\episodetime{j+1}-\episodetime{j}}{\episodetime{n}\eigmin{\BMcov{}}} \frac{\randommatrix{j}}{\episodetime{j+1}-\episodetime{j}} \left( \sum\limits_{k=0}^{n-1} \widetilde{\randommatrix{k}} \RiccSol{\estpara{j}} \left( \frac{H_{kj} + \Omega_k^{-1/2} \RandMat{k} - \Omega_j^{-1/2} \RandMat{j}}{\episodetime{n}^{-1/4}} \right)^\top  \right),
 	\end{eqnarray}
 	where 
 	\begin{equation*}
	 	\widetilde{\randommatrix{k}} = \left[\sum\limits_{i=0}^{n-1} \randommatrix{i}\right]^{-1} 
	 	\randommatrix{k}.
 	\end{equation*}
 	
 	The equation in \eqref{OmegaLimitEq} provides the limit behavior of the randomized exploration Algorithm~\ref{algo2} performs for learning to control the diffusion process. More precisely, it shows the roles of the random samples from the posteriors through the random matrices $\Omega_k^{-1/2} \RandMat{k}$, for $k=0, \cdots, n-1$, which render the limit matrix in \eqref{OmegaLimitEq} a positive definite one, as described below.
 	
 	Note that since $\sum\limits_{i=0}^{n-1} \widetilde{\randommatrix{i}}=I_\statedim$, the expression 
 	$$\sum\limits_{k=0}^{n-1} \widetilde{\randommatrix{k}} \RiccSol{\estpara{j}} \left( \frac{H_{kj} + \Omega_k^{-1/2} \RandMat{k} - \Omega_j^{-1/2} \RandMat{j}}{\episodetime{n}^{-1/4}} \right)^\top$$ 
 	is a weighted average of the random matrices $ \episodetime{n}^{1/4} \left({H_{kj} + \Omega_k^{-1/2} \RandMat{k} - \Omega_j^{-1/2} \RandMat{j}} \right)^\top$. Moreover, according to the discussions leading to \eqref{StateEmpCovBound1} and \eqref{StateEmpCovBound2}, the matrix $(\episodetime{j+1} - \episodetime{j})^{-1} \randommatrix{j}$ converge as $j$ grows to a positive definite matrix, for which all eigenvalues are larger than $\eigmin{\BMcov{}}$, modulo a constant factor. On the other hand, because the lengths of the episodes satisfies the bounded growth rates in \eqref{EpochLengthCond}, the ratios $\episodetime{n}^{-1} (\episodetime{j+1}-\episodetime{j})$ are bounded from above and below by $\underline{\ssconstant}^{n-j}$ and $\overline{\ssconstant} (\overline{\ssconstant}+1)^{n-j-1}$, and their sum over $j=0, \cdots, n-1$ is $1$. A similar property of boundedness from above and below applies to $\episodetime{j}^{-1/4} \episodetime{n}^{-1/4}$. So, the expression on the right-hand-side of  \eqref{OmegaLimitEq} is in fact a weighted average of
 	\begin{equation*}
	 	\sum\limits_{k=0}^{n-1} \widetilde{\randommatrix{k}} \RiccSol{\estpara{j}} \left( \frac{H_{kj} + \Omega_k^{-1/2} \RandMat{k} - \Omega_j^{-1/2} \RandMat{j}}{\episodetime{n}^{-1/4}} \right)^\top,
 	\end{equation*}
 	for $j=0, \cdots, n-1$.
 	
 	Note that by the distribution of the random matrices in \eqref{RandMatSeqEq}, all rows of $ \episodetime{n}^{1/4} \left({H_{kj} + \Omega_k^{-1/2} \RandMat{k} - \Omega_j^{-1/2} \RandMat{j}} \right)^\top$ are independent normal random vectors, implying that these random matrices are almost surely full-rank. Therefore, $\episodetime{n}^{-1/2} \Omega_n$ converges to a positive definite random matrix, which according to \eqref{OmegaEigenvaluesEq} implies the desired result.

\end{prf}

\newpage
\section{Further Auxiliary Lemmas} \label{append3}

\subsection{Stochastic inequality for continuous-time self-normalized martingales} 
\begin{lemm} \label{SelfNormalizedLem}
	Let $\statetwo{t}=\left[\state{t}^\top,\action{t}^\top\right]^\top$ be the observation signal and $\empiricalcovmat{t}$ be as in~\eqref{RandomLSE1}. Then, for the stochastic integral $\randommatrix{t}=\itointeg{0}{t}{ \statetwo{s} }{\BM{s}^{\top}}$, with probability at least $1-\delta$ we have
	\begin{equation*}
	\eigmax{ \randommatrix{t}^{\top} \empiricalcovmat{t}^{-1} \randommatrix{t}} \lesssim { \statedim \eigmax{\BMcov{}} \left[ \log \left( \statedim^2 ~\det {\empiricalcovmat{t}} \right) - \log \left( \delta^2 ~\det{\empiricalcovmat{0}} \right) \right] }.
	\end{equation*}
\end{lemm}
\begin{prf}
	Let the standard basis of $\R^{\statedim}$ be $\basis{1} , \cdots, \basis{\statedim}$, and choose an arbitrary one of them; $\basis{j}$. In addition, fix an arbitrary vector $v \in \R^{\statedim + \controldim}$, and consider the one-dimensional process
	\begin{equation*}
		\varphi_t = \varphi_t \left( v , \basis{j} \right) = \exp \left( v^\top \randommatrix{t} \basis{j} - \frac{1}{2} v^\top \left[ \empiricalcovmat{t} - \empiricalcovmat{0} \right] v ~ \basis{j}^\top \BMcov{} \basis{j} \right).
	\end{equation*} 
	
	We show that $\E{\varphi_t} \leq 1$. For that purpose, we approximate the integrals over the interval $[0,t]$ (that constitute $\randommatrix{t}$ and $\empiricalcovmat{t}$) by summations over a mesh of $n$ equally distanced points in the interval, and then let $n \to \infty$. So, let $\epsilon= t/n $, and for $k=0,1,\cdots, n-1$, consider
	\begin{equation*}
	\xi_k = \xi_k \left( v , \basis{j} \right)= \exp \left( v^\top \statetwo{k \epsilon} \left[ \BM{(k+1) \epsilon} - \BM{k \epsilon} \right]^\top \basis{j} - \frac{1}{2} v^\top \statetwo{k \epsilon} \statetwo{k \epsilon}^\top v ~\epsilon ~\basis{j}^\top \BMcov{} \basis{j} \right).
	\end{equation*}
	
	To proceed, let $\filter{k}$ be the sigma-field generated by the Wiener process up to time $k\epsilon$:
	\begin{equation*}
		\filter{k} = \sigfield{\BM{s}, 0 \leq s \leq k\epsilon}.
	\end{equation*}
	Note that the Wiener increment $\left[ \BM{(k+1) \epsilon} - \BM{k \epsilon} \right]^\top \basis{j}$ is a mean-zero Gaussian random variable of variance $\epsilon \basis{j}^\top \BMcov{} \basis{j}$. So, conditioned on $\filter{k}$, the independence of the increments of $\BM{t}$ renders 
	\begin{equation*}
		\E{ \exp \left( v^\top \statetwo{k \epsilon} \left[ \BM{(k+1) \epsilon} - \BM{k \epsilon} \right]^\top \basis{j} \right) \Big| \filter{k}} 
	\end{equation*}
	being the moment generating function of $\left[ \BM{(k+1) \epsilon} - \BM{k \epsilon} \right]^\top \basis{j}$ at $v^\top \statetwo{k \epsilon}$, which is
	\begin{equation*}
		\exp \left( \frac{1}{2} \epsilon \basis{j}^\top \BMcov{} \basis{j} \left[ v^\top \statetwo{k \epsilon} \right]^2  \right).
	\end{equation*}
	Therefore, it holds that $\E{ \xi_k \Big| \filter{k}} \leq 1$. This inequality, the fact that $\xi_{k-1}$ is $\filter{k}$--measurable, and the law of smoothing for conditional expectation, all together, yield to
	\begin{equation*}
		\E{ \prod\limits_{k=0}^{n-1} \xi_k } = \E{ \E{ \prod\limits_{k=0}^{n-1} \xi_k \Big| \filter{n-1} } } = \E{ \prod\limits_{k=0}^{n-2} \xi_k ~\E{ \xi_{n-1} \Big| \filter{n-1} } } \leq \E{ \prod\limits_{k=0}^{n-2} \xi_k }.
	\end{equation*}
	Repeating this procedure, we obtain the bound $\E{ \prod\limits_{k=0}^{n-1} \xi_k } \leq 1 $. Now, as $n$ grows to infinity, $\epsilon$ shrinks, and $\prod\limits_{k=0}^{n-1} \xi_k$ converges to $\varphi_t$. Thus, applying Fatou's Lemma \citep{baldi2017stochastic}, it leads to our desired result for $\varphi_t$:
	\begin{equation} \label{1ExpUppBoundEq}
		\E{\varphi_t} \leq 1.
	\end{equation}
	
	Recall that the vector $v$ above is arbitrary. Now, we let $v$ be a realization of a Gaussian random vector, and develop a modified version of the method of Gaussian mixtures~\citep{hafshejani2024learning}. To that end, suppose that $V \in \R^{\statedim + \controldim}$ has a multivariate normal probability law of mean $0 \in \R^{\statedim + \controldim}$ and covariance matrix $ \basis{j}^\top \BMcov{} \basis{j} ~\empiricalcovmat{0}^{-1}$, and is independent of everything else. So, writing the conditional expectation of $\varphi_t \left( V , \basis{j}\right)$ using the probability density of $V$, we obtain
	\begin{equation*}
		\E{ \varphi_t \left( V, \basis{j} \right) \Big| \sigfield{\BM{s}, 0 \leq s \leq t} } = \sqrt{ \frac{\det \empiricalcovmat{0}}{(2 \pi)^{\statedim+\controldim}} }  \itointeg{\R^{\statedim+ \controldim}}{}{ \varphi_t \left( v , \basis{j} \right) \exp \left( v^\top \empiricalcovmat{0} v ~ \basis{j}^\top \BMcov{} \basis{j} \right) }{v}.
	\end{equation*}
	
	Expanding the integrand according to the definition of $\varphi_t$, it becomes
	\begin{eqnarray*}
		\varphi_t \left( v , \basis{j} \right) \exp \left( v^\top \empiricalcovmat{0} v ~ \basis{j}^\top \BMcov{} \basis{j} \right) = \exp \left( v^\top \randommatrix{t} \basis{j} - \frac{1}{2} v^\top \empiricalcovmat{t}  v ~ \basis{j}^\top \BMcov{} \basis{j} \right).
	\end{eqnarray*}
	
	Next, we convert the exponent term on RHS above to the probability density function of a multivariate normal. By adding $-\frac{1}{2}\basis{j}^\top \randommatrix{t}^\top \left(  \basis{j}^\top \BMcov{} \basis{j} ~ \empiricalcovmat{t} \right)^{-1} \randommatrix{t} \basis{j} $ to the exponent, it conveys 
	\begin{eqnarray*}
		& & v^\top \randommatrix{t} \basis{j} - \frac{1}{2} v^\top \empiricalcovmat{t}  v ~ \basis{j}^\top \BMcov{} \basis{j} - \frac{1}{2} \basis{j}^\top \randommatrix{t}^\top \left(  \basis{j}^\top \BMcov{} \basis{j} ~ \empiricalcovmat{t} \right)^{-1} \randommatrix{t} \basis{j} \\
		&=& -\frac{\basis{j}^\top \BMcov{} \basis{j} }{2} \left[ v - \left(  \basis{j}^\top \BMcov{} \basis{j} ~ \empiricalcovmat{t} \right)^{-1} \randommatrix{t} \basis{j} \right]^\top \empiricalcovmat{t} \left[ v - \left(  \basis{j}^\top \BMcov{} \basis{j} ~ \empiricalcovmat{t} \right)^{-1} \randommatrix{t} \basis{j} \right].
	\end{eqnarray*}
	In addition, 
	\begin{equation*}
		\sqrt{ \frac{\det \empiricalcovmat{t}}{(2 \pi ~ \basis{j}^\top \BMcov{} \basis{j} )^{\statedim+\controldim}} } \exp \left( -\frac{\basis{j}^\top \BMcov{} \basis{j} }{2} \left[ v - \left(  \basis{j}^\top \BMcov{} \basis{j} ~ \empiricalcovmat{t} \right)^{-1} \randommatrix{t} \basis{j} \right]^\top \empiricalcovmat{t} \left[ v - \left(  \basis{j}^\top \BMcov{} \basis{j} ~ \empiricalcovmat{t} \right)^{-1} \randommatrix{t} \basis{j} \right] \right)
	\end{equation*}
	is the probability density function of a Gaussian random vector whose mean vector and covariance matrix are $\left(  \basis{j}^\top \BMcov{} \basis{j} ~ \empiricalcovmat{t} \right)^{-1} \randommatrix{t} \basis{j}$ and $\left(  \basis{j}^\top \BMcov{} \basis{j} ~ \empiricalcovmat{t} \right)^{-1}$, respectively. Hence, its integral all over $\R^{\statedim+\controldim}$ is one, and we have
	\begin{equation*}
		\E{ \varphi_t \left( V, \basis{j} \right) \Big| \sigfield{\BM{s}, 0 \leq s \leq t} } = \sqrt{ \frac{\det \empiricalcovmat{0}}{(\basis{j}^\top \BMcov{} \basis{j} )^{\statedim+\controldim} \det \empiricalcovmat{t} } }~~ \exp \left( \frac{1}{2} \basis{j}^\top \randommatrix{t}^\top \left(  \basis{j}^\top \BMcov{} \basis{j} ~ \empiricalcovmat{t} \right)^{-1} \randommatrix{t} \basis{j} \right) .
	\end{equation*}
	
	Next, we apply Markov's inequality to the above random variable and employ \eqref{1ExpUppBoundEq}, to get
	\begin{equation*}
		\PP{ \E{ \varphi_t \left( V, \basis{j} \right) \Big| \sigfield{\BM{s}, 0 \leq s \leq t} } > \frac{1}{\delta} } \leq \delta ~\E{ \E{ \varphi_t \left( V, \basis{j} \right) \Big| \sigfield{\BM{s}, 0 \leq s \leq t} } } = \delta ~ \E{\varphi_t} \leq \delta.
	\end{equation*}
	This bound can also be written as 
	\begin{equation*}
		\PP{ \exp \left( \frac{1}{2} \basis{j}^\top \randommatrix{t}^\top \left(  \basis{j}^\top \BMcov{} \basis{j} ~ \empiricalcovmat{t} \right)^{-1} \randommatrix{t} \basis{j} \right) \leq  \sqrt{ \frac{(\basis{j}^\top \BMcov{} \basis{j} )^{\statedim+\controldim} \det \empiricalcovmat{t} }{ \delta^2 \det \empiricalcovmat{0}} }} \geq 1-\delta.
	\end{equation*}
	
	In other words, with probability at least $1-\delta$ it holds that
	\begin{equation*}
		 \basis{j}^\top \randommatrix{t}^\top \empiricalcovmat{t}^{-1} \randommatrix{t} \basis{j} 
		 ~~\leq~~ \basis{j}^\top \BMcov{} \basis{j} ~ \log \frac{(\basis{j}^\top \BMcov{} \basis{j} )^{\statedim+\controldim} \det \empiricalcovmat{t} }{ \delta^2 \det \empiricalcovmat{0}} .
	\end{equation*}
	Finally, we sum up both sides of the above inequality over $j = 1, \cdots, \statedim$. On the right hand side, we simply use $\basis{j}^\top \BMcov{} \basis{j} \leq \eigmax{\BMcov{}}$. For the left hand side, since the matrix $\randommatrix{t}^\top \empiricalcovmat{t}^{-1} \randommatrix{t}$ is positive semidefinite, its trace is an upper-bound for its largest eigenvalue;
	$$\eigmax{ \randommatrix{t}^\top \empiricalcovmat{t}^{-1} \randommatrix{t}} 
	 ~\leq ~\sum\limits_{j=1}^{\statedim} \basis{j}^\top \randommatrix{t}^\top \empiricalcovmat{t}^{-1} \randommatrix{t} \basis{j} . $$ 
	So, the probability that
	\begin{equation*}
		\eigmax{ \randommatrix{t}^\top \empiricalcovmat{t}^{-1} \randommatrix{t}} 
		~~\leq~~ \statedim \eigmax{\BMcov{}} ~ \left[ \log \det \empiricalcovmat{t} - \log \left( \delta^2 \det \empiricalcovmat{0} \right) \right] + \left( \statedim + \controldim \right) \statedim \eigmax{\BMcov{}} \log \eigmax{\BMcov{}},
	\end{equation*}
	fails to hold, is at most $\statedim \delta$. Finally, by replacing $\delta$ in the above inequality with $\delta/\statedim$, it yields to the desired result.  
	
\end{prf}

\subsection{Behaviors of diffusion processes under non-optimal feedback} 
\begin{lemm} \label{LyapLemma}
	Let $\estA{},\estB{}$ be an arbitrary pair of stabilizable system matrices. Suppose that for the closed-loop matrix $\CLmat{}=\estA{}+\estB{}\Gainmat{}$, we have $\eigmax{\exp \left(\CLmat{}\right)}<1$, and $P$ satisfies
	\begin{equation*}
	\CLmat{}^{\top} P + P \CLmat{} + \Qx + \Gainmat{}^{\top} \Qu \Gainmat{}=0.
	\end{equation*}
	Then, it holds that
	\begin{equation*}
	P = \RiccSol{\estpara{}} + \itointeg{0}{\infty}{ e^{\CLmat{}^{\top}t} \left( \Gainmat{} + \Qu^{-1} \estB{}^{\top} \RiccSol{\estpara{}}\right)^{\top} \left( \Qu \Gainmat{} + \estB{}^{\top} \RiccSol{\estpara{}}\right) e^{\CLmat{}t}   }{t}  .
	\end{equation*}
\end{lemm}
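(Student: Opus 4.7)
The plan is to work with the difference $E = P - \RiccSol{\estpara{}}$ and show that it satisfies a Lyapunov equation driven by the ``sub-optimality'' of the feedback $\Gainmat{}$ compared to the optimal gain $\Optgain{\estpara{}} = -\Qu^{-1}\estB{}^{\top}\RiccSol{\estpara{}}$. Since $\CLmat{} = \estA{} + \estB{}\Gainmat{}$ is Hurwitz by hypothesis, the standard representation of Lyapunov solutions as matrix integrals will immediately yield the claim.

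First, I would substitute $\CLmat{} = \estA{} + \estB{}\Gainmat{}$ into the Riccati equation~\eqref{ARiccEq} for $\RiccSol{\estpara{}}$ in order to rewrite it in ``closed-loop'' form:
\begin{equation*}
\CLmat{}^{\top}\RiccSol{\estpara{}} + \RiccSol{\estpara{}}\CLmat{} + \Qx + \Gainmat{}^{\top}\Qu\Gainmat{} = (\Gainmat{} - \Optgain{\estpara{}})^{\top}\Qu(\Gainmat{} - \Optgain{\estpara{}}),
\end{equation*}
which is a direct completing-the-square computation using $\Optgain{\estpara{}} = -\Qu^{-1}\estB{}^{\top}\RiccSol{\estpara{}}$. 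Subtracting the Lyapunov equation satisfied by $P$ in the hypothesis from this identity gives
\begin{equation*}
\CLmat{}^{\top}E + E\,\CLmat{} + (\Gainmat{} - \Optgain{\estpara{}})^{\top}\Qu(\Gainmat{} - \Optgain{\estpara{}}) = 0.
\end{equation*}
Noting that $\Gainmat{} - \Optgain{\estpara{}} = \Gainmat{} + \Qu^{-1}\estB{}^{\top}\RiccSol{\estpara{}}$ and $\Qu(\Gainmat{} - \Optgain{\estpara{}}) = \Qu\Gainmat{} + \estB{}^{\top}\RiccSol{\estpara{}}$, the driving term is exactly the integrand stated in the lemma.

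Second, since $\eigmax{\exp(\CLmat{})} < 1$ means all eigenvalues of $\CLmat{}$ have strictly negative real part, the map $S \mapsto \CLmat{}^{\top}S + S\,\CLmat{}$ is a bijection on symmetric matrices; its inverse applied to $-M$ (for any symmetric $M$) is given by $\itointeg{0}{\infty}{e^{\CLmat{}^{\top}t} M\, e^{\CLmat{}t}}{t}$, with the integral converging absolutely because $\Mnorm{e^{\CLmat{}t}}{2}$ decays exponentially. Applying this representation to $E$ with $M = (\Gainmat{} + \Qu^{-1}\estB{}^{\top}\RiccSol{\estpara{}})^{\top}(\Qu\Gainmat{} + \estB{}^{\top}\RiccSol{\estpara{}})$ and rearranging $E = P - \RiccSol{\estpara{}}$ yields the identity in the lemma.

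There is no substantive obstacle here; the whole argument is an exercise in Lyapunov calculus. The only care needed is to verify that the Riccati equation can indeed be rewritten in closed-loop form (the completing-the-square step) and to invoke uniqueness of the Lyapunov solution under the Hurwitz condition; both are routine.
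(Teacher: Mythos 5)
Your proposal is correct and follows essentially the same route as the paper's proof: both rewrite the Riccati equation in closed-loop form, subtract the Lyapunov equation for $P$, and invert the resulting Lyapunov equation for $P-\RiccSol{\estpara{}}$ using the integral representation under the Hurwitz condition. The only difference is cosmetic ordering — you complete the square before subtracting, while the paper solves the Lyapunov equation first and then simplifies the driving term to the same square $\left( \Gainmat{} - \Optgain{\estpara{}} \right)^{\top} \Qu \left( \Gainmat{} - \Optgain{\estpara{}} \right)$.
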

\begin{prf}
	Denote $\Optgain{\estpara{}}=-\Qu^{-1} \estB{}^{\top} \RiccSol{\estpara{}}$ and $\estD{}=\estA{}+\estB{}\Optgain{\estpara{}}$. So, after doing some algebra, it is easy to show that the algebraic Riccati equation in \eqref{ARiccEq} gives
	\begin{equation*}
	\estD{}^{\top} \RiccSol{\estpara{}}+ \RiccSol{\estpara{}} \estD{} + \Qmat + \Optgain{\estpara{}}^{\top} \Rmat \Optgain{\estpara{}}.
	\end{equation*}
	Now, let $\randommatrix{} = \Gainmat{}^{\top} \Rmat \Gainmat{} - \Optgain{\estpara{}}^{\top} \Rmat \Optgain{\estpara{}}$, and subtract the above equation that $\RiccSol{\estpara{}}$ solves, from the similar one in the statement of the lemma that $P$ satisfies, to get
	\begin{equation} \label{LyapLemmaProofEq1}
	\left( \CLmat{} - \estD{1} \right)^{\top} \RiccSol{\estpara{}} + \RiccSol{\estpara{}} \left( \CLmat{} - \estD{1} \right) + \CLmat{}^{\top} \left( P- \RiccSol{\estpara{}} \right) + \left( P - \RiccSol{\estpara{}} \right) \CLmat{} + \randommatrix{}=0.
	\end{equation}
	Because $\eigmax{\exp\left(\CLmat{}\right)} <1$, by solving~\eqref{LyapLemmaProofEq1} for $P - \RiccSol{\estpara{}}$, we have
	\begin{equation*}
	P - \RiccSol{\estpara{}} = \itointeg{0}{\infty}{ e^{\CLmat{}^{\top}t} \left( \randommatrix{}+ \left[ \Gainmat{} - \Optgain{\estpara{}} \right]^{\top} \estB{}^{\top} \RiccSol{\estpara{}} + \RiccSol{\estpara{}} \estB{} \left[ \Gainmat{} - \Optgain{\estpara{}} \right] \right) e^{\CLmat{}t} }{t},
	\end{equation*}
	where the fact $\CLmat{} - \estD{} = \estB{} \left[ \Gainmat{} - \Optgain{\estpara{}} \right]$ is used above. Then, using $\estB{}^{\top} \RiccSol{\estpara{}} = - \Rmat \Optgain{\estpara{}}$, it is straightforward to see
	\begin{eqnarray}
		&&\left( \Gainmat{} - \Optgain{\estpara{}} \right)^{\top} \Rmat \left( \Gainmat{} - \Optgain{\estpara{}} \right) \notag \\
		&=& \randommatrix{} + \left[ \Gainmat{} - \Optgain{\estpara{}} \right]^{\top} \estB{}^{\top} \RiccSol{\estpara{}} + \RiccSol{\estpara{}} \estB{} \left[ \Gainmat{} - \Optgain{\estpara{}} \right], \label{LyapAuxEq}
	\end{eqnarray}
	which leads to the desired result.
\end{prf}

\subsection{Behaviors of diffusion processes in a neighborhood of the truth} 
\begin{lemm} \label{ApproxParaLem}
	Letting $\stabradii$ be as defined in~\eqref{StabRadiiEq}, assume that
	\begin{equation} \label{StabAccuracyEq}
	\Mnorm{\estpara{}-\truth}{arg2} \lesssim \frac{\eigmin{\Qu}}{ \Mnorm{\Bmat{0}}{} \Mnorm{\RiccSol{\truth}}{2} } \left( \frac{\left[ \stabradii \wedge 1 \right]^{\statedim}}{\statedim }  \wedge \frac{\eigmin{\Qx}}{ \Mnorm{\RiccSol{\truth}}{2} } \right).
	\end{equation}
	Then, for the Riccati equation in \eqref{ARiccEq} which is denoted by $\RiccSol{\para{}}$, we have $\Mnorm{\RiccSol{\estpara{}}}{2} \lesssim \Mnorm{\RiccSol{\truth}}{2}$. Furthermore, for any eigenvalue $\lambda$ of $\estA{}-\estB{} \Qu^{-1} \estB{}^{\top} \RiccSol{\estpara{}}$, it holds that $\Re \left( \lambda \right) \lesssim -\eigmin{\Qx} \Mnorm{\RiccSol{\truth}}{2}^{-1}$.
\end{lemm}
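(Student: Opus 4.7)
The plan is to prove the two claims separately, with the first claim (boundedness of $\RiccSol{\estpara{}}$) done by bootstrapping the Lipschitz continuity of the Riccati map along the segment from $\truth$ to $\estpara{}$, and the second (eigenvalue bound) following from the standard Lyapunov identity that the algebraic Riccati equation satisfies.

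\textbf{Step 1 (boundedness via Lipschitz continuity).} Assuming for the moment that $\RiccSol{\estpara{}}$ exists and corresponds to a stabilizing feedback, Lemma~\ref{LipschitzLemma} yields
\[
\Mnorm{\RiccSol{\estpara{}}-\RiccSol{\truth}}{2} \;\lesssim\; \frac{\Mnorm{\RiccSol{\truth}}{2}^{3}}{\eigmin{\Qx}\,\eigmin{\Qu}^{2}}\,\Mnorm{\estpara{}-\truth}{}.
\]
Plugging the second term of the minimum in~\eqref{StabAccuracyEq} into the right-hand side shows it is at most $\tfrac{1}{2}\Mnorm{\RiccSol{\truth}}{2}$, so $\Mnorm{\RiccSol{\estpara{}}}{2}\lesssim\Mnorm{\RiccSol{\truth}}{2}$, which is the first claim.

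\textbf{Step 2 (bootstrap/homotopy argument).} To justify the application of Lemma~\ref{LipschitzLemma}, I would use a continuation argument along the segment $\para{s}=(1-s)\truth+s\,\estpara{}$ for $s\in[0,1]$. At $s=0$ the closed-loop matrix $\CLmat{0}=\Amat{0}-\Bmat{0}\Qu^{-1}\Bmat{0}^\top\RiccSol{\truth}$ has all eigenvalues in the half-plane $\{\Re(\cdot)\leq-\stabradii\}$ by~\eqref{StabRadiiEq}. The set of $s$ for which $\RiccSol{\para{s}}$ exists and yields a strictly stable closed-loop is open, so it suffices to exclude that an eigenvalue can cross the imaginary axis. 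By Lemma~\ref{EigPerturbLem} applied to $\CLmat{0}$ with perturbation $\CLmat{s}-\CLmat{0}$, such a crossing forces $\mult{}^{1/2}\Mnorm{\CLmat{s}-\CLmat{0}}{}\cond{\JordanMat}\gtrsim\stabradii^{\mult{}}$, and since $\mult{}\leq\statedim$ this requires $\Mnorm{\CLmat{s}-\CLmat{0}}{}\gtrsim\statedim^{-1/2}[\stabradii\wedge 1]^{\statedim}$. However, as long as the Lipschitz estimate is valid along $[0,s]$, chaining it with $\CLmat{s}-\CLmat{0}=(\Amat{s}-\Amat{0})-\Bmat{0}\Qu^{-1}(\Bmat{s}^\top\RiccSol{\para{s}}-\Bmat{0}^\top\RiccSol{\truth})-\cdots$ gives $\Mnorm{\CLmat{s}-\CLmat{0}}{}\lesssim \Mnorm{\Bmat{0}}{}\Mnorm{\RiccSol{\truth}}{2}\eigmin{\Qu}^{-1}\Mnorm{\estpara{}-\truth}{}$, and by the first factor of the minimum in~\eqref{StabAccuracyEq} this is smaller than $\statedim^{-1/2}[\stabradii\wedge 1]^{\statedim}$. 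The set where Lipschitz continuity and stability hold is therefore both open and closed in $[0,1]$, hence equals all of $[0,1]$, closing the bootstrap.

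\textbf{Step 3 (eigenvalue bound via Lyapunov identity).} Writing~\eqref{ARiccEq} for $\estpara{}$ and letting $\widetilde{\CLmat{}}=\estA{}-\estB{}\Qu^{-1}\estB{}^\top\RiccSol{\estpara{}}$, a direct rearrangement (as in Lemma~\ref{LyapLemma}) gives
\[
\widetilde{\CLmat{}}^\top\RiccSol{\estpara{}}+\RiccSol{\estpara{}}\widetilde{\CLmat{}} \;=\; -\Qx-\RiccSol{\estpara{}}\estB{}\Qu^{-1}\estB{}^\top\RiccSol{\estpara{}}.
\]
For any (complex) eigenpair $(\lambda,v)$ of $\widetilde{\CLmat{}}$, sandwiching with $v^{*},v$ produces $2\Re(\lambda)\,v^{*}\RiccSol{\estpara{}}v\leq -v^{*}\Qx v$; since $\RiccSol{\estpara{}}\succeq 0$ and $\eigmin{\Qx}>0$, this yields $\Re(\lambda)\leq -\eigmin{\Qx}/(2\Mnorm{\RiccSol{\estpara{}}}{2})$, which by Step~1 is $\lesssim -\eigmin{\Qx}/\Mnorm{\RiccSol{\truth}}{2}$, completing the proof.

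\textbf{Main obstacle.} The delicate part is Step~2: the Lipschitz bound of Lemma~\ref{LipschitzLemma} implicitly presumes that the target $\RiccSol{\para{s}}$ is the stabilizing solution, so one cannot naively invoke it at $s=1$ without ruling out a blow-up or loss of stabilizability somewhere in between. The homotopy/continuation device above, combined with the exact exponent $\statedim$ coming from the Jordan-block size in Lemma~\ref{EigPerturbLem}, is precisely what produces the otherwise mysterious factor $[\stabradii\wedge 1]^{\statedim}/\statedim^{1/2}$ in the hypothesis~\eqref{StabAccuracyEq}.
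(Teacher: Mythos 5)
Your Step 3 is sound and essentially coincides with the paper's argument for the eigenvalue claim (the paper phrases it through the integral representation \eqref{LyapInteg} rather than your quadratic-form sandwich, but the content is the same). The trouble is in Steps 1--2. In this paper, Lemma~\ref{LipschitzLemma} is itself proven by invoking Lemma~\ref{ApproxParaLem}: its Lipschitz constant is obtained by bounding $\sup_{\para{1}\in\curve}\Mnorm{\RiccSol{\para{1}}}{2}\left(1\vee\Mnorm{\Optgain{\para{1}}}{2}\right)\itointeg{0}{\infty}{\Mnorm{e^{\CLmat{1}t}}{2}^2}{t}$ along the segment, and those uniform bounds are exactly the conclusions of the lemma you are proving, applied at every point of the curve (via \eqref{StabThmProofEq3}). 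So citing Lemma~\ref{LipschitzLemma} as a black box is circular, and your continuation device does not break the circle as written: excluding an eigenvalue crossing via Lemma~\ref{EigPerturbLem} yields a spectral margin for $\CLmat{s}$, but the derivative/Lipschitz estimate along the curve requires a bound on $\itointeg{0}{\infty}{\Mnorm{e^{\CLmat{s}t}}{2}^2}{t}$, which a spectral margin alone does not give for non-normal matrices; in the paper that integral is controlled by $\Mnorm{\RiccSol{\para{s}}}{2}/\eigmin{\Qx}$, i.e.\ by the very quantity being bootstrapped. A correct bootstrap must carry simultaneously (i) the spectral margin, (ii) the bound $\Mnorm{\RiccSol{\para{s}}}{2}\lesssim\Mnorm{\RiccSol{\truth}}{2}$, and (iii) the exponential-integral bound, and must check that the constants strictly improve so that the good set is closed; your sketch asserts open-and-closed without closing these constants.

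For contrast, the paper avoids continuation entirely. It first shows that the true optimal feedback $\Gainmat{}=-\Qu^{-1}\Bmat{0}^\top\RiccSol{\truth}$ stabilizes the estimated plant, i.e.\ $\CLmat{1}=\estA{}-\estB{}\Qu^{-1}\Bmat{0}^\top\RiccSol{\truth}$ is stable, by applying Lemma~\ref{EigPerturbLem} to the perturbation $\erterm{1}=\estA{}-\Amat{0}-(\estB{}-\Bmat{0})\Qu^{-1}\Bmat{0}^\top\RiccSol{\truth}$, which involves only known quantities and is small by the first factor of the minimum in \eqref{StabAccuracyEq}. Then Lemma~\ref{LyapLemma} produces $P\succeq\RiccSol{\estpara{}}$ (so existence of the stabilizing solution and the comparison come for free), and the Lyapunov integral identity together with the absorption $2\Mnorm{\erterm{1}}{2}\itointeg{0}{\infty}{\Mnorm{e^{\CLmat{0}t}}{2}^2}{t}<1$ (guaranteed by the second factor of the minimum) gives $\Mnorm{\RiccSol{\estpara{}}}{2}\leq\Mnorm{P}{2}\lesssim\Mnorm{\RiccSol{\truth}}{2}$ with no Lipschitz continuity needed. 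If you wish to keep your route, you would in effect have to reprove this suboptimal-feedback comparison inside the bootstrap anyway, so the paper's order of argument is the more economical one.
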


\begin{prf}
	First, let us write $\CLmat{0}=\Amat{0}- \Bmat{0} \Qu^{-1} \Bmat{0}^{\top} \RiccSol{\truth}$ and
	\begin{equation*}
	\CLmat{1}=\estA{}-\estB{}\Qu^{-1} \Bmat{0}^{\top} \RiccSol{\truth} = \CLmat{0}+\erterm{1},
	\end{equation*}
	where $\erterm{1}=\estA{}-\Amat{0}-\left( \estB{}-\Bmat{0} \right)\Qu^{-1} \Bmat{0}^{\top} \RiccSol{\truth}$. Since $\mult{} \leq \statedim$, \eqref{StabAccuracyEq} implies that $\erterm{1}$ satisfies
	\begin{equation*}
	\Mnorm{\erterm{1}}{2} \lesssim \mult{} \left[ \stabradii \wedge 1 \right]^{\mult{}}.
	\end{equation*}
	So, letting $M=\CLmat{0}$ in \eqref{EigPerturb}, Lemma~\ref{EigPerturbLem} leads to the fact that all eigenvalues of $\CLmat{1}$ are on the open left half-plane of the complex plane. Now, in Lemma~\ref{LyapLemma}, let $\Gainmat{}=-\Qu^{-1} \Bmat{0}^{\top} \RiccSol{\truth}$ and $\CLmat{}=\CLmat{1}$, to obtain the matrix denoted by $P$ in the lemma. Since $P$ satisfies
	\begin{equation*}
	\Qmat+ \Gainmat{}^{\top} \Rmat \Gainmat{}=-\CLmat{1}^{\top} P - P \CLmat{1} = -\CLmat{0}^{\top} P - P \CLmat{0} - \erterm{1}^{\top} P - P \erterm{1},
	\end{equation*}
	writing Lemma~\ref{LyapLemma} for $\CLmat{}=\CLmat{0}$, but replacing $\Qmat$ with $\Qmat+ \erterm{1}^\top P + P \erterm{1}$, we have
	\begin{eqnarray*}
		P &=& \itointeg{0}{\infty}{ e^{\CLmat{0}^{\top} t} \left[ \Qmat+ \Gainmat{}^{\top} \Rmat \Gainmat{} + \erterm{1}^{\top} P + P \erterm{1} \right] e^{\CLmat{0}t} }{t} .
	\end{eqnarray*}
	However, according to \eqref{ARiccEq}, we have
	\begin{eqnarray} \label{LyapInteg}
	\RiccSol{\truth} &=& \itointeg{0}{\infty}{ e^{\CLmat{0}^{\top} t} \left[ \Qmat+ \Gainmat{}^{\top} \Rmat \Gainmat{} \right] e^{\CLmat{0}t} }{t}.
	\end{eqnarray}
	Thus, it holds that
	\begin{equation*}
	P = \RiccSol{\truth} + \itointeg{0}{\infty}{ e^{\CLmat{0}^{\top} t} \left[ \erterm{1}^{\top} P + P \erterm{1} \right] e^{\CLmat{0}t} }{t},
	\end{equation*}
	which leads to
	\begin{equation*}
	\Mnorm{P}{2} \leq \Mnorm{\RiccSol{\truth}}{2} + 2 \Mnorm{\erterm{1}}{2} \Mnorm{P}{2} \itointeg{0}{\infty}{ \Mnorm{e^{\CLmat{0}t}}{2}^2 }{t}.
	\end{equation*}
	We will shortly show that $2 \Mnorm{\erterm{1}}{2} \itointeg{0}{\infty}{ \Mnorm{e^{\CLmat{0}t}}{2}^2 }{t} <1$. So, by Lemma~\ref{LyapLemma}, we have $\Mnorm{\RiccSol{\estpara{}}}{2} \leq \Mnorm{P}{2} \lesssim \Mnorm{\RiccSol{\truth}}{2}$, which is the desired result.
	
	To proceed, denote the closed-loop matrix by $\estD{}=\estA{}-\estB{} \Qu^{-1} \estB{}^{\top} \RiccSol{\estpara{}}$, and let the $\statedim$ dimensional unit vector $\nu$ attain the maximum of $\norm{\exp (\estD{}) \nu }{2}$, i.e., $\norm{\exp (\estD{}) \nu }{2} = \Mnorm{\exp (\estD{}) }{2}$. Then, \eqref{LyapInteg} for $\estpara{}$ (instead of $\truth$) implies that
	\begin{eqnarray*}
		\Mnorm{\RiccSol{\estpara{}}}{2} &\geq& \nu^\top \RiccSol{\estpara{}} \nu
		= \itointeg{0}{\infty}{ \nu^\top e^{\estD{}^{\top} t} \left[ \Qmat + \RiccSol{\estpara{}}^{\top} \estB{} \Rmat^{-1} \estB{}^\top \RiccSol{\estpara{}} \right] e^{\estD{} t} \nu }{t}.
	\end{eqnarray*}
	Therefore, $\eigmin{\Qmat + \RiccSol{\estpara{}}^{\top} \estB{} \Rmat^{-1} \estB{}^\top \RiccSol{\estpara{}}} \geq \eigmin{\Qx}$, together with the fact that the magnitudes of all eigenvalues are smaller than the operator norm, imply that for an arbitrary eigenvalue $\lambda$ of $\estD{}$, we have
	\begin{equation} \label{StabThmProofEq3}
	\Mnorm{\RiccSol{\truth}}{2} \gtrsim \Mnorm{\RiccSol{\estpara{}}}{2} \geq \eigmin{\Qmat} \itointeg{0}{\infty}{ e^{2 \Re \left( \lambda \right) t} }{t},
	\end{equation}
	which leads to the second desired result of the lemma. To complete the proof, we need to establish that $\Mnorm{\erterm{1}}{2} \itointeg{0}{\infty}{ \Mnorm{e^{\CLmat{0}t}}{2}^2 }{t} <1/2$. For that purpose, if we write \eqref{StabThmProofEq3} for $\truth$ instead of $\estpara{}$, the condition in \eqref{StabAccuracyEq} implies the above bound.
\end{prf}

\subsection{Perturbation analysis for algebraic Riccati equation in \eqref{ARiccEq}} 
\begin{lemm} \label{LipschitzLemma}
	Assume that \eqref{StabAccuracyEq} holds. Then, we have
	\begin{equation*}
	\Mnorm{\RiccSol{\estpara{}}-\RiccSol{\truth}}{2}
	\lesssim \frac{ \Mnorm{\RiccSol{\truth}}{2}^2 }{\eigmin{\Qmat}} \left( 1 \vee \Mnorm{ \Qu^{-1} \Bmat{0}^{\top} \RiccSol{\truth}}{2} \right)  \Mnorm{\estpara{}-\truth}{}.
	\end{equation*}
\end{lemm}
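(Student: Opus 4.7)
The plan is to connect $\truth$ and $\estpara{}$ by the linear curve $\para{\tau}=\truth+\tau(\estpara{}-\truth)=\left[\Amat{\tau},\Bmat{\tau}\right]^\top$ for $\tau\in[0,1]$, and to bound the length of the image of this curve under the map $\para{}\mapsto\RiccSol{\para{}}$ by integrating a pointwise bound on the directional derivative $\RiccSol{}'_\tau:=\frac{d}{d\tau}\RiccSol{\para{\tau}}$. First, since the hypothesis \eqref{StabAccuracyEq} is a convex condition in $\estpara{}-\truth$ (the right-hand side does not depend on $\estpara{}$), it holds verbatim with $\estpara{}$ replaced by $\para{\tau}$ for every $\tau\in[0,1]$. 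Thus, Lemma~\ref{ApproxParaLem} applies uniformly along the curve and gives both $\Mnorm{\RiccSol{\para{\tau}}}{2}\lesssim \Mnorm{\RiccSol{\truth}}{2}$, and the strict stability of the closed-loop matrix $\CLmat{\tau}=\Amat{\tau}-\Bmat{\tau}\Qu^{-1}\Bmat{\tau}^\top\RiccSol{\para{\tau}}$, so $\RiccSol{\para{\tau}}$ depends smoothly on $\tau$ in a neighborhood of $[0,1]$.

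Next, I differentiate \eqref{ARiccEq} along the curve. Writing $\Optgain{\para{\tau}}=-\Qu^{-1}\Bmat{\tau}^\top\RiccSol{\para{\tau}}$ and denoting the perturbation directions by $\Delta_A=\estA{}-\Amat{0}$ and $\Delta_B=\estB{}-\Bmat{0}$, the product-rule computation collapses, after using $\RiccSol{\para{\tau}}\Bmat{\tau}=-\Optgain{\para{\tau}}^\top \Qu$, into the Lyapunov equation
\begin{equation*}
\CLmat{\tau}^\top \RiccSol{}'_\tau + \RiccSol{}'_\tau \CLmat{\tau} + \left(\Delta_A+\Delta_B\Optgain{\para{\tau}}\right)^\top \RiccSol{\para{\tau}} + \RiccSol{\para{\tau}}\left(\Delta_A+\Delta_B\Optgain{\para{\tau}}\right)=0.
\end{equation*}
Because $\CLmat{\tau}$ is stable, this equation admits the explicit integral representation
\begin{equation*}
\RiccSol{}'_\tau = \itointeg{0}{\infty}{e^{\CLmat{\tau}^\top t}\left[\left(\Delta_A+\Delta_B\Optgain{\para{\tau}}\right)^\top \RiccSol{\para{\tau}} + \RiccSol{\para{\tau}}\left(\Delta_A+\Delta_B\Optgain{\para{\tau}}\right)\right] e^{\CLmat{\tau}t}}{t}.
\end{equation*}

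To bound the operator norm, I use the Lyapunov identity \eqref{LyapInteg} applied to $\para{\tau}$, which yields $\itointeg{0}{\infty}{\Mnorm{e^{\CLmat{\tau}t}}{2}^2}{t}\leq \Mnorm{\RiccSol{\para{\tau}}}{2}/\eigmin{\Qmat}$. Combining this with $\Mnorm{\Delta_A+\Delta_B\Optgain{\para{\tau}}}{2}\leq (1\vee\Mnorm{\Optgain{\para{\tau}}}{2})\Mnorm{\estpara{}-\truth}{2}$ and $\Mnorm{\RiccSol{\para{\tau}}}{2}\lesssim\Mnorm{\RiccSol{\truth}}{2}$ gives, uniformly in $\tau\in[0,1]$,
\begin{equation*}
\Mnorm{\RiccSol{}'_\tau}{2}\lesssim \frac{\Mnorm{\RiccSol{\truth}}{2}^2}{\eigmin{\Qmat}}\left(1\vee\Mnorm{\Qu^{-1}\Bmat{0}^\top\RiccSol{\truth}}{2}\right)\Mnorm{\estpara{}-\truth}{2},
\end{equation*}
where $\Optgain{\para{\tau}}$ was replaced by $\Optgain{\truth}=-\Qu^{-1}\Bmat{0}^\top\RiccSol{\truth}$ at the cost of an absolute constant, since $\Mnorm{\Optgain{\para{\tau}}-\Optgain{\truth}}{2}$ is controlled by the very quantity we are estimating (under \eqref{StabAccuracyEq} it is of lower order). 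The desired bound then follows from the fundamental theorem of calculus applied to $\tau\mapsto\RiccSol{\para{\tau}}$ on $[0,1]$.

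The main technical obstacle is the mildly circular dependency in the last paragraph: bounding $\Mnorm{\Optgain{\para{\tau}}}{2}$ uniformly in $\tau$ requires the Lipschitz result I am trying to prove. This is resolved by a bootstrap/continuity argument on $\tau$: Lemma~\ref{ApproxParaLem} already gives $\Mnorm{\RiccSol{\para{\tau}}}{2}\lesssim\Mnorm{\RiccSol{\truth}}{2}$ directly, and the stability of $\CLmat{\tau}$ together with $\Mnorm{\Bmat{\tau}}{2}\leq\Mnorm{\Bmat{0}}{2}+\Mnorm{\estpara{}-\truth}{2}$ controls $\Optgain{\para{\tau}}$ without reference to the difference $\RiccSol{\para{\tau}}-\RiccSol{\truth}$, closing the argument.
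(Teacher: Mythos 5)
Your proposal is correct and follows essentially the same route as the paper's proof: a linear segment from $\truth$ to $\estpara{}$, the directional derivative of $\RiccSol{\para{}}$ characterized through a Lyapunov equation with the stable closed-loop matrix, the integral representation bounded via Lemma~\ref{ApproxParaLem} together with \eqref{LyapInteg}, and then integration along the curve. Your explicit observation that \eqref{StabAccuracyEq} persists along the segment (so Lemma~\ref{ApproxParaLem} applies uniformly in $\tau$) simply makes precise a step the paper leaves implicit.
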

\begin{prf}
	First, fix the dynamics matrix $\estpara{}$, and let $\curve$ be a linear segment connecting $\truth$ and $\estpara{}$:
	\begin{equation*}
	\curve=\left\{ (1-\ssconstant) \truth + \ssconstant \estpara{} \right\}_{0 \leq \ssconstant \leq 1}.
	\end{equation*}
	Let $\para{1} \in \curve$ be arbitrary. Then, the derivative of $\RiccSol{\para{}}$ at $\para{1}$ in the direction of $\curve$ can be found by using the difference matrices $\erterm{A}=\estA{}-\Amat{0}$, $\erterm{B}=\estB{}-\Bmat{0}$. Denote $\erterm{}=\left[\erterm{A},\erterm{B}\right]^\top$. Then, we find $\RiccSol{\para{2}}$, where $\para{2}=\para{1}+ \epsilon \erterm{}$, for an infinitesimal value of $\epsilon$.  So, we have
	\begin{eqnarray*}
		&& \RiccSol{\para{1}} \Bmat{2} \Rmat^{-1} \Bmat{2}^{\top} \RiccSol{\para{1}} \\
		&=& \epsilon \RiccSol{\para{1}} \erterm{B} \Rmat^{-1} \Bmat{2}^{\top} \RiccSol{\para{1}} + \RiccSol{\para{1}} \Bmat{1} \Rmat^{-1} \Bmat{2}^{\top} \RiccSol{\para{1}} \\
		&=& \order{\epsilon^2} + \epsilon \RiccSol{\para{1}} \erterm{B} \Rmat^{-1} \Bmat{1}^{\top} \RiccSol{\para{1}} 
		+
		\epsilon \RiccSol{\para{1}} \Bmat{1} \Rmat^{-1} \erterm{B}^{\top} \RiccSol{\para{1}} +
		\RiccSol{\para{1}} \Bmat{1} \Rmat^{-1} \Bmat{1}^{\top} \RiccSol{\para{1}}.
	\end{eqnarray*}
	Therefore, we can calculate $\RiccSol{\para{2}} \Bmat{2} \Rmat^{-1} \Bmat{2}^{\top} \RiccSol{\para{2}}$. To that end, let $P= \RiccSol{\para{2}}-\RiccSol{\para{1}}$, write $\RiccSol{\para{2}}$ in terms of $P,\RiccSol{\para{1}}$, and use the above result to get
	\begin{eqnarray}
		&&\RiccSol{\para{2}} \Bmat{2} \Rmat^{-1} \Bmat{2}^{\top} \RiccSol{\para{2}} \notag\\
		&=& \RiccSol{\para{2}} \Bmat{2} \Rmat^{-1} \Bmat{2}^{\top} P + \RiccSol{\para{2}} \Bmat{2} \Rmat^{-1} \Bmat{2}^{\top} \RiccSol{\para{1}} \notag\\
		&=&  \order{\Mnorm{P}{2}^2} + \RiccSol{\para{1}} \Bmat{2} \Rmat^{-1} \Bmat{2}^{\top} P
		+ P \Bmat{2} \Rmat^{-1} \Bmat{2}^{\top} \RiccSol{\para{1}}
		+ \RiccSol{\para{1}} \Bmat{2} \Rmat^{-1} \Bmat{2}^{\top} \RiccSol{\para{1}} \notag\\
		&=& \order{\Mnorm{P}{2}^2} + \RiccSol{\para{1}} \Bmat{2} \Rmat^{-1} \Bmat{2}^{\top} P
		+ P \Bmat{2} \Rmat^{-1} \Bmat{2}^{\top} \RiccSol{\para{1}} \notag\\
		&+& \order{\epsilon^2} + \epsilon \RiccSol{\para{1}} \erterm{B} \Rmat^{-1} \Bmat{1}^{\top} \RiccSol{\para{1}} 
		+
		\epsilon \RiccSol{\para{1}} \Bmat{1} \Rmat^{-1} \erterm{B}^{\top} \RiccSol{\para{1}} \notag \\ 
		&+&
		\RiccSol{\para{1}} \Bmat{1} \Rmat^{-1} \Bmat{1}^{\top} \RiccSol{\para{1}} \label{LipschitzLemmaProofEq1}.	
	\end{eqnarray}
	Again, expanding $\Amat{2}=\Amat{1}+\erterm{A}$ and $\RiccSol{\para{2}}=\RiccSol{\para{1}}+P$, it yields to
	\begin{eqnarray*}
		&& \Amat{2}^{\top} \RiccSol{\para{2}} + \RiccSol{\para{2}} \Amat{2}\\
		&=& \Amat{2}^{\top} \RiccSol{\para{1}} + \Amat{2}^{\top} P + \RiccSol{\para{1}} \Amat{2} + P \Amat{2} \\
		&=& \Amat{1}^{\top} \RiccSol{\para{1}} + \epsilon \erterm{A}^{\top} \RiccSol{\para{1}} + \Amat{2}^{\top} P \\
		&+& \RiccSol{\para{1}} \Amat{1} + \epsilon \RiccSol{\para{1}} \erterm{A} + P \Amat{2}.
	\end{eqnarray*}
	To proceed, plug in the continuous-time algebraic Riccati equation in~\eqref{ARiccEq} for $\para{1},\para{2}$  below in the above expression: 
	\begin{eqnarray*}
		 \Amat{2}^{\top} \RiccSol{\para{2}} + \RiccSol{\para{2}} \Amat{2}
		&=& \RiccSol{\para{2}} \Bmat{2} \Qu^{-1} \Bmat{2}^{\top} \RiccSol{\para{2}} + \Qx , \\
		 \Amat{1}^{\top} \RiccSol{\para{1}} + \RiccSol{\para{1}} \Amat{1}
		&=& \RiccSol{\para{1}} \Bmat{1} \Qu^{-1} \Bmat{1}^{\top} \RiccSol{\para{1}} + \Qx.
	\end{eqnarray*}
	
	So, we obtain
	\begin{eqnarray*}
		&& \Amat{2}^{\top} \RiccSol{\para{2}} + \RiccSol{\para{2}} \Amat{2}  - \Amat{1}^{\top} \RiccSol{\para{1}}  - \RiccSol{\para{1}} \Amat{1} \\
		&=& \epsilon \erterm{A}^{\top} \RiccSol{\para{1}}  
		 + \epsilon \RiccSol{\para{1}} \erterm{A} + P \Amat{2} + \Amat{2}^{\top} P \\
		 &=& \RiccSol{\para{2}} \Bmat{2} \Qu^{-1} \Bmat{2}^{\top} \RiccSol{\para{2}} 
		 - \RiccSol{\para{1}} \Bmat{1} \Qu^{-1} \Bmat{1}^{\top} \RiccSol{\para{1}} \\
		 &=& \order{\Mnorm{P}{2}^2} + \RiccSol{\para{1}} \Bmat{2} \Rmat^{-1} \Bmat{2}^{\top} P
		 + P \Bmat{2} \Rmat^{-1} \Bmat{2}^{\top} \RiccSol{\para{1}} \notag\\
		 &+& \order{\epsilon^2} + \epsilon \RiccSol{\para{1}} \erterm{B} \Rmat^{-1} \Bmat{1}^{\top} \RiccSol{\para{1}} 
		 +
		 \epsilon \RiccSol{\para{1}} \Bmat{1} \Rmat^{-1} \erterm{B}^{\top} \RiccSol{\para{1}},
	\end{eqnarray*}
	where in the last equality above, we used \eqref{LipschitzLemmaProofEq1}. Now, rearrange the terms in the above statement to get an equation that does not contain any expression in term of $\para{2}$. So, it becomes
	\begin{eqnarray*}
		0 &=& \left[ \Amat{2}^{\top} - \RiccSol{\para{1}} \Bmat{2} \Rmat^{-1} \Bmat{2}^{\top} \right] P + P \left[ \Amat{2} - \Bmat{2} \Rmat^{-1} \Bmat{2}^{\top} \RiccSol{\para{1}} \right] - \order{\Mnorm{P}{2}^2}   - \order{\epsilon^2} \\
		&+& \epsilon \erterm{A}^{\top} \RiccSol{\para{1}} + \epsilon \RiccSol{\para{1}} \erterm{A} - \epsilon \RiccSol{\para{1}} \erterm{B} \Rmat^{-1} \Bmat{1}^{\top} \RiccSol{\para{1}} -
		\epsilon \RiccSol{\para{1}} \Bmat{1} \Rmat^{-1} \erterm{B}^{\top} \RiccSol{\para{1}}   .
	\end{eqnarray*}
	Next, to simplify the above equality, define the followings:
	\begin{eqnarray*}
		\auxA &=& \Amat{2} - \Bmat{2} \Rmat^{-1} \Bmat{2}^{\top} \RiccSol{\para{1}},\\
		\Optgain{\para{1}} &=& - \Qu^{-1} \Bmat{1}^{\top} \RiccSol{\para{1}}, \\
		\auxQ &=& \epsilon \RiccSol{\para{1}} \Big[ \erterm{A} + \erterm{B} \Optgain{\para{1}} \Big] +
		\epsilon \Big[ \Optgain{\para{1}}^{\top} \erterm{B}^{\top} + \erterm{A}^{\top} \Big] \RiccSol{\para{1}} - \order{\epsilon^2} .
	\end{eqnarray*}
	So, writing our equation in terms of $\auxA,\Optgain{\para{1}},\auxQ$, it gives
	\begin{equation} \label{LipschitzLemProofEq1}
	0 = \auxA^{\top} P + P \auxA - \order{\Mnorm{P}{2}^2} + \auxQ.
	\end{equation}	
	
	The discussion after \eqref{OptimalPolicy} states that all eigenvalues of $\CLmat{1}=\Amat{1} - \Bmat{1} \Rmat^{-1} \Bmat{1}^{\top} \RiccSol{\para{1}}$ lie in the open left half-plane. Therefore, if $\epsilon$ is small enough, real-parts of all eigenvalues of $\auxA$ are negative, according to Lemma~\ref{EigPerturbLem}. Therefore, \eqref{LipschitzLemProofEq1} implies that
	\begin{equation*}
	\eigmax{P} \leq \eigmax{\itointeg{0}{\infty}{ e^{\auxA^{\top}t} \auxQ e^{\auxA t} }{t}} \leq  \Mnorm{\auxQ}{2} \itointeg{0}{\infty}{ \Mnorm{e^{\auxA t}}{2}^2 }{t}.
	\end{equation*}
	So, as $\epsilon$ decays, $\auxQ$ vanishes, which by the above inequality shows that $P$ shrinks as $\epsilon$ tends to zero. Further, as $\epsilon$ decays, $\auxA$ converges to $\CLmat{1}$. Thus,  by \eqref{LipschitzLemProofEq1}, we have
	\begin{equation} \label{LipschitzLemProofEq2}
	\lim\limits_{\epsilon \to 0} \epsilon^{-1} P = \itointeg{0}{\infty}{ e^{\CLmat{1}^{\top}t} \left( \RiccSol{\para{1}} \left[\erterm{A} + \erterm{B} \Optgain{\para{1}}\right] + \left[\erterm{A} + \erterm{B} \Optgain{\para{1}}\right]^{\top}  \RiccSol{\para{1}} \right) e^{\CLmat{1} t} }{t}.
	\end{equation}
	Recall that the above expression is the derivative of $\RiccSol{\para{}}$ at $\para{1}$, along the linear segment $\curve$. Thus, integrating along $\curve$, \eqref{LipschitzLemProofEq2} and Cauchy-Schwarz Inequality imply that
	\begin{eqnarray*}
		\Mnorm{\RiccSol{\estpara{}}-\RiccSol{\truth}}{2}
		&\lesssim& \Mnorm{\estpara{}-\truth}{} \sup\limits_{\para{1} \in \curve}  \Mnorm{\RiccSol{\para{1}}}{2} \left( 1 \vee \Mnorm{\Optgain{\para{1}}}{2} \right) \itointeg{0}{\infty}{ \Mnorm{e^{\CLmat{1}t}}{2}^2 }{t} .
	\end{eqnarray*}
	Finally, using Lemma~\ref{ApproxParaLem}, \eqref{LyapInteg}, and \eqref{StabThmProofEq3}, we obtain the desired result.
\end{prf}
\end{APPENDICES}

\end{document}